%% file: root.tex
\documentclass[letterpaper, 10 pt, conference]{ieeeconf}  

\IEEEoverridecommandlockouts                              
\usepackage{epsfig} 
\usepackage{mathptmx} 
\usepackage{times} 
\usepackage{amsmath} 
\usepackage{amssymb}  
\usepackage{algorithm}
\usepackage{algpseudocode}
\usepackage{bm,bbm}
\usepackage{xcolor}
\usepackage{cite}
\usepackage[colorlinks=true, citecolor=blue]{hyperref}
\usepackage{cleveref}
\usepackage{kotex}
\usepackage{dsfont}
\usepackage{subcaption}

\title{\LARGE \bf
Near-Optimal Primal-Dual Algorithm for Learning Linear Mixture CMDPs with Adversarial Rewards
}

\author{
\parbox{3in}{\centering
Kihyun Yu, Seoungbin Bae\\
Department of Industrial and Systems Engineering, KAIST\\
{\tt\small \{khyu99, sbbae31\}@kaist.ac.kr}
}
\hspace*{0.5in}
\parbox{3in}{\centering
Dabeen Lee*\\
Department of Mathematical Sciences, \\ Seoul National University\\
{\tt\small dabeenl@snu.ac.kr}
}
\thanks{*Additional affiliations: Research Institute of Mathematics, Seoul National University; Interdisciplinary Program in Artificial Intelligence, Seoul National University; Korea Institute for Advanced Study.}
}

\input{math}

\begin{document}

\maketitle
\thispagestyle{empty}
\pagestyle{empty}



\input{abs-conclusion}

\section{ACKNOWLEDGMENTS}

This work was supported by the National Research Foundation of Korea (NRF) grant (No. RS-2024-00350703) and the Institute of Information \& communications Technology Planning \& evaluation (IITP) grants (No. IITP-2026-RS-2024-00437268) and (No. RS-2021-II211343, Artificial Intelligence Graduate School Program (Seoul National University)) funded by the Korea government (MSIT).

\bibliographystyle{IEEEtran}
\bibliography{ref}

\onecolumn
\newpage
\appendices
\input{appendix}







\end{document}

%% file: math.tex
\newtheorem{theorem}{Theorem}
\newtheorem{lemma}{Lemma}

\newtheorem{proposition}{Proposition}

\newtheorem{definition}{Definition}
\newtheorem{assumption}{Assumption}
\newtheorem{remark}{Remark}

\newcommand{\calA}{{\mathcal{A}}}

\newcommand{\calC}{{\mathcal{C}}}

\newcommand{\calE}{{\mathcal{E}}}
\newcommand{\calF}{{\mathcal{F}}}
\newcommand{\calG}{{\mathcal{G}}}

\newcommand{\calM}{{\mathcal{M}}}

\newcommand{\calO}{{\mathcal{O}}}

\newcommand{\calS}{{\mathcal{S}}}

\newcommand{\bbE}{\mathbb{E}}

\newcommand{\bbR}{\mathbb{R}}

\newcommand{\bbV}{\mathbb{V}}

\newcommand{\bfb}{\mathbf{b}}

\newcommand{\bfx}{\mathbf{x}}

\newcommand\Id{\text{I}}

\newcommand{\piunif}{\pi_{\mathrm{unif}}}

\newcommand{\interior}{\operatorname{int}}
\newcommand{\bigO}{\widetilde{{O}}}
\newcommand{\AlgName}{PD-POWERS }

\DeclareMathOperator*{\argmax}{arg\,max}
\DeclareMathOperator*{\argmin}{arg\,min}

\newcommand\Regret{\mathrm{Reg}(K)}
\newcommand\Violation{\mathrm{Vio}(K)}

%% file: abs-conclusion.tex

\begin{abstract}
We study safe reinforcement learning in finite-horizon linear mixture constrained Markov decision processes (CMDPs) with adversarial rewards under full-information feedback and an unknown transition kernel. We propose a primal-dual policy optimization algorithm that achieves regret and constraint violation bounds of $\bigO(\sqrt{d^2 H^3 K})$ under mild conditions, where $d$ is the feature dimension, $H$ is the horizon, and $K$ is the number of episodes. To the best of our knowledge, this is the first provably efficient algorithm for linear mixture CMDPs with adversarial rewards. In particular, our regret bound is near-optimal, matching the known minimax lower bound up to logarithmic factors. 
The key idea is to introduce a regularized dual update that enables a drift-based analysis. This step is essential, as strong duality-based analysis cannot be directly applied when reward functions change across episodes. 
In addition, we extend weighted ridge regression–based parameter estimation to the constrained setting, allowing us to construct tighter confidence intervals that are crucial for deriving the near-optimal regret bound.
\end{abstract}

\section{Introduction}
In this paper, we study online linear mixture constrained Markov decision processes (CMDPs) with adversarial rewards. A CMDP can be described by the following standard formulation:\footnote{We note that \eqref{eq:CMDP simple} represents the standard CMDP formulation; the formal definition of linear mixture CMDPs with adversarial rewards will be introduced in \Cref{sec:problem formulation}.}
\begin{equation}\label{eq:CMDP simple}
    \max_{\pi} \ V_{1}^{r,\pi}(s_1)
    \quad \textnormal{subject to} \quad
    V_1^{g,\pi}(s_1) \geq b,
\end{equation}
where $\pi$ denotes a policy, $V_1^{r,\pi}(s_1) = \bbE_\pi[\sum_{h=1}^H r_h(s_h,a_h)\mid s_1]$, $V_1^{g,\pi}(s_1) = \bbE_\pi[\sum_{h=1}^H g_h(s_h,a_h)\mid s_1]$, and $b$ denotes the constraint threshold.

Recently, a large number of algorithms for learning CMDPs have been proposed~\cite{efroni2020exploration}. However, many existing approaches are limited to settings with tabular state spaces~\cite{liu2021learning}, and their theoretical guarantees do not readily extend to large state spaces. To address scalability, one common direction in RL is to consider linear function approximation, where transition kernels admit a linear representation. In the CMDP literature, \cite{ding2021provably,ghosh2022provably, ghosh2024towards, kitamura2025provably} proposed algorithms in this setting; however, these works considered fixed rewards and constraints and thus could not capture non-stationary environments. To address this limitation, more recently, \cite{ding2023provably, yu2026primaldual} proposed algorithms for non-stationary environments; however, \cite{ding2023provably} required the variation budget—a quantity that characterizes the change in a CMDP—to be known and bounded, and \cite{yu2026primaldual} only attained suboptimal regret and violation bounds. In summary, the existing algorithms for safe RL with linear function approximation either suffer from suboptimal regret guarantees or rely on restrictive assumptions, such as stationarity or a bounded variation budget.

Motivated by these limitations, we design an algorithm for linear mixture CMDPs with adversarial rewards that achieves near-optimal regret bounds. In this setting, the linear mixture structure enables function approximation, while adversarial rewards capture non-stationary environments. 

This goal is particularly challenging. Specifically, when 
rewards change arbitrarily, strong duality-based analysis---a key tool in previous works~\cite{ding2021provably, ghosh2022provably}---cannot be directly applied. 
Moreover, achieving the optimal sample complexity requires constructing tighter confidence intervals. Despite these challenges, our main contributions are as follows.
\begin{itemize}
    \item We propose a provably efficient primal-dual policy optimization algorithm for finite-horizon linear mixture CMDPs with adversarial reward functions under full-information feedback, a fixed constraint, and an unknown transition kernel.
    
    \item Our algorithm achieves regret and constraint violation bounds of $\bigO(\sqrt{d^2 H^3 K})$. Here, $d$ is the feature dimension, $H$ is the horizon, and $K$ is the number of episodes. We emphasize that our regret bound is near-optimal in the sense that it matches the known minimax regret lower bound for unconstrained linear mixture MDPs up to logarithmic factors~\cite{he2022near}.
    
    \item The key idea of our algorithm is to combine primal-dual policy optimization with (i) a regularized dual update that enables a 
    drift-based analysis, which does not rely on strong duality, and (ii) weighted ridge regression–based parameter estimation to obtain tighter confidence intervals, which are crucial for attaining near-optimal regret bounds.
\end{itemize}

In \Cref{tab:main}, we compare algorithms for safe RL with linear function approximation. \cite{ghosh2023achieving, liu2025sample, wei2026nearoptimal} consider infinite-horizon linear CMDPs, and \cite{tian2024confident} studies $q^{\pi}$-realizable CMDPs. Although these works fall under safe RL with linear function approximation, their settings differ significantly from ours.
\begin{table}[H]
    \caption{Comparison of safe RL with linear function approximation.}
    \centering
    \begin{tabular}{|c|ccc|}
        \hline
        Algorithm & MDP Setting & Reward & Reg. \& Vio. \\
        \hline
        \cite{ding2021provably} & Linear Mixture  & Fixed & $\bigO(\sqrt{d^2 H^4 K})$ \\
        \cite{ghosh2022provably} & Linear & Fixed & $\bigO(\sqrt{d^3 H^4 K})$ \\
        \cite{yu2026primaldual} & Linear  & Adversarial & $\bigO(\textnormal{poly}(d,H)K^{3/4})$ \\
        \hline
        Ours & Linear Mixture & Adversarial & $\bigO(\sqrt{d^2 H^3 K})$ \\
        \hline
    \end{tabular}
    \label{tab:main}
\end{table}

\subsection{Additional Related Works}
We provide additional prior works on learning CMDPs. \cite{wei2022triple,bura2022dope,yu2025improved,muller2024truly,zhu2025optimistic,stradi2025optimal,liu2025nearoptimal} propose online algorithms for tabular CMDPs. \cite{qiu2020upper,stradi2024online,stradi2025learning,stradi2025policy} are particularly relevant to our work as they consider adversarial rewards; however, their results are restricted to the tabular setting. Beyond the tabular case, several works study CMDPs with function approximation. In particular, \cite{amani2021safe,shi2023near,wei2024safe,roknilamouki2025provably} focus on instantaneous constraints, where the constraint must be satisfied at each step rather than in expectation over the entire trajectory.

\section{Problem Formulation}\label{sec:problem formulation}
\noindent \textbf{Notation.} 
For a positive integer $n$, let $[n] = \{1,\ldots,n\}$. For $x\in\bbR^d$, let $\|x\|_A = \sqrt{x^\top A x}$ for some positive definite matrix $A \in \bbR^{d \times d}$. For $a,b \in \bbR$ such that $a \leq b$, let $[a,b] = \{x \in \bbR: a\leq x\leq b\}$, let $[\cdot]_+ = \max\{\cdot,0\}$, and let $[\cdot]_{[a,b]} = \max\{\min\{\cdot,b\},a\}$. Let $I \in \bbR^{d\times d}$ denote the identity matrix, let $\bm{0}\in \bbR^d$ denote the all-0 vector, and let $\langle\cdot,\cdot\rangle$ denote the inner product. Let $\Delta( A )$ denote the probability simplex over set $ A $.

\noindent \textbf{Finite-Horizon CMDP.} 
We consider a finite-horizon CMDP with adversarial rewards and a fixed constraint function. A finite-horizon CMDP is defined by $\calM = ( S ,  A , H, P, r, g, b, s_1)$, where $ S $ is the finite state space, $ A $ is the finite action space, and $H$ is the horizon. The collection of unknown transition kernels is denoted by $P = \{P_h\}_{h\in [H]}$, where $P_h(s'|s,a)$ is the probability of transitioning from state $s$ to state $s'$ by taking action $a$ at step $h$. The collections of reward and constraint functions are denoted by $r = \{r_h^k\}_{h\in [H], k\in [K]}$ and $g = \{g_h\}_{h\in [H]}$, where $r_h^k, g_h :  S  \times  A  \to [0,1]$ for each $h,k$. The constraint threshold is denoted by $b \in [0,H]$, and $s_1$ is the fixed initial state.

We assume adversarial deterministic rewards with full-information feedback, while the deterministic constraint function is fixed and known.\footnote{As in \cite{ding2021provably, ghosh2022provably}, we assume the constraint function is fixed and deterministic for simplicity. Our results can be extended to an unknown stochastic constraint function with linear structure under bandit feedback.}. In particular, at the beginning of each episode $k$, an adversary chooses an arbitrary reward function $r^k = \{r_h^k\}_{h\in [H]}$. The full information of $r^k$ is revealed at the end of episode $k$. On the other hand, the constraint function $g = \{g_h\}_{h\in [H]}$ is assumed to be fixed across all $k \in [K]$ and known. 

The agent interacts with the environment as follows. At the beginning of episode $k$, the agent selects a policy $\{\pi_h^k\}_{h\in [H]}$, where $\pi_h^k(a|s)$ denotes the probability of taking action $a$ at state $s$ and step $h$. During episode $k$, for each step $h$, the agent observes the current state $s_h^k$ and selects an action $a_h^k \sim \pi_h^k(\cdot|s_h^k)$. The next state is then sampled as $s_{h+1}^k \sim P_h(\cdot|s_h^k,a_h^k)$. At the end of the episode, $r^k$ is revealed to the agent.

Given $P$ and $\pi$, we define the value function $V_{h}^{\ell,\pi}(s) = \mathbb{E}_{\pi}\![\sum_{j=h}^H \ell_j(s_j,a_j) |s_h=s],$
where $\ell = \{\ell_h\}_{h\in [H]}$ is any function such that $\ell_h: S \times A  \to [0,1]$, and $\mathbb{E}_{\pi}[\cdot]$ denotes the expectation over the trajectory $(s_h,a_h,\ldots,s_H,a_H)$ induced by $P$ and $\pi$. Likewise, we define the $Q$-function as $Q_{h}^{\ell,\pi}(s,a) = \mathbb{E}_{\pi}\![\sum_{j=h}^H \ell_j(s_j,a_j) | s_h=s, a_h=a].$

The goal of the agent is to learn an optimal policy $\pi^*\in \Pi$ for the following optimization problem, where for the set of policies $\Pi = \{\{\pi_h\}_{h\in [H]} : \pi_h: S  \to \Delta( A )\}$:
\begin{equation}\label{eq:CMDP}
    \max_{\pi \in \Pi} \ \sum_{k=1}^K V_{1}^{r^k,\pi}(s_1)
    \quad \textnormal{subject to} \quad
    V_1^{g,\pi}(s_1) \geq b.
\end{equation} 

The performance metrics are defined as follows. Given a sequence of policies $\{\pi^k\}_{k \in [K]}$, we define the regret as $\Regret = \sum_{k=1}^K \big( V_1^{r^k,\pi^*}(s_1) - V_1^{r^k,\pi^k}(s_1) \big)$, and the constraint violation as $\Violation = \left[\sum_{k=1}^K \big( b - V_{1}^{g,\pi^k}(s_1) \big)\right]_+$.

We introduce additional notations as follows. For any $V: S \to \bbR$, $P_hV(s,a) = \sum_{s'}P_h(s'|s,a)V(s')$ and $\bbV_h V(s,a) = P_hV^2(s,a) - (P_hV(s,a))^2$. Moreover, we present the Slater assumption, as in \cite{yu2026primaldual}.

\begin{assumption}[Slater condition]\label{assum:slater}
    There exists a Slater policy $\bar\pi$ such that $V_{1}^{g,\bar\pi}(s_1) \geq b+\gamma$, where $\gamma > 0$ is the Slater constant. Note that $\bar\pi$ and $\gamma$ are unknown to the agent.
\end{assumption}

\noindent \textbf{Linear Mixture CMDP.} Finally, we introduce the definition of linear mixture CMDPs, adapted from \cite{jia2020model, ayoub2020model, zhou2021nearly}, which we assume throughout the paper. 
\begin{definition}[Linear Mixture CMDP]\label{def:linear mixture}
    We say that $\calM$ is an inhomogeneous, episodic $B$-bounded linear mixture CMDP if, for each $h\in [H]$, there exist a known feature mapping $\phi: S \times A \times S  \to \bbR^d$ and an unknown parameter $\theta_h^* \in \bbR^d$ such that $P_h(s'|s,a) = \langle \phi(s'|s,a), \theta_h^* \rangle$ for any $(s,a,s') \in  S  \times  A  \times  S $. Moreover, we assume that $\|\theta_h^*\|_2 \leq B$ and $\|\phi_V(s,a)\|_2\leq 1$ for all $(s,a,h) \in  S  \times  A  \times[H]$ and any $V: S  \to [0,1]$, where $\phi_V(s,a) = \sum_{s'\in S } \phi(s'|s,a)V(s')$.
\end{definition}

\section{Proposed Algorithm}\label{sec:algorithm}
In this section, we present our algorithm, called \underline{P}rimal-\underline{D}ual \underline{P}olicy \underline{O}ptimization \underline{W}ith B\underline{ER}nstein bonu\underline{S} (PD-POWERS, \Cref{alg:main}), tailored for finite-horizon linear mixture CMDPs with adversarial rewards and a fixed constraint function. Intuitively, PD-POWERS can be viewed as a primal-dual variant of POWERS, proposed in \cite{he2022near} for the unconstrained setting.

\begin{algorithm}[t]
\caption{\AlgName}
\label{alg:main}
\textbf{Require: } regularization parameter $\lambda$; step sizes $\alpha,\eta$; mixing parameter $\theta$; the constraint function $\{g_h(\cdot,\cdot)\}_{h\in [H]}$\\
\textbf{Initialize: } $\forall (h,\ell)\in[H] \times \{r,g\}$, $\widehat\Sigma_{1,h}^\ell, \widetilde\Sigma_{1,h}^\ell \leftarrow \lambda\Id$; $\widehat b_{1,h}^\ell, \widetilde b_{1,h}^\ell \leftarrow \bm 0$; $\widehat \theta_{1,h}^\ell, \widetilde \theta_{1,h}^\ell \leftarrow \bm 0$; $V_{1,H+1}^r(\cdot), V_{1,H+1}^g(\cdot)$; $\pi_h^1 \leftarrow \piunif$; $Y_1 \leftarrow 0$;
\begin{algorithmic}[1]
    \For{$k=1,\ldots, K$}
        \If{$k>1$}
            \State Update $\{\pi_h^{k}\}_{h\in [H]}$ as in \eqref{eq:policy optimization}\label{line:policy optimization}
            \State Update $Y_{k}$ as in \eqref{eq:dual update}\label{line:dual update}
        \EndIf
        \For{$h=1,\ldots,H$}
            \State Take action $a_h^k \sim \pi_h^k(\cdot\mid s_h^k)$
            \State Receive $s_{h+1}^k \sim  P_h(\cdot \mid s_h^k, a_h^k)$
        \EndFor
        \State Receive $\{r_h^k(\cdot,\cdot)\}_{h\in [H]}$
        \For{$h=H,\ldots, 1$}\label{line:param start}
             \For{$\ell=r,g$}
                \State Compute $Q_{k,h}^\ell(\cdot,\cdot)$ as in \eqref{eq:Q}
                \State $V_{k,h}^\ell(\cdot)\leftarrow \sum_{a\in A } \pi_h^k(a\mid \cdot) Q_{k,h}^\ell(\cdot,a)$
                \State Set $\widehat\theta_{k+1,h}^\ell, \widehat\Sigma_{k+1,h}^\ell, \widehat b_{k+1,h}^\ell$   as in \eqref{eq:theta hat}
                \State Set $\widetilde \theta_{k+1,h}^\ell, \widetilde \Sigma_{k+1,h}^\ell, \widetilde b_{k+1,h}^\ell$  as in  \eqref{eq:theta tilde Sigma tilde b tilde}
                \State Set $\bar\bbV_hV_{k,h+1}^\ell$  as in  \eqref{eq:bar bbV}
                \State Set $E_{k,h}^\ell, \bar\sigma_{k,h}^\ell$  as in \eqref{eq:E}, \eqref{eq: bar sigma}
            \EndFor \label{line:param end}
        \EndFor
    \EndFor
\end{algorithmic}
\end{algorithm}

\subsection{Challenges in Algorithm Design}

To extend an algorithm for the unconstrained problem to the constrained setting, a standard approach is to construct a primal-dual variant. In this framework, the primal variable—maximizing a Lagrangian objective—and the dual variable—balancing reward maximization and constraint satisfaction—are updated alternately. 

The main challenge lies in the design of the dual update, as previously proposed dual updates do not directly apply to our setting with adversarial rewards. In the fixed-reward setup, \cite{ding2021provably, ghosh2022provably} adopt a dual update that naturally leads to a strong duality-based analysis. However, such an analysis becomes non-trivial when the rewards vary across episodes.

To address this issue, our key idea is to introduce a regularized dual update, inspired by \cite{yu2026primaldual}. This update enables a drift-based analysis that does not rely on strong duality. In particular, the regularization term induces a negative drift in the dual variable, which plays a crucial role in controlling its growth. This justifies the suitability of our dual update in the adversarial-reward setting. For more details, we refer the reader to \ref{subsec:description} and \ref{subsec:dual}.

\subsection{Description of PD-POWERS}\label{subsec:description}

We now describe PD-POWERS. We first introduce the policy optimization step, which combines entropy regularization with policy perturbation. We then present the regularized dual update and discuss its role in controlling the dual variable. Finally, we present the parameter estimation procedure based on weighted ridge regression.

\noindent\textbf{Policy Optimization} (Line \ref{line:policy optimization}).
Our policy optimization step proceeds in the following two steps:
\begin{align}\label{eq:policy optimization}
\begin{aligned}
    \widetilde \pi_h^{k-1}(\cdot | s) &\leftarrow (1-\theta)\pi_h^{k-1}(\cdot|s) + \theta \piunif(\cdot|s) \\
    \pi_h^{k}(\cdot | s) &\propto \widetilde\pi_h^{k-1}(\cdot|s)\exp(\alpha (Q_{{k-1},h}^r(s,\cdot) + Y_{k-1} Q_{{k-1},h}^g(s,\cdot)))
\end{aligned}
\end{align}
In the first step of \eqref{eq:policy optimization}, we define $\widetilde \pi_h^{k-1}$, which is a perturbed version of $\pi_h^{k-1}$. This step ensures that $\widetilde\pi_h^{k-1}(a|s) \geq \theta/|A|$ for all $a \in A$, and hence keeps $\widetilde\pi_h^{k-1}$ away from the boundary of the probability simplex. This is essential for controlling the growth of the dual variable. We refer the reader to \Cref{sec:analysis} for further technical motivation.

Moreover, a key difference from \cite{yu2026primaldual} is that we perturb the policy at every episode. In \cite{yu2026primaldual}, the perturbation is applied every $K^{3/4}$ episodes to handle the trade-off between the covering number\footnote{Note that \cite{yu2026primaldual} considers linear CMDPs, while our setting is linear mixture CMDPs. In their setting, the covering number of the value function class must be controlled to ensure uniform convergence over all possible value function estimates induced by their algorithm.} and the dual variable, which leads to suboptimal dependence on $K$ in the regret. In contrast, since our setting does not require such covering number arguments, it allows more frequent perturbations and yields optimal dependence on $K$.

In the second step of \eqref{eq:policy optimization}, we perform policy optimization following \cite{cai2020provably}. Equivalently, it can be rewritten as the following online mirror descent (OMD) step over the policy space: $\pi_h^{k}(\cdot|s) \in \argmax_{\pi\in\Pi} \langle \pi(\cdot|s), Q_{{k-1},h}^r(s,\cdot) + Y_{k-1} Q_{{k-1},h}^g(s,\cdot) \rangle - \frac{1}{\alpha}D(\pi(\cdot|s)||\widetilde\pi_h^{k-1}(\cdot|s)),$
where $D(\cdot||\cdot)$ denotes the KL divergence, and $Y_{k-1}$ is the dual variable that balances between reward maximization and constraint satisfaction. The KL regularizer encourages the updated policy to remain close to $\widetilde\pi_h^{k-1}$.

\noindent\textbf{Regularized Dual Update} (Line \ref{line:dual update}). 
Next, we present the dual update, inspired by \cite{yu2026primaldual}:
\begin{align}\label{eq:dual update}
\begin{aligned}
    Y_{k} &\leftarrow \big[(1-\alpha \eta H^3)Y_{k-1} \\
    &\quad+\eta(b-V_{{k-1},1}^g(s_1)- \alpha H^3 - 2\theta H^2)\big]_+.
\end{aligned}
\end{align}
The dual variable $Y_k$ increases when the estimated constraint value falls below the threshold $b$ and decreases otherwise. Thus, it adaptively balances reward maximization and constraint satisfaction.

For comparison, we recall the dual update used in \cite{ding2021provably, ghosh2022provably} for the fixed-reward setup:
\begin{align}\label{eq:dual previous}
Y_{k} \leftarrow \left[Y_{k-1} + \eta (b - V_{{k-1},1}^g(s_1))\right]_{[0, \frac{2}{\gamma}]}.
\end{align}
In this update, the dual variable is upper clipped by $2/\gamma$ to prevent it from diverging. The threshold $2/\gamma$ is chosen because it serves as a strict upper bound on the optimal dual variable under \Cref{assum:slater}. In contrast, our update~\eqref{eq:dual update} introduces additional regularizers $-\alpha \eta H^3 Y_{k-1}-\eta(\alpha H^3 + 2\theta H^2)$, instead of relying on upper clipping. Hence, our update does not require knowledge of $\gamma$.

\noindent\textbf{Parameter Estimation} (Lines \ref{line:param start} - \ref{line:param end}).
To obtain tighter confidence intervals, we extend the weighted ridge regression technique—previously used in unconstrained settings~\cite{zhou2021nearly}—to our constrained setting. This differs from \cite{ding2021provably}, which uses standard ridge regression and does not account for the conditional variance of the next-state value. In contrast, we employ weighted ridge regression–based parameter estimation, where the weights are chosen as estimates of this variance, enabling Bernstein-type concentration inequalities and yielding optimal dependence on $H$ in the regret bound.

Before describing the weighted ridge regression step, we introduce the basic structure of the $Q$-function estimates, denoted by $Q_{k,h}^r, Q_{k,h}^g$. These estimates are computed by backward induction from $h=H$ to $1$ as follows:
\begin{align}\label{eq:Q}
\begin{aligned}
    Q_{k,h}^r(\cdot,\cdot) &\leftarrow \bigg[ r_h^k(\cdot,\cdot)+ \langle\widehat\theta_{k,h}^r, \phi_{V_{k,h+1}^r}(\cdot,\cdot)\rangle \\
    &\qquad\qquad+ \widehat\beta_k \left\|\phi_{V_{k,h+1}^r}(\cdot,\cdot)\right\|_{(\widehat\Sigma_{k,h}^r)^{-1}}\bigg]_{[0,H-h+1]},\\
    Q_{k,h}^g(\cdot,\cdot) &\leftarrow \bigg[g_h(\cdot,\cdot)+ \langle\widehat\theta_{k,h}^g, \phi_{V_{k,h+1}^g}(\cdot,\cdot)\rangle \\
    &\qquad\qquad+ \widehat\beta_k \left\|\phi_{V_{k,h+1}^g}(\cdot,\cdot)\right\|_{(\widehat\Sigma_{k,h}^g)^{-1}}\bigg]_{[0,H-h+1]},
\end{aligned}
\end{align}
where the optimistic bonus parameter $\widehat \beta_k$ is given by
\begin{align*}
    \widehat\beta_k &= 8\sqrt{d\log(1+k/\lambda)\log(8Hk^2/\delta)} \\
    &\quad+ 4\sqrt{d}\log(8Hk^2/\delta)+\sqrt{\lambda}B 
\end{align*}

The intuition behind the $Q$-function estimates is as follows. Given a policy $\pi^k$, we expect that $Q_{k,h}^r(s,a) \approx Q_{h}^{r^k, \pi^k}(s,a) = r_h^k(s,a) + P_hV_{h+1}^{r^k,\pi^k}(s,a)$, where the equality follows from the Bellman equation. Assuming $\widehat \theta_{k,h}^r \approx \theta_h^*$, by the definition of linear mixture CMDPs (\Cref{def:linear mixture}), we have 
$\langle \widehat \theta_{k,h}^r, \phi_{V_{k,h+1}^r}(s,a) \rangle \approx \langle\theta_h^*, \phi_{V_{k,h+1}^r}(s,a)\rangle = \sum_{s'} \langle \theta_h^*, \phi(s'|s,a) \rangle V_{k,h+1}^r(s') = P_h V_{k,h+1}^r(s,a)$. 
Moreover, $\widehat\beta_k \|\phi_{V_{k,h+1}^r}(\cdot,\cdot)\|_{(\widehat \Sigma_{k,h}^r)^{-1}}$ serves as an optimistic bonus term to promote exploration. These arguments validate the design of $Q_{k,h}^r$ in \eqref{eq:Q}, and can be applied to $Q_{k,h}^g$ as well.

Now, we present how to obtain $\widehat \theta_{k,h}^r$ and $\widehat \theta_{k,h}^g$ through weighted ridge regression, which approximate $\theta_h^*$. For $\ell \in \{r,g\}$, we define
\begin{align} \label{eq:weighted ridge regression}
\begin{aligned}
\widehat\theta_{k,h}^\ell \leftarrow &\argmin_{\theta \in \bbR^d} \ \lambda\|\theta\|_2^2 \\
 &\quad+ \sum_{\tau=1}^{k-1} \frac{\big[\langle \phi_{V_{\tau,h+1}^\ell}(s_h^\tau,a_h^\tau), \theta\rangle - V_{\tau,h+1}^\ell(s_{h+1}^\tau)\big]^2}{(\bar\sigma_{\tau,h}^\ell)^2},    
\end{aligned}
\end{align}
where $(\bar\sigma_{\tau,h}^\ell)^2$ is an upper bound on $\bbV_h V_{\tau,h+1}^\ell(s_h^\tau,a_h^\tau)$. Equation \eqref{eq:weighted ridge regression} admits the following closed form: for $\ell \in \{r,g\}$,
\begin{align}\label{eq:theta hat}
\begin{aligned}
    &\widehat\theta_{k,h}^\ell \leftarrow \left(\widehat\Sigma_{k,h}^\ell\right)^{-1} \widehat b_{k,h}^\ell, \\
    &\widehat\Sigma_{k,h}^\ell \leftarrow  \lambda I +\sum_{\tau=1}^{k-1}\left(\bar\sigma_{\tau,h}^\ell\right)^{-2} \phi_{V_{\tau,h+1}^\ell}(s_h^\tau, a_h^\tau)\phi_{V_{\tau,h+1}^\ell}(s_h^\tau, a_h^\tau)^\top, \\
    &\widehat b_{k,h}^\ell \leftarrow \sum_{\tau=1}^{k-1}\left(\bar\sigma_{\tau,h}^\ell\right)^{-2}\phi_{V_{\tau,h+1}^\ell}(s_h^\tau, a_h^\tau) V_{\tau,h+1}^\ell(s_{h+1}^\tau).
\end{aligned}
\end{align}

We emphasize that the weighted ridge regression in \eqref{eq:weighted ridge regression} is useful for handling heteroscedastic noise induced by the time-inhomogeneous transition kernel. Since $P_h V_{k,h+1}^\ell(s,a)$ is not directly observable, we estimate $\widehat \theta_{k,h}^\ell$ by regressing the samples $V_{\tau,h+1}^\ell(s_{h+1}^\tau)$. This introduces a noise term $V_{\tau,h+1}^\ell(s_{h+1}^\tau) - P_h V_{\tau,h+1}^\ell(s_h^\tau,a_h^\tau)$, whose conditional variance $\bbV_h V_{\tau,h+1}^\ell(s_h^\tau,a_h^\tau)$ depends on $h$, resulting in heteroscedastic noise. In such cases, standard ridge regression is statistically inefficient due to the heteroscedastic structure~\cite{kirschner2018information}. In contrast, weighted ridge regression effectively accounts for this heteroscedasticity~\cite{zhou2021nearly}, enabling the use of Bernstein-type concentration inequalities and yielding tighter confidence intervals. 

One remaining question for the parameter estimation step is how to construct $(\bar\sigma_{k,h}^r)^2,(\bar\sigma_{k,h}^g)^2$, which are upper bounds on $\bbV_{h}V_{k,h+1}^r(s_h^k,a_h^k),\bbV_{h}V_{k,h+1}^g(s_h^k,a_h^k)$, respectively. Again, since $P_h$ is unknown, we construct the following variance estimate, motivated by the relation that $\bbV_h V_{k,h+1}^\ell(s,a) = \langle \phi_{(V_{k,h+1}^\ell)^2}(s,a), \theta_h^* \rangle -  \langle \phi_{V_{k,h+1}^\ell}(s,a), \theta_h^* \rangle^2$: for $\ell \in \{r,g\}$,
\begin{align}\label{eq:bar bbV}
\begin{aligned}
    &\bar\bbV_h V_{k,h+1}^\ell(\cdot,\cdot) = \left[\langle \phi_{(V_{k,h+1}^\ell)^2}, \widetilde\theta_{k,h}^\ell(\cdot,\cdot) \rangle\right]_{\left[0, H^2\right]}\\
    &\qquad\qquad\qquad- \left[\langle \phi_{V_{k,h+1}^\ell}, \widehat\theta_{k,h}^\ell(\cdot,\cdot) \rangle\right]_{\left[0, H\right]}^2.
\end{aligned}
\end{align}
where $\widehat \theta_{k,h}^\ell$ is defined in \eqref{eq:theta hat} and 
\begin{align}\label{eq:theta tilde Sigma tilde b tilde}
\begin{aligned}
    &\widetilde\theta_{k,h}^\ell \leftarrow \left(\widetilde\Sigma_{k,h}^\ell\right)^{-1} \widetilde b_{k,h}^\ell,\\
    &\widetilde\Sigma_{k,h}^\ell \leftarrow \lambda I +\sum_{\tau=1}^{k-1} \phi_{(V_{\tau,h+1}^\ell)^2}(s_h^\tau, a_h^\tau)\phi_{(V_{\tau,h+1}^\ell)^2}(s_h^\tau, a_h^\tau)^\top ,\\
    &\widetilde b_{k,h}^\ell \leftarrow \sum_{\tau=1}^{k-1}\phi_{(V_{\tau,h+1}^\ell)^2}(s_h^\tau, a_h^\tau)(V_{\tau,h+1}^\ell)^2(s_{h+1}^\tau).
\end{aligned}
\end{align}
Note that \eqref{eq:theta tilde Sigma tilde b tilde} is a consequence of standard ridge regression with respect to $(V_{k,h+1}^\ell)^2$, so that $\langle \phi_{(V_{k,h+1}^\ell)^2}, \widetilde\theta_{k,h}^\ell(s,a) \rangle \approx P_h(V_{k,h+1}^\ell)^2(s,a)$. Moreover, the following proposition characterizes the discrepancy between the variance estimated in \eqref{eq:bar bbV} and its true value $\bbV_h V_{k,h+1}^\ell(\cdot,\cdot)$.
\begin{proposition}\label{prop: E step 1}
    For any $(s,a,h,k,\ell) \in  S  \times  A  \times [H] \times [K] \times \{r,g\}$,
    {\small\begin{align*}
        &\left|\bar\bbV_h V_{k,h+1}^\ell(s,a) - \bbV_h V_{k,h+1}^\ell(s,a)\right| \\
        &\leq \min\left\{H^2, \left\|\phi_{(V_{k,h+1}^\ell)^2}(s,a)\right\|_{(\widetilde\Sigma_{k,h}^\ell)^{-1}} \left\|\widetilde\theta_{k,h}^\ell - \theta_h^*\right\|_{\widetilde\Sigma_{k,h}^\ell}\right\} \\
        &+ \min\left\{H^2, 2H\left\|\phi_{V_{k,h+1}^\ell}(s,a)\right\|_{\left(\widehat \Sigma_{k,h}^\ell\right)^{-1}}  \left\|\widehat\theta_{k,h}^\ell - \theta_h^*\right\|_{\widehat\Sigma_{k,h}^\ell}\right\}.
    \end{align*}}
\end{proposition}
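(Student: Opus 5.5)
Write $V = V_{k,h+1}^\ell$ and fix $(s,a)$. By the linear mixture structure (\Cref{def:linear mixture}), the true conditional variance is
\[
\bbV_h V(s,a) = \langle \phi_{V^2}(s,a), \theta_h^*\rangle - \langle \phi_V(s,a), \theta_h^*\rangle^2 ,
\]
with $\langle \phi_{V^2}, \theta_h^*\rangle = P_h V^2 \in [0,H^2]$ and $\langle \phi_V, \theta_h^*\rangle = P_h V \in [0,H]$, since $0 \le V \le H$. Both true quantities therefore already lie in the clipping intervals used in \eqref{eq:bar bbV}. I will split the difference into a second-moment error and a squared-mean error and bound each by the triangle inequality:
\[
\bigl|\bar\bbV_h V - \bbV_h V\bigr| \le \Bigl|\bigl[\langle \phi_{V^2}, \widetilde\theta_{k,h}^\ell\rangle\bigr]_{[0,H^2]} - P_hV^2\Bigr| + \Bigl|\bigl[\langle \phi_V, \widehat\theta_{k,h}^\ell\rangle\bigr]_{[0,H]}^2 - (P_hV)^2\Bigr| .
\]

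\textbf{Second-moment term.} Both quantities lie in $[0,H^2]$, so their difference is at most $H^2$. Clipping onto an interval that contains $P_hV^2$ is $1$-Lipschitz and fixes $P_hV^2$, so the term is at most
\[
|\langle \phi_{V^2}, \widetilde\theta_{k,h}^\ell - \theta_h^*\rangle| \le \|\phi_{V^2}\|_{(\widetilde\Sigma_{k,h}^\ell)^{-1}} \, \|\widetilde\theta_{k,h}^\ell - \theta_h^*\|_{\widetilde\Sigma_{k,h}^\ell}
\]
by Cauchy--Schwarz in the $\widetilde\Sigma_{k,h}^\ell$-geometry. Taking the minimum of the two bounds gives the first term of the statement.

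\textbf{Squared-mean term.} Let $x = [\langle \phi_V, \widehat\theta_{k,h}^\ell\rangle]_{[0,H]}$ and $y = P_hV$, both in $[0,H]$. Then $|x^2 - y^2| \le H^2$ trivially. Also $|x^2-y^2| = (x+y)|x-y| \le 2H|x-y|$. Non-expansiveness of clipping again gives $|x-y| \le |\langle \phi_V, \widehat\theta_{k,h}^\ell - \theta_h^*\rangle|$, and Cauchy--Schwarz with $\widehat\Sigma_{k,h}^\ell$ bounds this by $\|\phi_V\|_{(\widehat\Sigma_{k,h}^\ell)^{-1}} \|\widehat\theta_{k,h}^\ell - \theta_h^*\|_{\widehat\Sigma_{k,h}^\ell}$. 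Taking the minimum yields the second term.

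\textbf{Main obstacle.} The argument is elementary; the only point needing care is confirming that the true values lie inside the clipping ranges. This requires $V_{k,h+1}^\ell \in [0,H]$, which holds by the clipping in \eqref{eq:Q} (indeed $V_{k,h+1}^\ell \in [0,H-h]$), together with $P_h$ being a valid probability kernel. Once that is checked, both the Lipschitz bound and the trivial $H^2$ bound follow immediately.
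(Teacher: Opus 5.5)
Your proposal is correct and follows essentially the same argument as the paper's proof. Both use the triangle-inequality split into a second-moment error and a squared-mean error, then non-expansiveness of clipping (justified by $P_hV^2\in[0,H^2]$ and $P_hV\in[0,H]$), the factorization $(x+y)|x-y|\le 2H|x-y|$, Cauchy--Schwarz in the $\widetilde\Sigma_{k,h}^\ell$ and $\widehat\Sigma_{k,h}^\ell$ geometries, and finally the trivial $H^2$ caps to form the minima.
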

Combining this with Theorem 4.1 in \cite{zhou2021nearly} implies that $\|(\widetilde\theta_{k,h}^\ell - \theta_h^*)\|_{\widetilde\Sigma_{k,h}^\ell}\leq \widetilde\beta_k,  \ \|\widehat\theta_{k,h}^\ell - \theta_h^*\|_{\widehat\Sigma_{k,h}^\ell} \leq  \check\beta_k,$
where 
\begin{align*}
 \widetilde\beta_k &=8H^2 \sqrt{d \log(1 + kH^4/(d\lambda)) \log(8Hk^2/\delta)} \\
 &\quad+ 4H^2 \log(8Hk^2/\delta) + \sqrt{\lambda}B,\\
 \check\beta_k &= 8d\sqrt{\log(1 + k/\lambda) \log(8Hk^2/\delta)} \\
 &\quad+ 4\sqrt{d} \log(8Hk^2/\delta) + \sqrt{\lambda}B.
\end{align*}

Based on these observations, we have $|\bar \bbV_h V_{k,h+1}^\ell(s,a) - \bbV_{h}V_{k,h+1}^\ell(s,a)| \leq E_{k,h}^\ell$, where the offset term $E_{k,h}^\ell$ is defined as follows: for $\ell \in \{r,g\}$,
\begin{align}\label{eq:E}
\begin{aligned}
    E_{k,h}^\ell&= \min\left\{H^2, \widetilde\beta_k \left\|\phi_{(V_{k,h+1}^\ell)^2}(s,a)\right\|_{(\widetilde\Sigma_{k,h}^\ell)^{-1}}\right\} \\
    &\quad+ \min\left\{H^2, 2H \check\beta_k \left\|\phi_{V_{k,h+1}^\ell}(s,a)\right\|_{\left(\widehat \Sigma_{k,h}^\ell\right)^{-1}}\right\}.
\end{aligned}
\end{align}
Finally, we present $\bar\sigma_{k,h}^r$ and $\bar\sigma_{k,h}^g$, defined for $\ell \in \{r,g\}$ as 
\begin{align}\label{eq: bar sigma}
    (\bar\sigma_{k,h}^\ell)^2 = \max\left\{H^2/d, \bar\bbV_h V_{k,h+1}^\ell(s_h^k,a_h^k) + E_{k,h}^\ell\right\}.
\end{align}

The effectiveness of our estimates is supported by the following lemma.
\begin{lemma}\label{lem: beta hat}
    For any $(k,h,\ell) \in [H] \times [K] \times \{r,g\}$, with probability at least $1-3\delta$,
    \begin{align*}
        &\left\|\theta_h^* - \widehat\theta_{k,h}^\ell\right\|_{\widehat\Sigma_{k,h}^\ell} \leq \widehat \beta_k, \ |\bbV_{h} V_{k,h+1}^\ell(s,a) - \bar\bbV_{h} V_{k,h+1}^\ell(s,a)|\leq E_{k,h}^\ell.
    \end{align*}
\end{lemma}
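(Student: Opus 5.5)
This is the analogue of Lemma C.1/Theorem 4.1 of \cite{zhou2021nearly} and Lemma 6.5 of \cite{he2022near}, applied separately to each $\ell\in\{r,g\}$. The plan is to handle three confidence sets simultaneously on one good event:
\begin{itemize}
\item the Bernstein-type set for $\widehat\theta_{k,h}^\ell$, with radius $\widehat\beta_k$;
\item the Hoeffding-type set for $\widehat\theta_{k,h}^\ell$, with radius $\check\beta_k$;
\item the Hoeffding-type set for $\widetilde\theta_{k,h}^\ell$, with radius $\widetilde\beta_k$.
\end{itemize}
Each fails with probability at most $\delta$, which explains the $1-3\delta$. The $\log(8Hk^2/\delta)$ terms in the radii already absorb union bounds over $h$, over $k$ (via $\sum_k 1/k^2$), and over $\ell\in\{r,g\}$.

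First I fix $(h,\ell)$ and set up the self-normalized martingale framework. Use the filtration $\calG_{\tau,h}$ generated by everything up to $(s_h^\tau,a_h^\tau)$. Under the policy update~\eqref{eq:policy optimization} and the backward computation~\eqref{eq:Q}, the quantities $V_{\tau,h+1}^\ell$, $\phi_{V_{\tau,h+1}^\ell}(s_h^\tau,a_h^\tau)$ and $\bar\sigma_{\tau,h}^\ell$ are $\calG_{\tau,h}$-measurable. This needs checking: $V_{\tau,h+1}^\ell$ depends on $r^\tau$, which is revealed at the end of episode $\tau$. Since the adversary fixes $r^\tau$ at the start of the episode (oblivious within the episode), I would treat it as predetermined, enlarging the filtration if needed. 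The noise is $\eta_\tau = V_{\tau,h+1}^\ell(s_{h+1}^\tau)-\langle\phi_{V_{\tau,h+1}^\ell}(s_h^\tau,a_h^\tau),\theta_h^*\rangle$. By Definition~\ref{def:linear mixture} it has zero conditional mean and $|\eta_\tau|\le H$.

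Next come the two Hoeffding-type bounds. Normalizing by $\bar\sigma_{\tau,h}^\ell\ge H/\sqrt d$ gives $|\eta_\tau/\bar\sigma_{\tau,h}^\ell|\le\sqrt d$ and $\|\phi/\bar\sigma\|_2\le\sqrt d/H$. Applying Theorem 4.1 of \cite{zhou2021nearly} with only the crude variance bound $d$ gives $\|\widehat\theta_{k,h}^\ell-\theta_h^*\|_{\widehat\Sigma_{k,h}^\ell}\le\check\beta_k$. For $\widetilde\theta$ (unweighted) the noise is bounded by $H^2$, which yields $\widetilde\beta_k$. On this event, Proposition~\ref{prop: E step 1} immediately gives $|\bar\bbV_hV_{k,h+1}^\ell-\bbV_hV_{k,h+1}^\ell|\le E_{k,h}^\ell$ by the definition~\eqref{eq:E}, which is the second claim.

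Finally comes the Bernstein step, which I expect to be the main obstacle. On the event of the previous step, $(\bar\sigma_{\tau,h}^\ell)^2\ge\bar\bbV_hV_{\tau,h+1}^\ell+E_{\tau,h}^\ell\ge\bbV_hV_{\tau,h+1}^\ell(s_h^\tau,a_h^\tau)$. Hence the normalized noise has conditional variance at most $1$ and magnitude at most $\sqrt d$. The normalized feature satisfies $\|\phi_{V}/\bar\sigma\|_2\le \sqrt d$ after rescaling $V/H$. Applying Theorem 4.1 of \cite{zhou2021nearly} with variance parameter $1$ and range $\sqrt d$ gives exactly $\widehat\beta_k=8\sqrt{d\log(1+k/\lambda)\log(8Hk^2/\delta)}+4\sqrt d\log(8Hk^2/\delta)+\sqrt\lambda B$.

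The subtlety is that the variance bound holds only on the good event. I would handle this, as in \cite{zhou2021nearly}, by applying the Bernstein inequality to the truncated noise $\eta_\tau\mathds{1}\{\bbV_hV_{\tau,h+1}^\ell\le(\bar\sigma_{\tau,h}^\ell)^2\}$. That indicator is $\calG_{\tau,h}$-measurable, so the truncated noise is still a martingale difference with variance at most $1$. The truncation coincides with the original noise on the Hoeffding event. A union bound over the three failure events then gives probability at least $1-3\delta$, simultaneously for all $(k,h,\ell)$.
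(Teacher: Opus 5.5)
Your proposal is correct and follows the paper's proof essentially step for step. It makes two Hoeffding-type applications of Theorem 4.1 of \cite{zhou2021nearly} (radii $\widetilde\beta_k$ and $\check\beta_k$), uses Proposition~\ref{prop: E step 1} for the variance claim, applies Theorem 4.1 in Bernstein form to truncated noise with conditional variance at most $1$ (radius $\widehat\beta_k$), and takes a union bound over $h\in[H]$, $\ell\in\{r,g\}$ and the three events to get $1-3\delta$. The differences are minor: you truncate with $\mathds{1}\{\bbV_hV_{\tau,h+1}^\ell(s_h^\tau,a_h^\tau)\le(\bar\sigma_{\tau,h}^\ell)^2\}$ instead of the paper's $\mathds{1}\{\theta_h^*\in\widetilde\calC_{k,h}^\ell\cap\check\calC_{k,h}^\ell\}$, both measurable before $s_{h+1}^\tau$ is drawn and equal to $1$ on the Hoeffding event; and your bound $\|\phi_V/\bar\sigma\|_2\le\sqrt d/H$ in the Hoeffding step should read $\sqrt d$ (since $\|\phi_V\|_2\le H$ for $V\in[0,H]$), which in Theorem 4.1 gives exactly $\check\beta_k$.
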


\begin{remark}
    The computational complexity of PD-POWERS is comparable to that of its unconstrained counterpart (POWERS, \cite{he2022near}). Specifically, it is $O(\min\{d^3 H K^2 | A |,\ | S || A |K\} + d^3 H K)$ with $O(HK)$ calls to the integrating oracle $\calO$ for computing $\sum_{s^{\prime}} \psi(s^{\prime}) V(s^{\prime})$, where $\psi: S  \to \bbR^d$ satisfies $\phi(s^{\prime}|s,a) = \psi(s^{\prime}) \odot \mu(s,a)$ for some $\mu: S  \times  A  \to \bbR^d$, and $\odot$ denotes the component-wise product. For more details, we refer the reader to \cite{he2022near}.
\end{remark}

\section{Analysis}\label{sec:analysis}
In this section, we present our main result (\Cref{thm:main}), which establishes upper bounds on both the regret and the constraint violation. We also provide a high-level overview of the analysis; the full proofs are deferred to the appendix.
\begin{theorem}\label{thm:main}
    Suppose that \Cref{assum:slater} holds and $K \geq \max\{2H,H^2, d^3 H^3\}$. Set $\lambda = 1/B^2$, $\alpha = 1/(H^2\sqrt{K})$, $\eta = 1/(H\sqrt{K})$, and $\theta = 1/K$. With probability at least $1-6\delta$,
    \begin{align*}
        &\Regret = \bigO\left(\sqrt{dH^4 K} + \sqrt{d^2 H^3 K} + H^5\sqrt{K}/\gamma^2\right), \\
        &\Violation = \bigO\left(\sqrt{dH^4 K} + \sqrt{d^2 H^3 K}+ H^3\sqrt{K}/\gamma\right).
    \end{align*}
\end{theorem}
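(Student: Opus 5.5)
We plan a drift-based primal-dual analysis, working on the good event of \Cref{lem: beta hat} (together with the martingale concentration events used for Bernstein-type sums), which holds with probability at least $1-6\delta$. On this event the estimates are optimistic: $Q_{k,h}^\ell \geq Q_h^{\ell^k,\pi^k}$ by backward induction, so $V_{k,1}^r(s_1)\ge V_1^{r^k,\pi^k}(s_1)$ and $V_{k,1}^g(s_1)\ge V_1^{g,\pi^k}(s_1)$. We decompose
\[
V_1^{r^k,\pi^*}-V_1^{r^k,\pi^k} = \big(V_1^{r^k,\pi^*}-V_{k,1}^r\big) + \big(V_{k,1}^r - V_1^{r^k,\pi^k}\big),
\]
and similarly for $g$. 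The second (estimation) term is handled as in POWERS~\cite{he2022near}. Telescoping the Bellman error gives a sum of bonuses $\widehat\beta_k\|\phi_{V_{k,h+1}}\|_{(\widehat\Sigma_{k,h})^{-1}}$ plus martingale differences. By the elliptical potential lemma with weights $\bar\sigma_{k,h}$, the bonuses are bounded by $\widehat\beta_K\sqrt{dH\log K}\sqrt{\sum_{k,h}(\bar\sigma_{k,h}^\ell)^2}$. The total variance is controlled by the law of total variance, $\sum_{k,h}\bbV_hV_{h+1}^{\pi^k}=\bigO(H^2K)$, plus the offsets $E_{k,h}^\ell$ and the gap between the estimated and true value functions; this yields the $\sqrt{dH^4K}+\sqrt{d^2H^3K}$ terms (the condition $K\ge d^3H^3$ absorbs lower-order terms). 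This argument is run separately for $\ell=r$ and $\ell=g$.

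For the optimization term we use the OMD analysis of the policy update \eqref{eq:policy optimization} with the performance difference lemma under the true $P$, applied to the Lagrangian $Q^r+Y_{k}Q^g$. For any comparator $\pi$ (taking $\pi^*$, and mixtures with $\bar\pi$) we get
\[
\sum_k\big[(V_1^{r^k,\pi}-V_{k,1}^r)+Y_k(V_1^{g,\pi}-V_{k,1}^g)\big]\le \frac{H\log(|A|/\theta)}{\alpha}+\alpha H^3\sum_k(1+Y_k)^2+2\theta H^2\sum_k Y_k
\]
up to constants and the one-step lag between $k-1$ and $k$. The perturbation is essential here: it makes the KL term telescope with $\log(|A|/\theta)=\bigO(\log K)$, and its bias is $\theta H^2$ per step. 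From the dual update \eqref{eq:dual update}, $\frac12(Y_{k+1}^2-Y_k^2)$ is at most $\eta Y_k(b-V_{k,1}^g-\alpha H^3-2\theta H^2)-\alpha\eta H^3Y_k^2+\eta^2H^2(\ldots)$. Combining this with the OMD inequality at $\pi=\pi^*$ (where $V_1^{g,\pi^*}\ge b$), the regularizer terms $-\alpha H^3Y_k^2$ and $-(\alpha H^3+2\theta H^2)Y_k$ cancel the $Y_k$-dependent OMD errors. This gives the regret bound $\sum_k(V^{\pi^*}-V_{k,1}^r)\le \bigO(H\sqrt K\log K)+\eta$-terms, provided $Y_k$ is bounded.

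The main obstacle is a uniform bound $Y_k\le \bigO(H^2/\gamma)$ (or similar) without strong duality. For this we would use a drift lemma (Hajek-type, as in \cite{yu2026primaldual}). Plugging the Slater policy $\bar\pi$ into the one-step OMD inequality over windows of length $t_0$, we show that whenever $Y_k$ exceeds a threshold $\propto H^2/\gamma$, the expected drift of $Y$ is at most $-\eta\gamma/2$. This uses $V^{g,\bar\pi}\ge b+\gamma$, optimism of $V_{k,1}^g$, and that the reward part $\sum (V^{r,\bar\pi}-V^r_{k,1})\le H t_0$ is dominated by $Y\gamma t_0$ when $Y$ is large; the increments $|Y_{k+1}-Y_k|\le\eta H(1+\ldots)$ are bounded. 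The drift lemma then gives $\max_kY_k=\bigO(H^2/\gamma)$ with high probability, which explains the $1/\gamma$ and $1/\gamma^2$ factors.

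Finally, for the violation we rearrange the dual update to get
\[
\eta\sum_k\big(b-V_{k,1}^g\big)\le Y_{K+1}+\eta\big(\alpha H^3+2\theta H^2\big)K+\alpha\eta H^3\sum_kY_k.
\]
With $\eta=1/(H\sqrt K)$, $\alpha=1/(H^2\sqrt K)$ and $Y_k=\bigO(H^2/\gamma)$, this gives $\sum_k(b-V_{k,1}^g)=\bigO(H^3\sqrt K/\gamma)$ after accounting for constants. Adding the $g$-estimation error bounds the violation. The regret similarly picks up $\alpha H^3\sum_kY_k^2=\bigO(H^5\sqrt K/\gamma^2)$, and the stated bounds follow.
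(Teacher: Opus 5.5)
Your outline has the same architecture as the paper: optimism on $\calE$, the POWERS-style bias bound, perturbed OMD, a uniform bound $Y_k=\bigO(H^2/\gamma)$, and the violation bound from telescoping the dual update. Two steps go differently.

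\textbf{Regret.} The paper bounds the Lagrangian OMD term and $\sum_k Y_k(V_{k,1}^g(s_1)-b)$ separately. The first is bounded by H\"older as $\alpha H^3\sum_k(1+Y_k)^2$, which is where \Cref{lem:Yk bound} and $H^5\sqrt{K}/\gamma^2$ enter. The second is bounded by discarding the negative regularizer terms. Your merged route is valid and in fact sharper, but only with the split error bound used in the proof of \Cref{lem:drift}. There the $Q^r$ part is absorbed by $-\frac{1}{\alpha}D(\pi_h^{k+1}\|\widetilde\pi_h^k)$, and $Y_k\langle Q_{k,h}^g,\pi_h^{k+1}-\pi_h^k\rangle\le Y_k(\alpha H^2(1+Y_k)+2\theta H)$ sums over $h$ to exactly the regularizer $Y_k(\alpha H^3(1+Y_k)+2\theta H^2)$. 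What remains is $\bigO(H^3\sqrt{K}\log|\calA|)$ using only the crude bound $Y_k\le 3H\eta k\le 3\sqrt{K}$. The regret is then $\gamma$-free and needs neither Slater nor \Cref{lem:Yk bound}, and it implies the stated bound since $\gamma\le H$. Two corrections to your write-up:
\begin{itemize}
\item Your optimization term is $H\log(|\calA|/\theta)/\alpha=\bigO(H^3\sqrt{K})$, not $\bigO(H\sqrt{K}\log K)$.
\item Your closing claim that the regret ``picks up $\alpha H^3\sum_kY_k^2$'' contradicts the cancellation you describe. With your unsplit, ``up to constants'' bound $\alpha H^3\sum_k(1+Y_k)^2$ the cancellation is not exact, and you are back to needing \Cref{lem:Yk bound}.
\end{itemize}

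\textbf{Dual variable.} The paper uses no Hajek-type lemma. On $\calE$, inequality \eqref{eq:drift} holds pathwise. It is summed over windows of length $n_0=H\sqrt{K}$, where the KL terms telescope to at most $H\log(|\calA|/\theta)+n_0H\theta\log|\calA|$ thanks to the perturbation. A deterministic first-hitting-time argument with $|Y_{k+1}-Y_k|\le 3\alpha\eta^2H^4K+3\eta H$ then finishes. A conditional expected-drift formulation is awkward here, because optimism holds only on the global event $\calE$, not conditionally on $\calF_{k,1}$; the pathwise version avoids this and keeps the failure probability at $6\delta$. In this step exact cancellation is essential, not just convenient. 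A leftover of order $\eta\alpha H^3Y_k^2$ from an ``up to constants'' OMD bound beats $-\gamma\eta Y_k$ once $Y_k\gtrsim\gamma\sqrt{K}/H$. Closing the threshold argument would then require an extra assumption like $K\gtrsim H^6/\gamma^4$. So the regularizer must match the split bound from \Cref{lem:policy inequality} exactly, as the paper's does.
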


\begin{remark}
    The bounds in Theorem 1 depend on the Slater constant $\gamma$. Such dependence naturally arises in primal-dual approaches for CMDPs, where controlling the dual variable is essential for enforcing the constraint, as also observed in prior works such as \cite{ding2021provably, ghosh2022provably, yu2026primaldual}. Moreover, in many practical applications where the feature dimension $d$ is large, this dependence need not be the leading term. In particular, when $d = \Omega(H^{3.5}\log|A|/\gamma^2)$, the regret bound in Theorem 1 matches the lower bound $\Omega(\sqrt{d^2 H^3 K})$ established in \cite{he2022near}\footnote{Note that unconstrained adversarial linear mixture MDPs can be viewed as a special case of our setting by introducing a trivial constraint. Therefore, the known lower bound $\Omega(\sqrt{d^2 H^3 K})$ for that setting also applies here.} up to logarithmic factors. Under this condition on $d$, the regret bounds are summarized in \Cref{tab:main}, highlighting that our algorithm guarantees the lowest regret among these methods.
\end{remark}

\subsection{Limitations of Strong Duality-Based Analysis}\label{subsec:dual}

Before presenting the overview of our analysis, we explain why the strong duality-based approach induced by \eqref{eq:dual previous} fails when the rewards vary across episodes.

On top of \eqref{eq:dual previous}, the strong duality–based analysis~\cite{ding2021provably, ghosh2022provably} begins by upper bounding the following composite regret term for any $Y \in [0,2/\gamma]$:
\begin{align}
\label{eq:composite regret}
\sum_{k=1}^K \big(V_1^{r,\pi^*}(s_1) - V_1^{r,\pi^k}(s_1)\big) + Y \sum_{k=1}^K \big(b - V_1^{g,\pi^k}(s_1)\big).
\end{align}

The next step is to separate the composite regret into regret and violation terms. Here, \cite{ding2020natural} shows that the following holds under strong duality: for an optimal policy $\pi^* \in \argmax_\pi V_1^{r,\pi}(s_1)\ \textnormal{s.t.}\ V_1^{g,\pi}(s_1)\geq b$ and some $\Delta_k$, if
\begin{equation}\label{eq:before separate}
    V_1^{r,\pi^*}(s_1) - V_1^{r,\pi^k}(s_1) + \frac{2}{\gamma}\big(b - V_1^{g,\pi^k}(s_1)\big) \leq \Delta_k,
\end{equation}
then
\begin{equation}\label{eq:separate}
V_1^{r,\pi^*}(s_1) - V_1^{r,\pi^k}(s_1) \leq \Delta_k
\ \ \text{and} \ \
b - V_1^{g,\pi^k}(s_1) \leq \frac{\gamma}{2}\Delta_k.
\end{equation}
Applying this argument for all $k \in [K]$ yields regret and violation bounds from a bound on \eqref{eq:composite regret}.

Now, let us examine whether the same argument can be applied in our setting. Since the rewards change over $k$, the analogue of \eqref{eq:before separate} becomes 
\[
    V_1^{r^k,\pi^{k,*}}(s_1) - V_1^{r^k,\pi^k}(s_1) + \frac{2}{\gamma}\big(b - V_1^{g,\pi^k}(s_1)\big) \leq \Delta_k,
\]
where $\pi^{k,*} \in \argmax_\pi V_1^{r^k,\pi}(s_1) \ \ \textnormal{s.t.} \ \ V_1^{g,\pi}(s_1)\geq b$ for each $k$. 
Consequently, as the analogue of \eqref{eq:composite regret}, the corresponding composite regret must use the sequence $\{\pi^{k,*}\}_{k\in [K]}$ as comparator policies, namely,
\[
    \sum_{k=1}^K \big(V_1^{r^k,\pi^{k,*}}(s_1) - V_1^{r^k,\pi^k}(s_1)\big) + Y \sum_{k=1}^K \big(b - V_1^{g,\pi^k}(s_1)\big).
\]
However, obtaining an upper bound on this term is non-trivial unless $r^k$ is known before episode $k$, which is not allowed in the adversarial-reward setting.
This highlights the limitations of strong duality-based analysis in our setting.

\subsection{Our Analysis}
To overcome these limitations, we establish a drift-based analysis based on \eqref{eq:dual update}, which does not rely on strong duality. As a first step, we provide decompositions of $\Regret$ and $\Violation$. 
Since the optimal policy $\pi^*$ satisfies $V_{1}^{g,\pi^*}(s_1) \geq b$ and $Y_k \geq 0$, we have
\begin{align*}
    &\Regret \\
    &\leq \sum_{k=1}^K \left(V_1^{r^k,\pi^*}(s_1) + Y_k V_1^{g,\pi^*}(s_1) - V_{k,1}^r(s_1) - Y_k V_{k,1}^g(s_1) \right) \\
    &\quad+ \sum_{k=1}^K \left(V_{k,1}^r(s_1) - V_1^{r^k,\pi^k}(s_1)\right) + \sum_{k=1}^K Y_k\left(V_{k,1}^g(s_1)-b\right),\\
    &\Violation 
    = \sum_{k=1}^K (b - V_{k,1}^{g}(s_1)) + \sum_{k=1}^K (V_{k,1}^{g}(s_1) - V_1^{g,\pi^k}(s_1)).
\end{align*}

Since we employ weighted ridge regression for parameter estimation, 
the bias terms appearing in both the regret and violation decompositions admit the following bounds.
\begin{lemma} \label{lem:bias}
    Suppose that $K\geq d^3 H^3$. With probability at least $1-6\delta$,
    \begin{align*}
        &\sum_{k=1}^K \left(V_{k,h}^r(s_h^k) - V_{h}^{r^k,\pi^k}(s_h^k)\right)=\bigO\left(\sqrt{dH^4 K} + \sqrt{d^2 H^3 K}\right) ,\\
        &\sum_{k=1}^K \left(V_{k,h}^g(s_h^k) - V_{h}^{g,\pi^k}(s_h^k)\right) = \bigO\left(\sqrt{dH^4 K} + \sqrt{d^2 H^3 K}\right).
    \end{align*}
\end{lemma}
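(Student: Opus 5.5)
The proof follows the POWERS analysis of \cite{he2022near} (itself building on \cite{zhou2021nearly}), applied separately to $\ell=r$ and $\ell=g$. Throughout we work on the event of \Cref{lem: beta hat}, where the $Q$-estimates are optimistic and the confidence bounds hold. The argument is identical for both $\ell\in\{r,g\}$, since only the fixed-policy evaluation structure is used and the reward sequence enters only through the known $r_h^k$. We therefore write $\ell$ generically, with $\ell^k$ equal to $r^k$ or $g$.

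\textbf{One-step decomposition.} Since $V_{k,h}^\ell(s)=\langle \pi_h^k(\cdot|s),Q_{k,h}^\ell(s,\cdot)\rangle$ and $V_h^{\ell^k,\pi^k}$ satisfies the same relation with the true $Q$, we have
\[
V_{k,h}^\ell(s_h^k)-V_h^{\ell^k,\pi^k}(s_h^k)=\big(Q_{k,h}^\ell-Q_h^{\ell^k,\pi^k}\big)(s_h^k,a_h^k)+\xi_{k,h}^\ell ,
\]
where $\xi_{k,h}^\ell$ is a martingale difference term coming from sampling $a_h^k\sim\pi_h^k$. Dropping the upper clip (which only helps) and using \Cref{def:linear mixture}, the $Q$-gap equals
\[
\langle \widehat\theta_{k,h}^\ell-\theta_h^*,\phi_{V_{k,h+1}^\ell}\rangle+\widehat\beta_k\|\phi_{V_{k,h+1}^\ell}\|_{(\widehat\Sigma_{k,h}^\ell)^{-1}}+P_h\big(V_{k,h+1}^\ell-V_{h+1}^{\ell^k,\pi^k}\big).
\]
By \Cref{lem: beta hat}, the first two terms are at most $\min\{H,2\widehat\beta_k\|\phi\|_{(\widehat\Sigma_{k,h}^\ell)^{-1}}\}$. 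Writing the transition term as the realized next-step difference plus a second martingale term $\zeta_{k,h}^\ell$, and unrolling from $h$ to $H$, gives
\[
\sum_k\big(V_{k,h}^\ell-V_h^{\ell^k,\pi^k}\big)(s_h^k)\le \sum_{k,h'}\min\big\{H,2\widehat\beta_k\|\phi_{V_{k,h'+1}^\ell}\|_{(\widehat\Sigma_{k,h'}^\ell)^{-1}}\big\}+\sum_{k,h'}(\xi+\zeta).
\]
The martingale sums are handled by Azuma–Hoeffding, giving $\bigO(\sqrt{H^3K})$ overall. A sharper bound could use Freedman's inequality with the total variance, but Azuma already fits inside the target.

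\textbf{Bonus sum.} Write $\|\phi\|_{\widehat\Sigma^{-1}}=\bar\sigma_{k,h}^\ell\,\|\phi/\bar\sigma_{k,h}^\ell\|_{\widehat\Sigma^{-1}}$ and apply Cauchy–Schwarz together with the elliptical potential lemma on the weighted features. This yields
\[
\widehat\beta_K\sqrt{\textstyle\sum_{k,h}(\bar\sigma_{k,h}^\ell)^2}\sqrt{dH\log(1+K/\lambda)}.
\]
Rounds where the weighted norm exceeds $1$ need separate care: there the $\min\{H,\cdot\}$ is used, and the number of such rounds is $\bigO(dH)$. Since $\widehat\beta_K=\bigO(\sqrt d)$, the bonus sum is $\bigO(d\sqrt{H\sum(\bar\sigma^\ell_{k,h})^2})$. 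By \eqref{eq: bar sigma} and \Cref{lem: beta hat},
\[
(\bar\sigma_{k,h}^\ell)^2\le H^2/d+\bbV_hV_{k,h+1}^\ell+2E_{k,h}^\ell ,
\]
so it remains to bound $\sum\bbV_hV_{k,h+1}^\ell$ and $\sum E_{k,h}^\ell$.

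\textbf{Main obstacle: the total variance.} Split $\bbV_hV_{k,h+1}^\ell\le 2\bbV_hV_{h+1}^{\ell^k,\pi^k}+2\bbV_h(V_{k,h+1}^\ell-V_{h+1}^{\ell^k,\pi^k})$. For the first term, the law of total variance for a fixed policy gives $\sum_h\bbV_hV_{h+1}^{\ell^k,\pi^k}$ of order $H^2$ per episode in expectation, plus a martingale deviation. The total is thus $\bigO(H^2K)$, which produces the $\sqrt{d^2H^3K}$ term. The second term is at most $2H\,P_h(V_{k,h+1}^\ell-V_{h+1}^{\ell^k,\pi^k})$ because the difference is nonnegative and bounded by $H$. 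Its sum is therefore $H$ times the very bias quantity being bounded, summed over $h$. This gives a self-bounding inequality $X\le a+b\sqrt{X}$, whose solution is $X=\bigO(a+b^2)$.

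The $E$ terms are handled analogously. Apply Cauchy–Schwarz and the elliptical potential lemma to $\widetilde\beta_k\|\phi_{V^2}\|_{\widetilde\Sigma^{-1}}$, giving $\bigO(H^2\sqrt{d}\cdot\sqrt{dHK})$, and bound the $2H\check\beta_k$ term the same way. This contributes a lower-order quantity of roughly $d^{3.5}H^{4.5}\sqrt K$ inside the square root. The assumption $K\ge d^3H^3$ ensures these contributions, along with the $d^2H^2$-type terms from the self-bounding step, are dominated by $\sqrt{dH^4K}+\sqrt{d^2H^3K}$.

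Solving the self-bounding inequality then gives the claim for both $\ell=r$ and $\ell=g$. A union bound over the events of \Cref{lem: beta hat} and the martingale concentrations yields probability at least $1-6\delta$.
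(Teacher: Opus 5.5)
Your proposal is correct and follows essentially the same route as the paper, which proves the lemma by combining its restatements of Lemmas C.5 and C.6 of \cite{zhou2021nearly}. The shared steps are the one-step decomposition with two Azuma--Hoeffding martingale sums, Cauchy--Schwarz plus the elliptical potential lemma on the $\bar\sigma$-weighted features, a self-bounding bound on $\sum_{k,h}(\bar\sigma_{k,h}^\ell)^2$ via total variance, $H$ times the bias, and the $E_{k,h}^\ell$ terms, and finally $K\ge d^3H^3$ to absorb the lower-order remainder. Your local variations are harmless: you split $\bbV_h V_{k,h+1}^\ell$ instead of using $P_h((V_{k,h+1}^\ell)^2-(V_{h+1}^{\ell^k,\pi^k})^2)\le 2HP_h(V_{k,h+1}^\ell-V_{h+1}^{\ell^k,\pi^k})$, you count large-norm rounds instead of using $H/(2\widehat\beta_k\bar\sigma_{k,h}^\ell)\le 1$, and you keep the $E$-contribution as a $K^{1/4}$ term rather than applying AM--GM. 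Your minor exponent slips in the lower-order terms are also still absorbed by $K\ge d^3H^3$; for example, the self-bounding remainder in the bias is of order $d^2H^3$, not $d^2H^2$.
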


Next, we focus on $\sum_{k=1}^K (b - V_{k,1}^{g}(s_1))$. By the dual update,
\begin{align*}
    \sum_{k=1}^K (b- V_{k,1}^g(s_1)) 
    &\leq \frac{Y_{K+1}}{\eta} + \alpha H^3 \sum_{k=1}^K Y_k + \alpha KH^3 + 2\theta KH^2.
\end{align*}
Therefore, it suffices to bound $Y_k$ uniformly over $k$. To this end, we employ drift-based arguments~\cite{yu2017online, wei2020online}. 
Based on the observation that our regularizer in \eqref{eq:dual update} satisfies 
$-\alpha\eta H^2(1+Y_k) - 2\eta\theta H \leq \eta\langle\pi_h^{k+1} - \pi_h^{k}, Q_{k,h}^g\rangle$ (\Cref{lem:policy inequality}), we establish the following drift inequality:
\begin{align}
    &\frac{Y_{k+1}^2 - Y_{k}^2}{2} 
    \leq -\gamma \eta Y_k  + C \label{eq:drift maintext}\\
    &+\frac{\eta}{\alpha} \bbE_{\bar\pi}\left[\sum_{h=1}^H D(\bar\pi_h(\cdot|s_h)||\widetilde\pi_h^k(\cdot|s_h)) - D(\bar\pi_h(\cdot|s_h)|| \pi_h^{k+1}(\cdot|s_h))\right] \notag
\end{align}
where $C =  \frac{\alpha \eta H^3}{2} + 2\eta H^2\theta + 2\eta H^2 +2\eta^2(H^2 + \alpha^2 H^6 + 9 \eta^2 \alpha^2 H^8K^2 + 4\theta^2 H^4).$ 

From \eqref{eq:drift maintext}, we observe that $\widetilde{\pi}_h^k$ is a perturbed policy, and hence the KL divergence term $D(\bar\pi_h(\cdot|s_h) \| \widetilde\pi_h^k(\cdot|s_h))$ admits a small upper bound. This highlights the necessity of the policy perturbation step. Without this perturbation, the KL divergence term $D(\bar\pi_h(\cdot|s_h)\|\pi_h^k(\cdot|s_h))$ could be unbounded, as $\pi_h^k$ approaches the boundary of the simplex. With further analysis, we obtain the following bound.

\begin{lemma} \label{lem:Yk bound}
Suppose that \Cref{assum:slater} holds and $K \geq \max\{2H, H^2\}$. For all $k \in [K]$, with probability at least $1-6\delta,$
    \[
        Y_k = \bigO(H^2/\gamma).
    \]
\end{lemma}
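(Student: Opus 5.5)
The plan is to start from the drift inequality \eqref{eq:drift maintext} and turn it into a uniform bound by a standard multi-step drift argument in the style of \cite{yu2017online, wei2020online}. We work on the good event of \Cref{lem: beta hat}, on which the optimistic estimates and \Cref{lem:policy inequality} hold; this gives probability at least $1-6\delta$.

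\textbf{Step 1: bound the KL terms.} Telescope the KL terms over a window of $t_0$ episodes $k,\dots,k+t_0-1$. The difference
\[
D(\bar\pi_h\|\widetilde\pi_h^{j}) - D(\bar\pi_h\|\pi_h^{j})
\]
is controlled via the perturbation: since $\widetilde\pi_h^j=(1-\theta)\pi_h^j+\theta\piunif$, we have $\log(1/\widetilde\pi^j_h)\le \log(1/((1-\theta)\pi^j_h))$. Hence $D(\bar\pi\|\widetilde\pi^j)\le D(\bar\pi\|\pi^j)+\log\frac{1}{1-\theta}$, and this slack is $O(\theta)=O(1/K)$ per step. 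The sum therefore telescopes to
\[
\frac{\eta}{\alpha}\Bigl(H\log(|A|/\theta)+ t_0 H\cdot O(\theta)\Bigr),
\]
using $D(\bar\pi_h\|\widetilde\pi_h^k)\le\log(|A|/\theta)$ for the first term. With $\eta/\alpha=H$ and $\theta=1/K$, this is $\widetilde O(H^2)$.

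\textbf{Step 2: the window inequality.} Summing \eqref{eq:drift maintext} over the window gives
\[
Y_{k+t_0}^2-Y_k^2 \le -2\gamma\eta\sum_{j}Y_j + 2t_0C + \widetilde O(H^2).
\]
Increments are bounded: $|Y_{j+1}-Y_j|\le \eta(H+\alpha H^3 Y_j+\dots)$, which is $O(\eta H)$ as long as $Y_j=O(H^2/\gamma)$; this uses $K\ge H^2$ so that $\alpha H^3\cdot H^2/\gamma$ is small relative to $H$. Consequently $Y_j\ge Y_k-t_0\eta H$ throughout the window. If $Y_k\ge \vartheta$ for a threshold $\vartheta$, the drift term is at most $-2\gamma\eta t_0(Y_k-t_0\eta H)$. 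Dividing, $Y_{k+t_0}\le Y_k - \Omega(\gamma\eta t_0)$ once
\[
\vartheta \gtrsim \frac{t_0 C+H^2\log(|A|K)}{\gamma\eta t_0} + t_0\eta H .
\]
Checking $C=O(\eta H^2)$ under the parameter choices (the dominant part is $2\eta H^2$; the other terms are lower order because $K\ge 2H$), the first part is $O(H^2/\gamma)+\widetilde O(H^2/(\gamma\eta t_0))$. Choosing $t_0=\sqrt{K}$ makes $\eta t_0=1/H$, giving $\vartheta=\widetilde O(H^3/\gamma + H^2/\gamma)$.

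\textbf{Step 3: conclude.} Apply the standard lemma: a bounded-increment process with negative drift on windows of length $t_0$ above $\vartheta$ stays below $\vartheta + t_0\cdot\max|\Delta Y|$. This yields $Y_k=O(\vartheta)$ for all $k$; a contradiction/induction on the first exceedance time handles the circularity in Step 2's increment bound.

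\textbf{Main obstacle.} The delicate step is matching the powers of $H$ so that the final bound is $H^2/\gamma$ rather than $H^3/\gamma$. This requires a sharper accounting of $C/\eta$ and of the KL term, which is $H\log$ multiplied by $\eta/\alpha = H$. If the crude version gives an extra $H$, I would refine by using the window length $t_0$ optimally, as $t_0\asymp H\sqrt{K}/\ldots$, and by tracking constants in $C$ rather than the rough bound above.
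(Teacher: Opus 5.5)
Your strategy is the paper's: sum the drift inequality over a window, telescope the KL terms using the perturbation, and finish with a first-exceedance contradiction. Your slack $\log\frac{1}{1-\theta}$ is a valid substitute for the paper's $\theta\log|A|$. As written, however, the proposal does not reach $H^2/\gamma$, for two reasons.

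\textbf{First, the window length.} With your committed choice $t_0=\sqrt K$ you have $\eta t_0=1/H$. The KL contribution $\frac{\eta}{\alpha}H\log(|A|/\theta)/(\gamma\eta t_0)$ is then of order $H^3\log(|A|K)/\gamma$, so you only prove $Y_k=\widetilde{O}(H^3/\gamma)$. The missing factor is not in $C$: you already have $C/\eta=2H^2+O(1)$, so $C/(\gamma\eta)=O(H^2/\gamma)$. What is needed is $t_0=H\sqrt K$, i.e.\ $\eta t_0=1$, which is exactly the paper's $n_0=H\sqrt K$. With this choice:
\begin{itemize}
\item the KL part becomes $\widetilde{O}(H^2/\gamma)$, since the per-step slack adds only $\frac{\eta}{\alpha}\,t_0H\cdot O(\theta)=O(H^3/\sqrt K)=O(H^2)$ before dividing by $\gamma\eta t_0=\gamma$;
\item the $C$ part stays $O(H^2/\gamma)$;
\item the window spread $t_0\max_j|Y_{j+1}-Y_j|$ is $O(H)$.
\end{itemize}
This is where $K\ge H^2$ genuinely enters: it guarantees $t_0=H\sqrt K\le K$ and keeps the slack term at $O(H^2)$.

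\textbf{Second, the increment bound.} Your justification is incorrect. If $Y_j=O(H^2/\gamma)$, then $\alpha H^3Y_j$ is of order $H^3/(\gamma\sqrt K)$. The assumption $K\ge H^2$ only bounds this by $H^2/\gamma$, not by $O(H)$; that would need $K\gtrsim H^4/\gamma^2$. The reasoning is also circular.

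The paper avoids both problems with the crude a priori bound $Y_k\le 3\eta Hk$. This follows from $|b-V_{k,1}^g(s_1)|\le H$, $\alpha H^3\le H$, $2\theta H^2\le H$, and $0\le(1-\alpha\eta H^3)Y_k\le Y_k$. Because $\alpha\eta H^3K=1$ under the stated parameters, it gives $\alpha\eta H^3Y_k\le 3\alpha\eta^2H^4K=3\eta H$. Hence $|Y_{k+1}-Y_k|\le 6\eta H$ for every $k$ unconditionally, and the first-exceedance argument needs no induction on the increment bound.

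Your inductive route can be patched: the circular part contributes $t_0\eta\alpha H^3\vartheta=(H/\sqrt K)\vartheta$ to the threshold. But closing that inequality needs $H/\sqrt K$ bounded away from $1$, e.g.\ $K\ge 4H^2$, which the lemma does not assume.
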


Since $Y_k$ is bounded by \Cref{lem:Yk bound}, it follows that $\sum_{k=1}^K Y_k\left(V_{k,1}^g(s_1)-b\right) = \bigO\left(H\sqrt{K} + H^5\sqrt{K}/\gamma^2\right).$ The detailed derivation is deferred to the appendix.

To complete the analysis, it remains to bound $\sum_{k=1}^K \left(V_1^{r^k,\pi^*}(s_1) + Y_k V_1^{g,\pi^*}(s_1) - V_{k,1}^r(s_1) - Y_k V_{k,1}^g(s_1) \right)$. By \Cref{lem: beta hat} and the value difference lemma (Lemma 1 of \cite{shani2020optimistic}), this term can be bounded as
\[
    \sum_{k=1}^K\bbE_{\pi^*}\left[\sum_{h=1}^H \langle Q_{k,h}^r(s_h,\cdot) + Y_kQ_{k,h}^g(s_h,\cdot), \pi_h^*(\cdot|s_h) - \pi_h^k(\cdot|s_h) \rangle | s_1\right].
\]
Moreover, since we employ the policy optimization~\eqref{eq:policy optimization}, the above term can be effectively controlled. Specifically, we apply the standard OMD lemma~\cite{hazan2016introduction}, together with the policy perturbation step. Combining these ingredients, we obtain the following lemma.
\begin{lemma}\label{lem:omd} 
Let $K \geq \max\{2H, H^2\}$. With probability at least $1-6\delta$,
    \begin{align*}
        &\sum_{k=1}^K \left(V_1^{r^k,\pi^*}(s_1) + Y_k V_1^{g,\pi^*}(s_1) - V_{k,1}^r(s_1) - Y_k V_{k,1}^g(s_1) \right) \\
        &=\bigO\left(H^5 \sqrt{K} /\gamma^2 + H^3\sqrt{K} + H^4/\gamma\right).
    \end{align*}
\end{lemma}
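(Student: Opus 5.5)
Under the event of \Cref{lem: beta hat}, optimism holds: by backward induction on $h$, $Q_{k,h}^\ell \geq r_h^k + P_h V_{k,h+1}^\ell$ (resp.\ $g_h + P_hV_{k,h+1}^\ell$) whenever the upper clipping at $H-h+1$ is inactive. This follows because $|\langle \widehat\theta_{k,h}^\ell-\theta_h^*,\phi_{V}\rangle|\leq \widehat\beta_k\|\phi_V\|_{(\widehat\Sigma_{k,h}^\ell)^{-1}}$ by Cauchy--Schwarz. When the clipping is active, the bound $Q^{\ell}_{k,h}=H-h+1$ still dominates the true $Q$. Applying the value difference lemma (Lemma 1 of \cite{shani2020optimistic}) to $\pi^*$ versus $\pi^k$ with the Lagrangian reward $r^k+Y_k g$, the model-error terms are nonpositive. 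This gives
\[
\sum_{k=1}^K \bbE_{\pi^*}\Big[\sum_{h=1}^H \langle Q_{k,h}^r(s_h,\cdot)+Y_kQ_{k,h}^g(s_h,\cdot),\pi_h^*(\cdot|s_h)-\pi_h^k(\cdot|s_h)\rangle\Big]
\]
as an upper bound on the left-hand side. Crucially, the expectation is over the fixed comparator $\pi^*$ and the true $P$, so it commutes with the sum over $k$.

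Next I will apply, pointwise in $(s,h)$, the standard OMD lemma~\cite{hazan2016introduction} to the update in \eqref{eq:policy optimization} with loss vector $G_{k,h}=Q_{k,h}^r+Y_kQ_{k,h}^g$, and $\|G_{k,h}\|_\infty\leq H(1+Y_k)$. This yields
\[
\langle G_{k,h},\pi_h^*-\pi_h^{k}\rangle \leq \tfrac1\alpha\big(D(\pi^*_h\|\widetilde\pi_h^{k})-D(\pi^*_h\|\pi_h^{k+1})\big)+\tfrac{\alpha}{2}H^2(1+Y_k)^2 + \langle G_{k,h},\pi_h^{k+1}-\pi_h^k\rangle.
\]
The last term I will handle with the usual stability bound, which merges into the $\alpha H^2(1+Y_k)^2$ term. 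Then I relate $\widetilde\pi_h^{k}$ to $\pi_h^{k}$ via convexity of the KL divergence in its second argument. Since $\widetilde\pi = (1-\theta)\pi+\theta\piunif$, we get $D(\pi^*\|\widetilde\pi_h^k)\leq D(\pi^*\|\pi_h^k)+\log\frac{1}{1-\theta}$, and the latter is at most $2\theta$ for $\theta=1/K$. Telescoping over $k$ then leaves $\frac1\alpha\big(D(\pi^*_h\|\widetilde\pi_h^1)+2\theta K\big)$. Here $\widetilde\pi^1$ comes from $\pi^1=\piunif$, so the KL term is at most $\log|A|$. Summing over $h$ gives $H(\log|A|+2)/\alpha=\bigO(H^3\sqrt K)$.

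The remaining term is $\frac{\alpha H^3}{2}\sum_k(1+Y_k)^2$. Plugging in \Cref{lem:Yk bound}, namely $Y_k=\bigO(H^2/\gamma)$ uniformly with probability $1-6\delta$, gives $\alpha H^3 K(1+H^4/\gamma^2)=\bigO(H\sqrt K+H^5\sqrt K/\gamma^2)$. The extra $H^4/\gamma$ term in the statement I expect to come from the first episode or boundary terms, e.g.\ from $k=1$, where $\pi^1$ is not produced by OMD, contributing $H(1+Y_1)$-type quantities. It could also come from the stability term when bounding $\langle G_{k,h},\pi^{k+1}-\pi^k\rangle$ with the perturbation cost $2\theta H(1+Y_k)$ summed over $K$ episodes.

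The main obstacle is making the OMD step valid despite the perturbation. The mirror step starts from $\widetilde\pi^{k}$ rather than $\pi^k$, so the telescoping must be done carefully via the convexity bound above. The loss at step $k$ also involves $Q_{k,h}$ while the update producing $\pi^{k+1}$ uses $Q_{k,h}$ too. I must align indices so that $\pi^{k}$ is evaluated against $G_{k,h}$ with the difference absorbed by the stability term $\alpha H^2(1+Y_k)^2$. I will first try the one-step lemma $\langle G,\pi^*-\pi^{k+1}\rangle\leq \frac1\alpha[D(\pi^*\|\widetilde\pi^k)-D(\pi^*\|\pi^{k+1})-D(\pi^{k+1}\|\widetilde\pi^k)]$. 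I will combine it with Pinsker to bound $\langle G,\pi^{k+1}-\pi^k\rangle$ by $\langle G,\pi^{k+1}-\widetilde\pi^k\rangle+2\theta H(1+Y_k)$, using Young's inequality.
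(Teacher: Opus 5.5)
Your proposal is correct and follows the paper's proof: optimism plus the value-difference lemma, the pushback (three-point) inequality for the KL step anchored at $\widetilde\pi_h^k$, telescoping after paying a per-episode mixing cost, and splitting $\langle G_{k,h},\pi_h^{k+1}-\pi_h^k\rangle$ into a stability term of order $\alpha H^2(1+Y_k)^2$ plus $2\theta\|G_{k,h}\|_\infty$, then plugging in $Y_k=\bigO(H^2/\gamma)$. Your second guess about the $H^4/\gamma$ term is the right one: it is exactly $\sum_{k,h}2\theta H(1+Y_k)$, not a $k=1$ boundary effect. The remaining differences are cosmetic and change nothing: the paper uses H\"older with the softmax Lipschitz bound where you use Pinsker plus Young, and its $\theta\log|A|$ mixing bound where you use $\log\frac{1}{1-\theta}$, which actually follows from $\widetilde\pi_h^k\geq(1-\theta)\pi_h^k$ pointwise rather than from convexity.
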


\section{Numerical Experiments}\label{sec:numerical}
\begin{figure}[t]
    \centering
    \begin{subfigure}{0.33\textwidth}
        \centering
        \includegraphics[width=\linewidth]{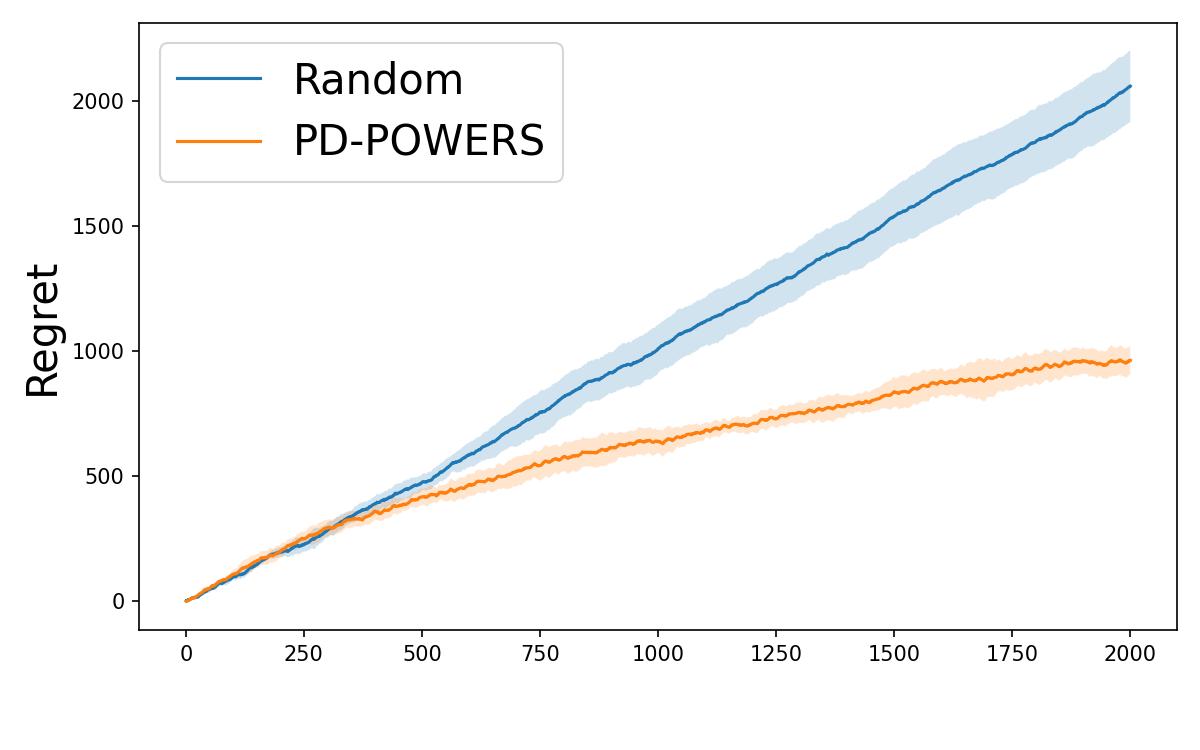}
    \end{subfigure}
    \hfill
    \begin{subfigure}{0.33\textwidth}
        \centering
        \includegraphics[width=\linewidth]{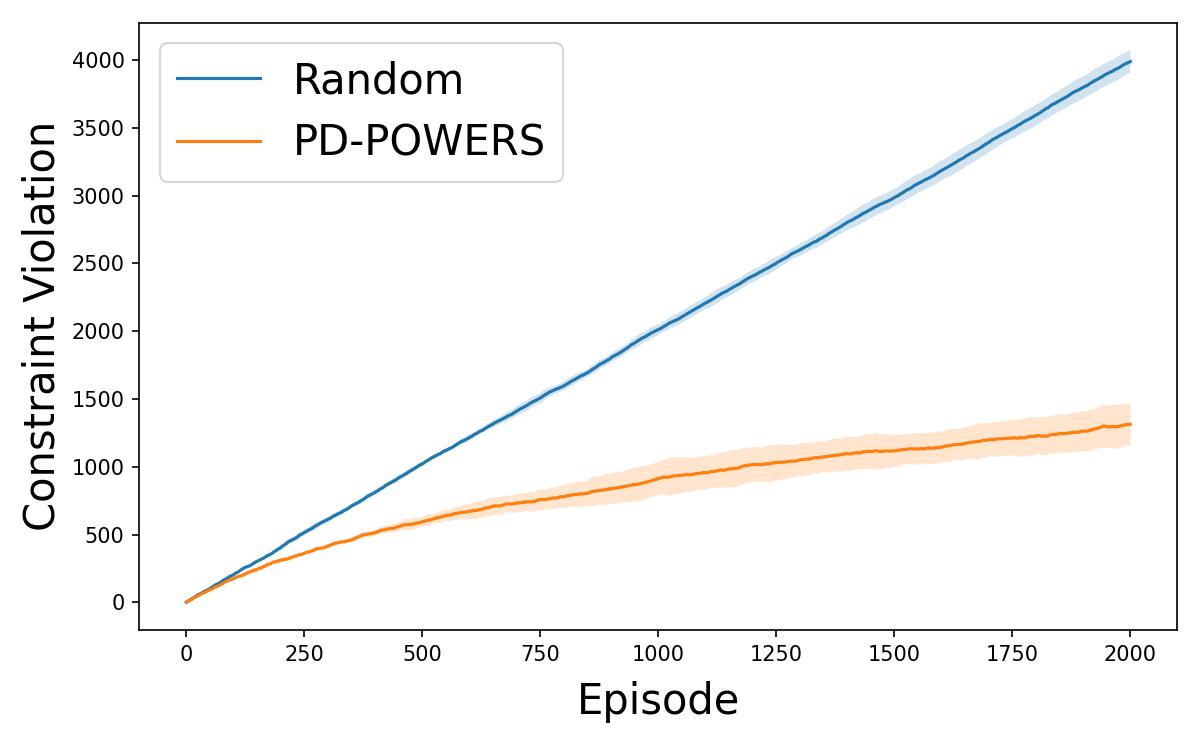}
    \end{subfigure}
    
    \caption{Regret and constraint violation of PD-POWERS.}
    \label{fig:main}
\end{figure}
In \Cref{fig:main}, we evaluate PD-POWERS on a CMDP instance adapted from \cite{he2022near}, with modifications including the addition of a constraint. The details of the setting are described below. We conduct $5$ simulations with $K=2{,}000$ under different random seeds. Each plot shows the average, and the shaded regions indicate $95$\% confidence intervals. 

As in \cite{he2022near}, we compare both the regret and violation of our algorithm with those of a random policy—one that selects a uniformly sampled action at each step. As shown in the figure, our algorithm exhibits sublinear growth in both regret and violation with respect to the number of episodes, while those of the random policy grow linearly. Therefore, the numerical results support our theoretical findings.

Let $H=10$ and $d=5$. Let $S = \{0,\ldots,H+1\}$ and $A = \{-1,1\}^{d-1}$. Let $s=0$ be the initial state and $b = 6$. For $s < H$, we define $P_h(s+1|s,a) = -0.01\cdot\bm{1}^\top a + 0.95$ and $P_h(H+1|s,a) = 1 - P_h(s+1|s,a)$. For $s = H$ or $H+1$, we set $P_h(s|s,a) = 1$. For $s<H$, if $\lfloor k / 10 \rfloor \equiv 0 \mod 2$, then $r_k(s,a) = 0.4\sum_{i=1}^{d-1}\frac{a_i+1}{2(d-1)}$, and if $\lfloor k / 10 \rfloor \equiv 1 \mod 2$, then $r_k(s,a) = 0.4(1 - \sum_{i=1}^{d-1}\frac{a_i+1}{2(d-1)})$. For all $k \in [K]$, $r_k(H,a) = 0$ and $r_k(H+1,a) = 1$. Let $g(s,a) = \sum_{i=1}^{d-1}\frac{a_i+1}{2(d-1)}$ if $s < H$, and $g(s,a) = 0$ otherwise. The code is available at: https://github.com/kihyun-yu/pd-powers.

\section{Conclusion}\label{sec:conclusion}
In this paper, we study online linear mixture CMDPs with adversarial rewards under full-information feedback and a fixed constraint function. We propose PD-POWERS, a primal-dual policy optimization algorithm combining a regularized dual update and weighted ridge regression-based parameter estimation. Moreover, we show that PD-POWERS achieves a near-optimal regret bound. Despite these results, several limitations remain. First, when the integration oracle $\calO$ is not available, the computational complexity may depend on $|S|$, making the algorithm inefficient for large state spaces. Second, the regret and constraint violation bounds become large when $\gamma \ll 1$. Therefore, developing algorithms that remain robust under such degenerate conditions is an important direction for future work.

%% file: appendix.tex
\tableofcontents

\section{Auxiliary Notions}

\begin{definition}[High-probability good event]\label{def:good event}
    We define a high-probability good event $\calE$ as
    \begin{equation*}
        \calE = \calE_1 \cap \calE_2 \cap \calE_3 \cap \calE_4,
    \end{equation*}
    where $\calE_1, \calE_2, \calE_3, \calE_4$ hold when the statement of Lemmas \ref{lem: beta hat}, \ref{lem:ltv}, \ref{lem:mds term}, and \ref{lem:mds term for V - Q} hold, respectively.
\end{definition}

\begin{definition}[Filtration]\label{def:filtration}
    Let $\calF_{k,h}$ denote the $\sigma$-algebra that includes all randomness up to step $h$ and episode $k$, i.e.,
    \begin{equation}\label{eq:calF}
        \calF_{k,h} = \sigma\left\{\{(s_j^\tau, a_j^\tau)\}_{\tau\in [k-1], j \in [H]} \cup \{r^\tau\}_{\tau \in [k]} \cup \{(s_j^k,a_j^k)\}_{j\in [h]}\right\}.
    \end{equation}
    Let $\calG_{k,h}$ denote the $\sigma$-algebra that includes all randomness fixed before sampling  $a_h^k$, i.e., 
    \begin{equation}\label{eq:calG}
        \calG_{k,h} = \sigma\left\{\{(s_j^\tau, a_j^\tau)\}_{\tau\in [k-1], j \in [H]} \cup \{r^\tau\}_{\tau \in [k]} \cup \{(s_j^k,a_j^k)\}_{j\in [h-1]} \cup s_h^k\right\}.
    \end{equation}
\end{definition}
Since the adversarial reward in episode $k$ is determined at the beginning of the episode, $\{r_h^k\}_{h\in [H]}$ is $\calF_{k,1}$-measurable. Moreover, we note that, $\{\pi_h^k\}_{h\in[H]}$, $\{Q_{k,h}^r, Q_{k,h}^g\}_{h\in [H]}$, and $\{V_{k,h}^r, V_{k,h}^g\}_{h\in [H]}$ are $\calF_{k,1}$-measurable, since they are determined by $\{(s_j^\tau, a_j^\tau)\}_{j\in [H], \tau \in [k-1]}$ and $\{r^\tau\}_{\tau \in [k]}$.

\section{Proof of Lemma~\ref{lem: beta hat}} 
\begin{proposition}[Restatement of \Cref{prop: E step 1}]
    For any $(s,a,h,k,\ell) \in \calS \times \calA \times [H] \times [K] \times \{r,g\}$,
    \begin{align*}
        \left|\bar\bbV_h V_{k,h+1}^\ell(s,a) - \bbV_h V_{k,h+1}^\ell(s,a)\right|
        &\leq \min\left\{H^2, \left\|\left(\widetilde\Sigma_{k,h}^\ell\right)^{-1/2}\phi_{(V_{k,h+1}^\ell)^2}(s,a)\right\|_2 \left\|\left(\widetilde\Sigma_{k,h}^\ell\right)^{1/2}(\widetilde\theta_{k,h}^\ell - \theta_h^*)\right\|_2\right\} \\
        &\quad+ \min\left\{H^2, 2H\left\|\left(\widehat \Sigma_{k,h}^\ell\right)^{-1/2}\phi_{V_{k,h+1}^\ell}(s,a)\right\|_2  \left\|\left(\widehat\Sigma_{k,h}^\ell\right)^{1/2}(\widehat\theta_{k,h}^\ell - \theta_h^*)\right\|_2\right\}  
    \end{align*}
\end{proposition}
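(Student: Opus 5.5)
The proof rests on the linear mixture identity $P_h V(s,a) = \langle \phi_V(s,a), \theta_h^*\rangle$ for any $V$, which gives $\bbV_h V_{k,h+1}^\ell(s,a) = \langle \phi_{(V_{k,h+1}^\ell)^2}(s,a), \theta_h^*\rangle - \langle \phi_{V_{k,h+1}^\ell}(s,a), \theta_h^*\rangle^2$. Since $V_{k,h+1}^\ell$ takes values in $[0,H]$ by the clipping in \eqref{eq:Q}, the true quantities satisfy
\[
\langle \phi_{(V_{k,h+1}^\ell)^2}, \theta_h^*\rangle = P_h(V_{k,h+1}^\ell)^2 \in [0,H^2], \qquad \langle \phi_{V_{k,h+1}^\ell}, \theta_h^*\rangle = P_hV_{k,h+1}^\ell \in [0,H].
\]
Comparing with \eqref{eq:bar bbV}, the triangle inequality splits the discrepancy into a second-moment term and a squared-mean term:
\[
\Big|\big[\langle \phi_{(V_{k,h+1}^\ell)^2}, \widetilde\theta_{k,h}^\ell\rangle\big]_{[0,H^2]} - \langle \phi_{(V_{k,h+1}^\ell)^2}, \theta_h^*\rangle\Big| + \Big|\big[\langle \phi_{V_{k,h+1}^\ell}, \widehat\theta_{k,h}^\ell\rangle\big]_{[0,H]}^2 - \langle \phi_{V_{k,h+1}^\ell}, \theta_h^*\rangle^2\Big|.
\]

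\textbf{Second-moment term.} Both quantities lie in $[0,H^2]$, so their difference is at most $H^2$. Projection onto $[0,H^2]$ is $1$-Lipschitz and fixes the true value, so the difference is also at most $|\langle \phi_{(V_{k,h+1}^\ell)^2}, \widetilde\theta_{k,h}^\ell - \theta_h^*\rangle|$. Inserting $(\widetilde\Sigma_{k,h}^\ell)^{-1/2}(\widetilde\Sigma_{k,h}^\ell)^{1/2}$ inside the inner product and applying Cauchy--Schwarz gives the first $\min\{\cdot\}$ in the statement.

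\textbf{Squared-mean term.} Write $x = [\langle \phi_{V_{k,h+1}^\ell}, \widehat\theta_{k,h}^\ell\rangle]_{[0,H]}$ and $y = \langle \phi_{V_{k,h+1}^\ell}, \theta_h^*\rangle$, both in $[0,H]$. Then $|x^2-y^2| \le H^2$, since both squares lie in $[0,H^2]$. Also $|x^2-y^2| = |x+y|\,|x-y| \le 2H|x-y|$. Because $y\in[0,H]$ and projection is $1$-Lipschitz, $|x-y| \le |\langle \phi_{V_{k,h+1}^\ell}, \widehat\theta_{k,h}^\ell - \theta_h^*\rangle|$. The same Cauchy--Schwarz step with $\widehat\Sigma_{k,h}^\ell$ then yields the second $\min\{\cdot\}$.

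The argument is deterministic and needs no probabilistic input. The one step requiring care is the squared-mean term: to use the bound $|x+y|\le 2H$ one must apply the truncation to $x$ before squaring, and must confirm that $y$ genuinely lies in $[0,H]$. The latter holds because $V_{k,h+1}^\ell \in [0,H]$ and $P_h$ is a probability kernel, so clipping the estimate cannot move it further from $y$.
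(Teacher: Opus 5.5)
Your proposal is correct and follows essentially the same route as the paper's proof. Both use the triangle-inequality split into a second-moment term and a squared-mean term, the observation that clipping cannot move the estimate further from the true value in $[0,H^2]$ (resp.\ $[0,H]$), the factorization $|x^2-y^2|\le 2H|x-y|$, Cauchy--Schwarz in the $\widetilde\Sigma_{k,h}^\ell$ and $\widehat\Sigma_{k,h}^\ell$ norms, and the trivial $H^2$ caps that give the two $\min\{\cdot\}$ terms.
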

\begin{proof}
    Consider the case of $r$. By definition and the triangle inequality,
    \begin{align*}
    \begin{aligned}
        \left|\bar\bbV_h V_{k,h+1}^r(s,a) - \bbV_h V_{k,h+1}^r(s,a)\right| 
        &\leq \underbrace{\left|\left[\langle \phi_{(V_{k,h+1}^r)^2}(s,a), \widetilde\theta_{k,h}^r \rangle\right]_{\left[0, H^2\right]} - \langle \phi_{(V_{k,h+1}^r)^2}(s,a), \theta_h^*\rangle\right|}_{\text{(a)}}\\
        &\quad+\underbrace{\left| \langle \phi_{V_{k,h+1}^r}(s,a), \theta_h^*\rangle^2 - \left[\langle \phi_{V_{k,h+1}^r}(s,a), \widehat\theta_{k,h}^r \rangle\right]_{\left[0, H\right]}^2 \right|}_{\text{(b)}}.
    \end{aligned}
    \end{align*}
    Term (a) can be bounded as
    \begin{align*}
    \begin{aligned}
        \text{Term (a)} &\leq \left|\langle \phi_{(V_{k,h+1}^r)^2}(s,a), \widetilde\theta_{k,h}^r \rangle - \langle \phi_{(V_{k,h+1}^r)^2}(s,a), \theta_h^*\rangle\right|\\
        &\leq \left\|\left(\widetilde\Sigma_{k,h}^r\right)^{-1/2}\phi_{(V_{k,h+1}^r)^2}(s,a)\right\|_2 \left\|\left(\widetilde\Sigma_{k,h}^r\right)^{1/2}(\widetilde\theta_{k,h}^r - \theta_h^*)\right\|_2
    \end{aligned}
    \end{align*}
    where the first inequality is due to the fact that $\langle \phi_{(V_{k,h+1}^r)^2}(s,a), \theta_h^* \rangle = P _h (V_{k,h+1}^r)^2(s,a) \leq H^2$, and the second inequality is due to the Cauchy-Schwarz inequality. Furthermore, since term $\text{(a)}\leq H^2$, we have the following upper bound.
    \[
        \text{Term (a)} \leq \min\left\{H^2,\left\|\left(\widetilde\Sigma_{k,h}^r\right)^{-1/2}\phi_{(V_{k,h+1}^r)^2}(s,a)\right\|_2 \left\|\left(\widetilde\Sigma_{k,h}^r\right)^{1/2}(\widetilde\theta_{k,h}^r - \theta_h^*)\right\|_2\right\}.
    \]
    
    Next, term (b) is bounded as
    \begin{align*}
    \begin{aligned}
        \text{Term (b)} &= \left|\left[\langle \phi_{V_{k,h+1}^r}(s,a), \widehat\theta_{k,h}^r\rangle\right]_{\left[0, H\right]} + \langle \phi_{V_{k,h+1}^r}(s,a), \theta_h^*\rangle\right| 
        \left|\left[\langle \phi_{V_{k,h+1}^r}(s,a), \widehat\theta_{k,h}^r \rangle\right]_{\left[0, H\right]} - \langle \phi_{V_{k,h+1}^r}(s,a), \theta_h^*\rangle\right|\\
        &\leq 2H \left|\left[\langle \phi_{V_{k,h+1}^r}(s,a), \widehat\theta_{k,h}^r \rangle\right]_{\left[0, H\right]} - \langle \phi_{V_{k,h+1}^r}(s,a), \theta_h^*\rangle\right| \\
        &\leq 2H\left|\langle \phi_{V_{k,h+1}^r}(s,a), \widehat\theta_{k,h}^r \rangle- \langle \phi_{V_{k,h+1}^r}(s,a), \theta_h^*\rangle\right| \\
        &\leq 2H\left\|\left(\widehat\Sigma_{k,h}^r\right)^{-1/2}\phi_{V_{k,h+1}^r}(s,a)\right\|_2  \left\|\left(\widehat\Sigma_{k,h}^r\right)^{1/2}(\widehat\theta_{k,h}^r - \theta_h^*)\right\|_2
    \end{aligned}
    \end{align*}
    where the first and second inequalities are due to $\langle \phi_{V_{k,h+1}^r}(s,a), \theta_h^*\rangle =  P _h V_{k,h+1}^r(s,a) \leq H$, and the last inequality is due to the Cauchy-Schwarz inequality. Again, since term (b) $\leq H^2$, we have 
    \[
    \text{Term (b)} \leq \min\left\{ H^2, 2H \left\|\left(\widehat\Sigma_{k,h}^r\right)^{-1/2}\phi_{V_{k,h+1}^r}(s,a)\right\|_2  \left\|\left(\widehat\Sigma_{k,h}^r\right)^{1/2}(\widehat\theta_{k,h}^r - \theta_h^*)\right\|_2 \right\}
    \]
    Combining those, we have the desired result. By applying the same argument to $g$, we conclude the proof.
\end{proof}

\begin{lemma}[Restatement of \Cref{lem: beta hat}]
    For any $(k,h,\ell) \in [H] \times [K] \times \{r,g\}$, with probability at least $1-3\delta$,
    $$\left\|\left(\widehat\Sigma_{k,h}^\ell\right)^{1/2}(\theta_h^* - \widehat\theta_{k,h}^\ell)\right\|_2 \leq \widehat \beta_k, \quad |\bbV_{h} V_{k,h+1}^\ell(s,a) - \bar\bbV_{h} V_{k,h+1}^\ell(s,a)|\leq E_{k,h}^\ell.$$
\end{lemma}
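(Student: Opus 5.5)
The plan is to follow the proof of Theorem 4.1 / Lemma C.1 in \cite{zhou2021nearly} (and its use in \cite{he2022near}), applied separately to each $\ell\in\{r,g\}$ and each $h\in[H]$. A union bound over $h$ and the two estimators $\widehat\theta$ and $\widetilde\theta$ then gives the $1-3\delta$ probability. Fix $\ell$ and $h$. Write the regression noise as $\epsilon_{\tau,h}^\ell = V_{\tau,h+1}^\ell(s_{h+1}^\tau) - \langle \phi_{V_{\tau,h+1}^\ell}(s_h^\tau,a_h^\tau),\theta_h^*\rangle$. It is a martingale difference with respect to $\calF_{\tau,h}$, because $V_{\tau,h+1}^\ell$ is $\calF_{\tau,1}$-measurable (see the remark after \Cref{def:filtration}) and $s_{h+1}^\tau\sim P_h(\cdot\mid s_h^\tau,a_h^\tau)$.

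\textbf{Step 1: the $\widetilde\theta$ confidence set.} This uses plain ridge regression on $(V_{\tau,h+1}^\ell)^2\in[0,H^2]$. The noise is $H^2$-bounded and the features satisfy $\|\phi_{V^2}\|_2\le H^2$ by \Cref{def:linear mixture}. The Bernstein self-normalized bound of \cite{zhou2021nearly} (Theorem 4.1), applied with a trivial variance bound $H^4$, gives $\|\widetilde\theta_{k,h}^\ell-\theta_h^*\|_{\widetilde\Sigma_{k,h}^\ell}\le\widetilde\beta_k$ for all $k$ with probability $1-\delta$.

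\textbf{Step 2: the $\check\beta$ confidence set.} Apply the same theorem to the weighted regression. The normalized noise $\epsilon_{\tau,h}^\ell/\bar\sigma_{\tau,h}^\ell$ is bounded by $H/\bar\sigma\le\sqrt d$, since $\bar\sigma^2\ge H^2/d$. Its conditional variance is at most $\bbV_hV_{\tau,h+1}^\ell/\bar\sigma^2\le d$, which is crude but unconditional. The normalized features satisfy $\|\phi/\bar\sigma\|\le \sqrt d/H\le 1$. This yields $\|\widehat\theta_{k,h}^\ell-\theta_h^*\|_{\widehat\Sigma_{k,h}^\ell}\le\check\beta_k$ for all $k$ with probability $1-\delta$. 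Combining with \Cref{prop: E step 1} gives $|\bar\bbV_hV_{k,h+1}^\ell-\bbV_hV_{k,h+1}^\ell|\le E_{k,h}^\ell$ on the intersection of the two events, for every $k$. This is the second claim.

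\textbf{Step 3: the sharp $\widehat\beta_k$ bound.} This step is the main obstacle. Consider the event from Step 2 together with Theorem 4.1 of \cite{zhou2021nearly}, applied to the weighted regression with the stopped variance bound
\[
\mathbb{E}\big[(\epsilon_{\tau,h}^\ell)^2/(\bar\sigma_{\tau,h}^\ell)^2\mid\calF_{\tau,h}\big]\,\mathds{1}\{\text{variance bound holds at }\tau\}\le 1 .
\]
On the event of Step 2, $\bbV_hV_{\tau,h+1}^\ell(s_h^\tau,a_h^\tau)\le\bar\bbV_hV_{\tau,h+1}^\ell+E_{\tau,h}^\ell\le(\bar\sigma_{\tau,h}^\ell)^2$, so the normalized variance is at most $1$. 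With variance $1$, noise bound $\sqrt d$ and feature norm $1$, Theorem 4.1 gives exactly $\widehat\beta_k=8\sqrt{d\log(1+k/\lambda)\log(8Hk^2/\delta)}+4\sqrt d\log(8Hk^2/\delta)+\sqrt\lambda B$.

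The subtlety is that the variance condition is only guaranteed on a high-probability event, whereas the martingale concentration requires it almost surely. I will handle this as in \cite{zhou2021nearly}. Replace the noise by the truncated version $\epsilon_{\tau,h}^\ell\mathds{1}\{\calE_\tau\}$, where $\calE_\tau$ is the $\calF_{\tau,h}$-measurable event that the Step 1--2 confidence bounds hold at round $\tau$. The truncated noise is still a martingale difference and satisfies the variance condition surely. On the good event the truncated and original estimators coincide.

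A final union bound over the three events (Steps 1, 2 and 3), with $\delta$ rescaled by $H$ and $\{r,g\}$ inside the logarithms already present in $\log(8Hk^2/\delta)$, gives total probability at least $1-3\delta$.
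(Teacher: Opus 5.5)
Your proposal is correct and is essentially the paper's proof. The steps match: a plain ridge bound for $\widetilde\theta$ with variance $H^4$; a crude weighted bound with variance $d$ giving $\check\beta_k$; then the noise truncated by the $\calF_{k,h}$-measurable indicator that both confidence sets contain $\theta_h^*$, so the conditional variance is at most $1$ and you get $\widehat\beta_k$; and finally a union bound with failure probability $\delta/(2H)$ per event over the three events, $h\in[H]$ and $\ell\in\{r,g\}$.

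One small slip: $V_{k,h+1}^\ell$ takes values in $[0,H]$, so you only have $\|\phi_{V_{k,h+1}^\ell}\|_2\le H$, and the normalized features satisfy $\|\phi_{V_{k,h+1}^\ell}/\bar\sigma_{k,h}^\ell\|_2\le\sqrt d$, not $\sqrt d/H\le 1$. It is exactly $L=\sqrt d$ that produces the $\log(1+k/\lambda)$ factor in $\check\beta_k$ and $\widehat\beta_k$, so your final radii are unaffected.
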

\begin{proof}
Consider the case of $r$. We prove the first statement as follows. Let
\begin{align*}
    &\widetilde\calC_{k,h}^r = \left\{ \theta_h^*\in \bbR^d:\left\|\left(\widetilde\Sigma_{k,h}^r\right)^{1/2}(\theta_h^* - \widetilde \theta_{k,h}^r)\right\|_2\leq \widetilde \beta_k\right\},\\
    &\check\calC_{k,h}^r = \left\{ \theta_h^*\in \bbR^d:\left\|\left(\widehat\Sigma_{k,h}^r\right)^{1/2}(\theta_h^* - \widehat \theta_{k,h}^r)\right\|_2\leq \check \beta_k\right\},\\
    &\calC_{k,h}^r = \left\{ \theta_h^*\in \bbR^d:\left\|\left(\widehat\Sigma_{k,h}^r\right)^{1/2}(\theta_h^* - \widehat \theta_{k,h}^r)\right\|_2\leq \widehat \beta_k\right\}.
\end{align*}
For a fixed $h\in [H]$, we first show that $\theta_h^* \in \widetilde\calC_{k,h}^r$ and $\theta_h^* \in \check \calC_{k,h}^r$ for all $k\in [K]$ with high probability. Based on these, we then show that $\theta_h^* \in \calC_{k,h}^r$.

\textbf{Proof of $\theta_h^* \in \widetilde\calC_{k,h}^r$. }
Fix $h \in [H]$. Take
    \begin{align*}
        &\mu^* = \theta_h^*,\ x_k = \phi_{(V_{k,h+1}^r)^2}(s_h^k, a_h^k),\ y_k = (V_{k,h+1}^r)^2(s_{h+1}^k),\\
        &\eta_k = (V_{k,h+1}^r)^2(s_{h+1}^k) - \langle \phi_{(V_{k,h+1}^r)^2}(s_{h}^k, a_h^k),\theta_h^*\rangle,\\
        &Z_k = \widetilde \Sigma_{k,h}^r = \lambda I + \sum_{i=1}^{k-1} \phi_{(V_{i,h+1}^r)^2}(s_h^i, a_h^i) \phi_{(V_{i,h+1}^r)^2}(s_h^i, a_h^i)^\top, \\
        &\mu_k = \widetilde \theta_{k,h} = \left(\widetilde \Sigma_{k,h}^r\right)^{-1} \left(\sum_{i=1}^{k-1} (V_{i,h+1}^r)^2(s_{h+1}^i) \phi_{(V_{i,h+1}^r)^2}(s_h^i, a_h^i)\right).
    \end{align*}
    Here, $V_{k,h+1}^r$ is $\calF_{k,1}$-measurable, $s_h^k, a_h^k$ are $\calF_{k,h}$-measurable. So, $x_k$ is $\calF_{k,h}$-measurable. Since $s_{h+1}^k$ is $\calF_{k,h+1}$-measurable, so is $\eta_k$. Furthermore, $\calF_{k,h+1}\subset \calF_{k+1,h}$, $\eta_k$ is $\calF_{k+1,h}$-measurable. Also, $\bbE[\eta_k\mid \calF_{k,h}] = 0$. Now, we apply \Cref{lem:zhou thm 4.1} with the following parameters.
    \begin{align*}
    \begin{aligned}
        \|x_k\|_2 \leq H^2,\ |\eta_k| \leq  H^2, \bbE\left[\eta_k^2 |\calF_{k,h}\right]\leq H^4.
    \end{aligned}
    \end{align*}
    Hence, with probability at least $1-\delta/(2H)$,
    \begin{equation}\label{eq:beta tilde}
        \forall k\geq 2,\ \left\|\left(\widetilde\Sigma_{k,h}^r\right)^{1/2}(\theta_h^* - \widetilde \theta_{k,h}^r)\right\|_2\leq \widetilde \beta_k.
    \end{equation}
    where 
    \[
    \widetilde\beta_k= 8H^2 \sqrt{d \log(1 + kH^4/(d\lambda)) \log(8Hk^2/\delta)} + 4H^2 \log(8Hk^2/\delta) + \sqrt{\lambda}B.
    \]
\textbf{Proof of $\theta_h^* \in \check\calC_{k,h}^r$. }
Fix $h \in [H]$. Take
    \begin{align*}
        &\mu^* = \theta_h^*,\ x_k = \phi_{V_{k,h+1}^r}(s_h^k, a_h^k)/\bar\sigma_{k,h}^r,\ y_k = V_{k,h+1}^r(s_{h+1}^k)/\bar\sigma_{k,h}^r,\\
        &\eta_k = \left(V_{k,h+1}^r(s_{h+1}^k) - \langle \phi_{V_{k,h+1}^r}(s_{h}^k, a_h^k),\theta_h^*\rangle\right) / \bar\sigma_{k,h}^r,\\
        &Z_k =\widehat\Sigma_{k,h}^r = \lambda I + \sum_{i=1}^{k-1} \phi_{V_{i,h+1}^r}(s_h^i, a_h^i) \phi_{V_{i,h+1}^r}(s_h^i, a_h^i)^\top / (\bar\sigma_{i,h}^r)^2, \\
        &\mu_k = \widehat \theta_{k,h}^r = \left(\widehat\Sigma_{k,h}^r\right)^{-1} \left(\sum_{i=1}^{k-1} V_{i,h+1}^r(s_{h+1}^i) \phi_{V_{i,h+1}^r}(s_h^i, a_h^i) / (\bar\sigma_{i,h}^r)^2\right).
    \end{align*}
    Here, $V_{k,h+1}^r$ is $\calF_{k,1}$-measurable, $\bar\sigma_{k,h}^r, s_h^k, a_h^k$ are $\calF_{k,h}$-measurable. So, $x_k$ is $\calF_{k,h}$-measurable. Since $s_{h+1}^k$ is $\calF_{k,h+1}$-measurable, so is $\eta_k$. Furthermore, $\calF_{k,h+1}\subset \calF_{k+1,h}$, $\eta_k$ is $\calF_{k+1,h}$-measurable. Also, $\bbE[\eta_k\mid \calF_{k,h}] = 0$. Now, we apply \Cref{lem:zhou thm 4.1} with the following parameters.
    \begin{align*}
    \begin{aligned}
        \|x_k\|_2 \leq H / \sqrt{H^2 / d}=\sqrt{d},\ |\eta_k| \leq  H / \sqrt{H^2 / d}=\sqrt{d},\ \bbE\left[\eta_k^2 | \calF_{k,h}\right]\leq d.
    \end{aligned}
    \end{align*}
    Hence, with probability at least $1-\delta/(2H)$,
    \begin{equation}\label{eq:beta check}
        \forall k\geq 2,\ \left\|\left(\widehat\Sigma_{k,h}^r\right)^{1/2}(\theta_h^* - \widehat \theta_{k,h}^r)\right\|_2\leq \check \beta_k.
    \end{equation}
    where 
    \[
    \check\beta_k= 8d\sqrt{\log(1 + k/\lambda) \log(8Hk^2/\delta)} + 4\sqrt{d} \log(8Hk^2/\delta) + \sqrt{\lambda}B.
    \]

\paragraph{Proof of $\theta_h^* \in \calC_{k,h}^r$.}
Fix $h\in [H]$. Take
\begin{align}\label{eq:calC parameters}
\begin{aligned}
    &\mu^* = \theta_h^*,\ x_k = \phi_{V_{k,h+1}^r}(s_h^k, a_h^k)/\bar\sigma_{k,h}^r,\ y_k = \langle \theta_h^*, x_k\rangle + \eta_k,\\
    &\eta_k = \mathds{1}\{\theta_h^* \in \widetilde\calC_{k,h}^r \cap \check\calC_{k,h}^r\}\left(V_{k,h+1}^r(s_{h+1}^k) - \langle \phi_{V_{k,h+1}^r}(s_{h}^k, a_h^k), \theta_h^* \rangle\right) / \bar\sigma_{k,h}^r,\\
    &Z_k = \widehat\Sigma_{k,h}^r= \lambda I + \sum_{i=1}^{k-1} \phi_{V_{i,h+1}^r}(s_h^i, a_h^i) \phi_{V_{i,h+1}^r}(s_h^i, a_h^i)^\top / (\bar\sigma_{i,h}^r)^2, \ \mu_k = Z_k^{-1} \sum_{i=1}^{k-1} x_i y_i.
\end{aligned}
\end{align}
Note that $\mathds{1}\{\theta_h^* \in \widetilde \calC_{k,h}^r \cap \check\calC_{k,h}^{r}\}$ is $\calF_{k,h}$-measurable. By applying the same argument in the proof of $\theta_h^* \in \check\calC_{k,h}^r$, it follows that
\begin{align*}
    \bbE[\eta_k^2 | \calF_{k,h}] 
    &\leq \mathds{1}\{\theta_h^* \in \widetilde \calC_{k,h}^r \cap \check\calC_{k,h}^{r}\} (\bar\sigma_{k,h}^r)^{-2} [\bbV_h V_{k,h+1}^{r}](s_h^k,a_h^k)\\
    &\leq \mathds{1}\{\theta_h^* \in \widetilde \calC_{k,h}^r \cap \check\calC_{k,h}^{r}\} (\bar\sigma_{k,h}^r)^{-2}\cdot 
    \Bigg(\bar\bbV_h V_{k,h+1}^r(s_h^k,a_h^k)\\
    &\quad+ \min\left\{H^2, \left\|\left(\widetilde\Sigma_{k,h}^r\right)^{-1/2}\phi_{(V_{k,h+1}^r)^2}(s_h^k,a_h^k)\right\|_2 \left\|\left(\widetilde\Sigma_{k,h}^r\right)^{1/2}(\widetilde\theta_{k,h} - \theta_h^*)\right\|_2\right\} \\
    &\quad+ \min\left\{H^2, 2H\left\|\left(\widehat \Sigma_{k,h}^r\right)^{-1/2}\phi_{V_{k,h+1}^r}(s_h^k,a_h^k)\right\|_2  \left\|\left(\widehat\Sigma_{k,h}^r\right)^{1/2}(\widehat\theta_{k,h} - \theta_h^*)\right\|_2\right\} \Bigg)\\
    &\leq (\bar\sigma_{k,h}^r)^{-2}\cdot 
    \Bigg(\bar\bbV_h V_{k,h+1}^r(s_h^k,a_h^k) + \min\left\{H^2, \widetilde\beta_k \left\|\left(\widetilde\Sigma_{k,h}^r\right)^{-1/2}\phi_{(V_{k,h+1}^r)^2}(s_h^k,a_h^k)\right\|_2\right\} \\
    &\quad+ \min\left\{H^2, \check\beta_k  \left\|\left(\widehat \Sigma_{k,h}^r\right)^{-1/2}\phi_{V_{k,h+1}^r}(s_h^k,a_h^k)\right\|_2\right\} \Bigg)\\
    &\leq  1
\end{align*}
where the second inequality follows from \Cref{prop: E step 1}, the second inequality is due to $\mathds{1}\{\theta_h^* \in \widetilde \calC_{k,h}^r \cap \check\calC_{k,h}^{r}\}$, and the last inequality is due to the definition of $\bar\sigma_{k,h}^r$. Again, by \Cref{lem:zhou thm 4.1}, with probability at least $1-\delta / (2H)$
\begin{align}\label{eq:beta hat}
    \left\|\left(\widehat\Sigma_{k,h}^r\right)^{1/2}(\theta_h^* - \mu_k)\right\|_2 \leq \widehat\beta_{k,h}
\end{align}
where $\mu_k$ is defined in \eqref{eq:calC parameters} and
\begin{align*}
    \widehat\beta_k = 8\sqrt{d\log(1+k/\lambda)\log(8Hk^2/\delta)} + 4\sqrt{d}\log(8Hk^2/\delta)+\sqrt{\lambda}B.
\end{align*}

By union bound, \eqref{eq:beta tilde}, \eqref{eq:beta check}, and \eqref{eq:beta hat} hold with probability at least $1-3\delta/(2H)$. On this event, since $\mathds{1}\{\theta_h^* \in \widetilde \calC_{k,h}^r \cap \check\calC_{k,h}^{r}\} = 1$, $y_k$ and $\mu_k$ defined in \eqref{eq:calC parameters} become
\begin{align*}
    y_k = V_{k,h+1}^r(s_{h+1}^k) / \bar\sigma_{k,h}^r, \ \mu_k = \widehat \theta_{k,h}^r.
\end{align*}
Thus, we have $\theta_h^* \in \calC_{k,h}^r$, as \eqref{eq:beta hat} is assumed to be true.
Moreover, we can apply the same argument to $g$. Finally, by taking union bound over $h \in [H]$ and $\ell \in \{r,g\}$, with probability at least $1-3\delta$, 
\begin{align}\label{eq:beta hat conclusion}
    \forall (k,h,\ell) \in [K] \times [H] \times \{r,g\}:\ \theta_h^* \in \calC_{k,h}^\ell.
\end{align}
Additionally, on the event \eqref{eq:beta tilde}, \eqref{eq:beta check}, and \eqref{eq:beta hat conclusion}, by \Cref{prop: E step 1}, the second statement of the lemma is proved.
\end{proof}

Additionally, we introduce the following lemma, which follows from \Cref{lem: beta hat}.
\begin{lemma}\label{lem:optimism 1}
    Suppose that the statement of \Cref{lem: beta hat} holds. For any $(s,a,h,k)$,
    \begin{align*}
        & r_h^k(s,a) +  P _hV_{k,h+1}^r(s,a) - Q_{k,h}^r(s,a) \leq 0, \ g_h(s,a) +  P _hV_{k,h+1}^g(s,a) - Q_{k,h}^g(s,a) \leq 0.
    \end{align*}
    Moreover, for all $(s,a,h,k) \in \calS\times\calA \times [H] \times [K]$,
    \begin{align*}
        &V_{k,h}^r(s) \geq V_{k,h}^{r^k, \pi^k}(s),\ Q_{k,h}^r(s,a) \geq Q_{k,h}^{r^k, \pi^k}(s,a), \\
        &V_{k,h}^g(s) \geq V_{k,h}^{g, \pi^k}(s),\ Q_{k,h}^g(s,a) \geq Q_{k,h}^{g, \pi^k}(s,a).
    \end{align*}
\end{lemma}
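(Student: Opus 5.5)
The first inequality is a one-step comparison obtained from the confidence ellipsoid in \Cref{lem: beta hat}. The value inequalities then follow by backward induction on $h$, as in the standard optimism argument of \cite{zhou2021nearly, he2022near}.

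\textbf{Step 1 (one-step optimism).} Fix $(s,a,h,k)$ and take $\ell=r$; the case $\ell=g$ is identical with $r_h^k$ replaced by $g_h$. By \Cref{def:linear mixture}, $P_hV_{k,h+1}^r(s,a)=\langle\theta_h^*,\phi_{V_{k,h+1}^r}(s,a)\rangle$. Cauchy--Schwarz and \Cref{lem: beta hat} give
\[
\bigl|\langle\widehat\theta_{k,h}^r-\theta_h^*,\phi_{V_{k,h+1}^r}(s,a)\rangle\bigr|\le \widehat\beta_k\bigl\|\phi_{V_{k,h+1}^r}(s,a)\bigr\|_{(\widehat\Sigma_{k,h}^r)^{-1}}.
\]
Hence the unclipped quantity $r_h^k(s,a)+\langle\widehat\theta_{k,h}^r,\phi_{V_{k,h+1}^r}(s,a)\rangle+\widehat\beta_k\|\phi_{V_{k,h+1}^r}(s,a)\|_{(\widehat\Sigma_{k,h}^r)^{-1}}$ is at least $r_h^k(s,a)+P_hV_{k,h+1}^r(s,a)$.

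It remains to handle the clipping to $[0,H-h+1]$ in \eqref{eq:Q}. Clipping from below can only increase the value, so it preserves the inequality. For clipping from above, note that $V_{k,h+1}^r$ is an average of $Q_{k,h+1}^r$, which takes values in $[0,H-h]$; the terminal case $V_{k,H+1}^r=0$ is the initialization. Therefore $r_h^k+P_hV_{k,h+1}^r\le 1+(H-h)=H-h+1$. So whenever the upper clip is active, $Q_{k,h}^r=H-h+1$ still dominates $r_h^k+P_hV_{k,h+1}^r$. This proves the first display.

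\textbf{Step 2 (backward induction for values).} I read $V_{k,h}^{r^k,\pi^k}$ as $V_h^{r^k,\pi^k}$. The induction hypothesis at step $h+1$ is $V_{k,h+1}^r\ge V_{h+1}^{r^k,\pi^k}$ pointwise; the base case $h=H+1$ holds since both sides are $0$. Assuming it, Step 1 and the Bellman equation give
\[
Q_{k,h}^r(s,a)\ge r_h^k(s,a)+P_hV_{k,h+1}^r(s,a)\ge r_h^k(s,a)+P_hV_{h+1}^{r^k,\pi^k}(s,a)=Q_h^{r^k,\pi^k}(s,a),
\]
using that $P_h$ is a positive operator. Averaging both sides over $\pi_h^k(\cdot|s)$ gives $V_{k,h}^r(s)\ge V_h^{r^k,\pi^k}(s)$, which closes the induction. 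The same argument with $g$ gives the constraint statements.

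\textbf{Main obstacle.} The only delicate point is the upper truncation in Step 1: it requires the range bound $V_{k,h+1}^\ell\le H-h$, which is inherited from the truncation at the previous step. Everything else is Cauchy--Schwarz plus the event of \Cref{lem: beta hat}, which holds uniformly over $(k,h,\ell)$, so no further union bound is needed.
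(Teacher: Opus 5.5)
Your proposal is correct and follows essentially the same route as the paper. The paper also bounds the unclipped quantity via Cauchy--Schwarz and the confidence bound of \Cref{lem: beta hat}, uses $r_h^k+P_hV_{k,h+1}^r\le H-h+1$ to show the truncation preserves the inequality, and then runs backward induction from $h=H+1$ with the Bellman equation and averaging over $\pi_h^k$. Your justification of that range bound, via the truncation of $Q_{k,h+1}^r$ to $[0,H-h]$, spells out a step the paper only asserts.
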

\begin{proof}
Note that
\begin{align*}
    r_h^k(s,a)+ \langle\widehat\theta_{k,h}^r, \phi_{V_{k,h+1}^r}(s,a)\rangle + \widehat\beta_k \left\|\left(\widehat\Sigma_{k,h}^r\right)^{-1/2} \phi_{V_{k,h+1}^r}(s,a)\right\|_2
    &\geq r_h^k(s,a) + [P_h V_{k,h+1}^r](s,a) + \widehat\beta_k \left\|\left(\widehat\Sigma_{k,h}^r\right)^{-1/2} \phi_{V_{k,h+1}^r}(s,a)\right\|_2 \\
    &\quad- \left\|\left(\widehat \Sigma_{k,h}^r\right)^{1/2}(\theta_h^* - \widehat \theta_{k,h}^r)\right\|_2
    \left\|\left(\widehat \Sigma_{k,h}^r\right)^{-1/2}\phi_{V_{k,h+1}^r}(s,a)\right\|_2 \\
    &\geq r_h^k(s,a) + [P_h V_{k,h+1}^r](s,a)
\end{align*}
where the first inequality is due to the triangle inequality and the Cauchy-Schwarz inequality, and the second inequality is due to \Cref{lem: beta hat}. Note that $r_h^k(s,a) + [P_h V_{k,h+1}^r](s,a) \leq H-h+1$. Then it follows that
\begin{align*}
    \left[r_h^k(s,a)+ \langle\widehat\theta_{k,h}^r, \phi_{V_{k,h+1}^r}(s,a)\rangle + \widehat\beta_k \left\|\left(\widehat\Sigma_{k,h}^r\right)^{-1/2} \phi_{V_{k,h+1}^r}(s,a)\right\|_2\right]_{[0,H-h+1]}\geq r_h^k(s,a) + [P_h V_{k,h+1}^r](s,a).
\end{align*}
Note that the left-hand side is equal to $Q_{k,h}^r(s,a)$. Moreover, we can apply the same argument to $g$. Then we conclude the proof of the first statement. 

Next, we prove the second statement by induction. Consider the case of $r$. For $h=H+1$, recall that $V_{k,H+1}^r(s) = V_{H+1}^{r^k, \pi^k}(s) = 0$ and $Q_{k,H+1}^r(s,a) = Q_{H+1}^{r^k, \pi^k}(s,a) = 0$. Now, suppose that $V_{k,h+1}^r(s) \geq V_{h+1}^{r^k, \pi^k}(s)$ and $Q_{k,h+1}^r(s,a) \geq  Q_{h+1}^{r^k, \pi^k}(s,a)$ for all $(s,a)$. It follows that
\begin{align*}
    Q_{k,h}^r(s,a) 
    &\geq r_h^k(s,a) + P_h V_{k,h+1}^r(s,a)\\ 
    &\geq r_h^k(s,a) + P_h V_{h+1}^{r^k,\pi^k}(s,a) \\
    &=Q_{h}^{r^k,\pi^k}(s,a)
\end{align*}
where the first inequality follows from the first statement, the second inequality follows from the induction hypothesis, and the equality is due to the Bellman equation. Furthermore, it follows that
\begin{align*}
    V_{k,h}^r(s)=\sum_a\pi_h^k(a|s)Q_{k,h}^r(s,a) \geq \sum_a\pi_h^k(a|s)Q_{h}^{r^k,\pi^k}(s,a) = V_{h}^{r^k,\pi^k}(s).
\end{align*}
This concludes the induction. Since the same argument can be applied to $g$, the proof is completed.
\end{proof}

\section{Proof of Lemma~\ref{lem:bias}}

\begin{lemma}\label{lem:ltv}
With probability at least $1-\delta$,
    \begin{align*}
        \sum_{k=1}^K \sum_{h=1}^H \bbV_h V_{h+1}^{r^k,\pi^k}(s_h^k, a_h^k) \leq 3(HT + H^3\log(2/\delta)),\quad
        \sum_{k=1}^K \sum_{h=1}^H \bbV_h V_{h+1}^{g,\pi^k}(s_h^k, a_h^k) \leq 3(HT + H^3\log(2/\delta)).
    \end{align*}
\end{lemma}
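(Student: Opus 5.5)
This is the law-of-total-variance bound. Here $T = KH$, and we use the true value functions of the executed policies $\pi^k$, which are $\calF_{k,1}$-measurable. We treat $\ell = r$; the case $\ell = g$ is identical. A union bound with $\delta/2$ for each of $\ell=r$ and $\ell=g$ gives the overall probability $1-\delta$.

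\textbf{Step 1: reduce to the return variance of a single episode.} Fix $k$ and write $V_h := V_h^{r^k,\pi^k}$. By the Bellman equation, $V_h(s_h^k) = \bbE_{a\sim\pi_h^k}[r_h^k(s_h^k,a) + P_hV_{h+1}(s_h^k,a)]$. Summing the conditional variances along the trajectory gives the law of total variance (Lemma C.5 of Jin et al.\ / Lemma 3 of Azar et al.):
\[
\bbE\Big[\sum_{h=1}^H \bbV_h V_{h+1}(s_h^k,a_h^k)\,\Big|\,\calF_{k,1}\Big] \le \mathrm{Var}\Big(\sum_{h=1}^H r_h^k(s_h^k,a_h^k)\,\Big|\,\calF_{k,1}\Big) \le H^2 .
\]
More precisely, the total variance decomposes into the sum of the expected transition variances plus the expected action-selection variances. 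Both are nonnegative, so dropping the action part yields the inequality. Summed over $k$, the conditional expectations are therefore at most $KH^2 = HT$.

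\textbf{Step 2: concentrate the realized sum around its conditional mean.} Set
\[
X_k = \sum_{h=1}^H \bbV_h V_{h+1}(s_h^k,a_h^k), \qquad 0 \le X_k \le H\cdot H^2 = H^3 .
\]
The differences $X_k - \bbE[X_k\mid\calF_{k,1}]$ form a martingale difference sequence with respect to $\{\calF_{k+1,1}\}$. This holds because $r^k$ and $\pi^k$ are $\calF_{k,1}$-measurable, even though the rewards are adversarial.

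To get the constant $3$ rather than a $\sqrt{K}$ term, apply a Freedman/Bernstein-type inequality for nonnegative bounded variables. The multiplicative form is $\sum_k X_k \le 2\sum_k \bbE[X_k\mid\calF_{k,1}] + cH^3\log(1/\delta)$, which follows from the exponential supermartingale $\exp(\lambda\sum_k (X_k - (e-1)\bbE[X_k\mid\cdot]))$ with $\lambda = 1/H^3$. Plugging in Step 1 gives
\[
\sum_k X_k \le 2HT + O(H^3\log(2/\delta)) \le 3(HT + H^3\log(2/\delta)).
\]
Alternatively, use Freedman with variance proxy $\sum_k \bbE[X_k^2\mid\cdot] \le H^3\cdot HT$, followed by AM-GM: $\sqrt{2H^4T\log} \le HT + H^3\log$. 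This also lands within the constant $3$.

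\textbf{Main obstacle.} Step 1 is the delicate step. The quantity $\bbV_h$ is the transition variance only, evaluated at the realized action. The cleanest route is to expand
\[
\Big(\sum_h r_h - V_1(s_1)\Big)^2 = \Big(\sum_h \big(r_h + V_{h+1}(s_{h+1}) - V_h(s_h)\big)\Big)^2 .
\]
The summands are martingale differences with respect to the within-episode filtration, so the cross terms vanish in expectation. Each increment then splits into
\[
\big[V_{h+1}(s_{h+1}) - P_hV_{h+1}(s_h,a_h)\big] + \big[Q_h(s_h,a_h) - V_h(s_h)\big],
\]
which is again orthogonal. The first piece contributes exactly $\bbE[\bbV_hV_{h+1}(s_h,a_h)]$, and the second contributes a nonnegative term. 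The bound $H^2$ on the left-hand side holds because the return lies in $[0,H]$.
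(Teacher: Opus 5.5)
Your proof is correct and takes the same route as the paper, which simply cites Lemma C.5 of \cite{jin2018q}. That lemma rests on the same two steps: the law of total variance bounds each episode's conditional expected sum of transition variances by $H^2$, and a Bernstein/Freedman-type martingale inequality over episodes, with a $\delta/2$ union bound over $\ell\in\{r,g\}$, yields $3(HT+H^3\log(2/\delta))$. Your write-up is actually more complete than the citation, since it handles the two features absent from Jin et al.'s setting: stochastic policies (you drop the nonnegative action-selection variance) and adversarial rewards (you use that $r^k$ and $\pi^k$ are $\calF_{k,1}$-measurable).
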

\begin{proof}
    The statement is proved by Lemma C.5 in \cite{jin2018q}.
\end{proof}

\begin{lemma}[Lemma C.3 of \cite{zhou2021nearly}]\label{lem:mds term}
    For any $h\in[H]$, with probability at least $1-\delta$,
    \begin{align*}
        \sum_{k=1}^K \sum_{j=h}^H\left[ P_j(V_{k,j+1}^r-V_{j+1}^{r^k,\pi^k})(s_j^k,a_j^k) - (V_{k,j+1}^r - V_{j+1}^{r^k,\pi^k})(s_{j+1}^k) \right] \leq 4H \sqrt{2T\log(2H/\delta)},\\
        \sum_{k=1}^K \sum_{j=h}^H\left[ P_j(V_{k,j+1}^g-V_{j+1}^{g,\pi^k})(s_j^k,a_j^k) - (V_{k,j+1}^g - V_{j+1}^{g,\pi^k})(s_{j+1}^k) \right] \leq 4H \sqrt{2T\log(2H/\delta)}.
    \end{align*}
\end{lemma}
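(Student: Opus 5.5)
The statement to prove is the last one above, \Cref{lem:mds term}: for each fixed $h$, a martingale sum concentrates. The plan is to write it as a sum of martingale differences and apply Azuma--Hoeffding.

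\textbf{Step 1: the increments.} For $(k,j)$ with $j\ge h$, set
\[
\xi_{k,j} = P_j\Delta_{k,j+1}(s_j^k,a_j^k) - \Delta_{k,j+1}(s_{j+1}^k),
\qquad
\Delta_{k,j+1} = V_{k,j+1}^r - V_{j+1}^{r^k,\pi^k}.
\]
The function $\Delta_{k,j+1}$ is $\calF_{k,1}$-measurable. This holds because $V_{k,j+1}^r$ is built from data of episodes $1,\dots,k-1$ together with $r^k$, which is fixed at the start of episode $k$. The function $V_{j+1}^{r^k,\pi^k}$ depends only on $r^k$, $\pi^k$ and the true $P$.

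\textbf{Step 2: martingale structure.} Order the pairs $(k,j)$ lexicographically, giving a single sequence of length at most $KH = T$. Take the filtration $\calF_{k,j}$, which contains $s_j^k$ and $a_j^k$. Since $s_{j+1}^k \sim P_j(\cdot\mid s_j^k,a_j^k)$ given $\calF_{k,j}$, we get
\[
\bbE[\xi_{k,j}\mid \calF_{k,j}] = 0,
\]
and $\xi_{k,j}$ is $\calF_{k,j+1}$-measurable. Across episode boundaries, $\calF_{k,H+1}\subseteq\calF_{k+1,1}$ (with the convention $\calF_{k,H+1}$ for the last step). Hence $\{\xi_{k,j}\}$ is a martingale difference sequence with respect to this nested filtration.

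\textbf{Step 3: boundedness.} Both $V_{k,j+1}^r$ and $V_{j+1}^{r^k,\pi^k}$ lie in $[0,H]$: the first because of the clipping in \eqref{eq:Q}, the second because rewards lie in $[0,1]$. So $|\Delta_{k,j+1}|\le H$ and $|\xi_{k,j}|\le 2H$.

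\textbf{Step 4: concentration.} Azuma--Hoeffding gives, with probability at least $1-\delta/(2H)$,
\[
\sum_{k,j}\xi_{k,j} \le 2H\sqrt{2T\log(2H/\delta)}\le 4H\sqrt{2T\log(2H/\delta)}.
\]
Take a union bound over $h\in[H]$, and do the same for $\ell=g$ (where $\Delta$ uses $V_{k,j+1}^g - V_{j+1}^{g,\pi^k}$). The total failure probability is then at most $\delta$, which matches \Cref{lem:mds term}.

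\textbf{Main obstacle.} The only delicate point is the measurability in Steps 1--2. The reward $r^k$ is adversarial, but it is chosen at the start of episode $k$ and included in $\calF_{k,1}$. Therefore the value estimates do not depend on the future transition $s_{j+1}^k$, and the increments really are martingale differences. Once this is checked, the rest is routine.
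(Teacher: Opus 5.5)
Your proof is correct and follows the same route as the paper. The summands form a martingale difference sequence under the lexicographically ordered filtration, Azuma--Hoeffding is applied for each fixed $h$ at level $\delta/(2H)$, and a union bound over $h\in[H]$ and $\ell\in\{r,g\}$ gives probability $1-\delta$. Your increment bound of $2H$ is tighter than the paper's $4H$, so you obtain the stated bound with a factor of two to spare.
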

\begin{proof}
    Consider the case of $r$. Recall that $\{r_h^k\}_{h\in [H]}$, $\{V_{k,h+1}^r\}_{h\in [H]}$, and $\{\pi_h^k\}_{h\in [H]}$ are $\calF_{k,1}$-measurable, $s_h^k, a_h^k$ are $\calF_{k,h}$-measurable, and $s_{h+1}^k$ is $\calF_{k,h+1}$-measurable. Let
    \[
        X_{k,h}=  P_h(V_{k,h+1}^r-V_{h+1}^{r^k,\pi^k})(s_h^k,a_h^k) - (V_{k,h+1}^r - V_{h+1}^{r^k,\pi^k})(s_{h+1}^k).
    \]
    For any $h\in[H]$, the following is a martingale difference sequence:
    $$\{X_{1,h}, \ldots, X_{1,H}, X_{2,h}, \ldots, X_{2,H}, \ldots, X_{K,h},\ldots, X_{K,H}\}.$$ 
    Moreover, we have
    $\left| P_h(V_{k,h+1}^r-V_{h+1}^{r^k,\pi^k})(s_h^k,a_h^k) - (V_{k,h+1}^r - V_{h+1}^{r^k,\pi^k})(s_{h+1}^k)\right| \leq 4H$.
    Then the Azuma-Hoeffding inequality implies that for a given $h\in[H]$, with probability at least $1-\delta/(2H)$,
    \begin{align*}
        \sum_{k=1}^K \sum_{j=h}^H  P_j(V_{k,j+1}^r-V_{j+1}^{r^k,\pi^k})(s_j^k,a_j^k) - (V_{k,j+1}^r - V_{j+1}^{r^k,\pi^k})(s_{j+1}^k) \leq  4H\sqrt{2T\log(2H/\delta)}.
    \end{align*}
    By union bound over $h\in [H]$, we can prove that the above inequality holds for all $j$ with probability at least $1-\delta/2$. Moreover, the same argument can be applied to $g$, and by union bound, the statement of the lemma holds with probability at least $1-\delta$.
\end{proof}

\begin{lemma}\label{lem:mds term for V - Q}
    For any $h \in [H]$, with probability at least $1-\delta$,
    \begin{align*}
        &\sum_{k=1}^K\sum_{j=h}^H \left[V_{k,j}^r(s_j^k) - V_j^{r^k,\pi^k}(s_j^k) - Q_{k,j}^r(s_j^k,a_j^k) + Q_j^{r^k,\pi^k}(s_j^k,a_j^k)\right]\leq 4H\sqrt{2T\log(2H/\delta)}, \\
        & \sum_{k=1}^K\sum_{j=h}^H \left[V_{k,j}^g(s_j^k) - V_j^{g,\pi^k}(s_j^k) - Q_{k,j}^g(s_j^k,a_j^k) + Q_j^{g,\pi^k}(s_j^k,a_j^k)\right]\leq 4H\sqrt{2T\log(2H/\delta)}.
    \end{align*}
\end{lemma}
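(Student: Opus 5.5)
The plan is to mirror the proof of \Cref{lem:mds term}: write the summand as a martingale difference sequence with respect to the filtrations of \Cref{def:filtration}, and then apply the Azuma--Hoeffding inequality. Consider the case of $r$ and define
\[
X_{k,j} = \big(V_{k,j}^r(s_j^k) - V_j^{r^k,\pi^k}(s_j^k)\big) - \big(Q_{k,j}^r(s_j^k,a_j^k) - Q_j^{r^k,\pi^k}(s_j^k,a_j^k)\big).
\]

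\textbf{Step 1: measurability.} The functions $V_{k,j}^r$, $Q_{k,j}^r$ and $\pi^k$ are $\calF_{k,1}$-measurable, and so are $r^k$ and hence $V_j^{r^k,\pi^k}$ and $Q_j^{r^k,\pi^k}$. The state $s_j^k$ is $\calG_{k,j}$-measurable, while $a_j^k\sim\pi_j^k(\cdot|s_j^k)$ is drawn conditionally on $\calG_{k,j}$.

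\textbf{Step 2: zero conditional mean.} By the definition $V_{k,j}^r(s)=\sum_a\pi_j^k(a|s)Q_{k,j}^r(s,a)$ in \Cref{alg:main}, and the analogous identity $V_j^{r^k,\pi^k}(s)=\sum_a \pi_j^k(a|s)Q_j^{r^k,\pi^k}(s,a)$, we get $\bbE[X_{k,j}\mid\calG_{k,j}]=0$. Ordering the indices as $X_{1,h},\dots,X_{1,H},X_{2,h},\dots,X_{K,H}$, with the nested $\sigma$-algebras $\calG_{1,h}\subset\calF_{1,h}\subset\calG_{1,h+1}\subset\cdots$, gives a martingale difference sequence. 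For this nesting we use $\calF_{k,H}\subset\calG_{k+1,h}$, which holds because $r^{k+1}$ enters $\calG_{k+1,\cdot}$ and $X_{k,j}$ only depends on $r^k$.

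\textbf{Step 3: boundedness and concentration.} All four quantities lie in $[0,H]$: $Q_{k,j}^r$ is clipped to $[0,H-j+1]$, and $V_{k,j}^r$ is an average of it. Hence $|X_{k,j}|\le 2H\le 4H$. There are at most $T=KH$ terms, so Azuma--Hoeffding gives, for fixed $h$, a bound $4H\sqrt{2T\log(2H/\delta)}$ with probability at least $1-\delta/(2H)$. A union bound over $h\in[H]$ then gives probability $1-\delta/2$, and repeating the argument for $g$ and taking a further union bound yields $1-\delta$.

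\textbf{Main obstacle.} The only delicate point is the filtration bookkeeping. The reward $r^k$ is chosen adversarially but is fixed at the start of episode $k$, and $Q_{k,j}^r$ depends on $r^k$ even though the policy $\pi^k$ was chosen before $r^k$ was revealed. The argument therefore must condition on $\calG_{k,j}$, which contains $r^k$ and $s_j^k$ but not $a_j^k$, so that $a_j^k$ is the sole source of randomness in $X_{k,j}$. This requires that the adversary's choice of $r^k$ does not depend on $a_j^k$, which is consistent with $r^k$ being $\calF_{k,1}$-measurable as stated after \Cref{def:filtration}.
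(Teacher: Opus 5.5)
Your proposal is correct and follows the paper's proof. Both condition on $\calG_{k,j}$, so that $a_j^k\sim\pi_j^k(\cdot|s_j^k)$ is the only randomness and the summands form a martingale difference sequence, and then apply Azuma--Hoeffding with a union bound over $h\in[H]$ and over $r,g$. Your extra bookkeeping (the interleaved filtration and the range bound $|X_{k,j}|\le 2H$) fills in details the paper leaves implicit. One small point: $\calF_{k,H}\subset\calG_{k+1,h}$ holds simply because $\calG_{k+1,h}$ contains all of episode $k$'s state--action pairs and $r^1,\dots,r^{k}$, not because of anything involving $r^{k+1}$.
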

\begin{proof}
Consider the case of $r$. Note that $\bbE[Q_{k,j}^r(s_j^k,a_j^k) - Q_j^{r^k,\pi^k}(s_j^k,a_j^k) | \calG_{k,j}] = V_{k,j}^r(s_j^k) - V_j^{r^k,\pi^k}(s_j^k)$, where the only randomness comes from $a_j^k \sim \pi_j^k(\cdot|s_j^k)$. Therefore, using the Azuma-Hoeffding inequality, it can be bounded with probability at least $1-\delta / (2H)$,
\begin{align*}
    \sum_{k=1}^K\sum_{j=h}^H \left[V_{k,j}^r(s_j^k) - V_j^{r^k,\pi^k}(s_j^k) - Q_{k,j}^r(s_j^k,a_j^k) + Q_j^{r^k,\pi^k}(s_j^k,a_j^k)\right] \leq 4H\sqrt{2T\log(2H/\delta)}.
\end{align*}
By union bound over $h\in [H]$, we can prove that the above inequality holds for all $h$ with probability at least $1-\delta/2$. Moreover, the same argument can be applied to $g$, and by union bound, the statement of the lemma holds with probability at least $1-\delta$.
\end{proof}

\begin{lemma}[Lemma C.5 of \cite{zhou2021nearly}]\label{lem: zhou lemma c.5}
On the good event $\calE$ (\Cref{def:good event}), for any $h\in[H]$, we have
    \begin{align*}
        &\sum_{k=1}^K \left(V_{k,h}^r(s_h^k) - V_{h}^{r^k,\pi^k}(s_h^k)\right)\leq  2\widehat\beta_K \sqrt{\sum_{k=1}^K\sum_{j=1}^H (\bar\sigma_{k,j}^r)^2}\sqrt{2Hd\log(1 + K/\lambda)}+8H \sqrt{2T\log(2H/\delta)},\\
        &\sum_{k=1}^K \left(V_{k,h}^g(s_h^k) - V_{h}^{g,\pi^k}(s_h^k)\right)\leq  2\widehat\beta_K \sqrt{\sum_{k=1}^K\sum_{j=1}^H (\bar\sigma_{k,j}^g)^2}\sqrt{2Hd\log(1 + K/\lambda)}+8H \sqrt{2T\log(2H/\delta)}.
    \end{align*}
\end{lemma}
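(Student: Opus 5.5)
We would prove this the same way as Lemma C.5 of \cite{zhou2021nearly}: a telescoping decomposition of the optimistic value gap along the realized trajectory, followed by elliptical-potential bounds. We treat $\ell=r$; the case $\ell=g$ is identical with $g_h$ in place of $r_h^k$.

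\textbf{Step 1 (decomposition).} Fix $h$ and work on $\calE$. Write
\[
V_{k,j}^r(s_j^k)-V_j^{r^k,\pi^k}(s_j^k)=\big[V_{k,j}^r-V_j^{r^k,\pi^k}\big](s_j^k)-\big[Q_{k,j}^r-Q_j^{r^k,\pi^k}\big](s_j^k,a_j^k)+\big[Q_{k,j}^r-Q_j^{r^k,\pi^k}\big](s_j^k,a_j^k).
\]
For the last bracket we use the definition \eqref{eq:Q} without the clipping. The upper clip only decreases $Q_{k,j}^r$. The lower clip at $0$ is never active under \Cref{lem: beta hat}, because the unclipped quantity is at least $r_j^k+P_jV_{k,j+1}^r\ge 0$, as in \Cref{lem:optimism 1}. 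Together with the Bellman equation this gives
\[
Q_{k,j}^r-Q_j^{r^k,\pi^k}\le \langle\widehat\theta_{k,j}^r-\theta_j^*,\phi_{V_{k,j+1}^r}\rangle+\widehat\beta_k\|\phi_{V_{k,j+1}^r}\|_{(\widehat\Sigma_{k,j}^r)^{-1}}+P_j\big(V_{k,j+1}^r-V_{j+1}^{r^k,\pi^k}\big).
\]
By Cauchy--Schwarz and \Cref{lem: beta hat}, the first two terms are at most $2\widehat\beta_k\|\phi_{V_{k,j+1}^r}(s_j^k,a_j^k)\|_{(\widehat\Sigma_{k,j}^r)^{-1}}$. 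The left side is also at most $H$, so we may take the minimum with $H$.

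We then replace $P_j(\cdot)(s_j^k,a_j^k)$ by $(\cdot)(s_{j+1}^k)$ plus a martingale difference and telescope from $j=h$ to $H$. Since $V_{k,j}^r-V_j^{r^k,\pi^k}\ge 0$ by \Cref{lem:optimism 1}, this yields
\begin{align*}
\sum_{k}\big(V_{k,h}^r-V_h^{r^k,\pi^k}\big)(s_h^k)\le \sum_{k}\sum_{j=h}^H \min\Big\{H,2\widehat\beta_k\|\phi_{V_{k,j+1}^r}(s_j^k,a_j^k)\|_{(\widehat\Sigma_{k,j}^r)^{-1}}\Big\}+M_1+M_2.
\end{align*}
Here $M_1$ is the sum in \Cref{lem:mds term} and $M_2$ is the sum in \Cref{lem:mds term for V - Q}. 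On $\calE_3\cap\calE_4$ each is at most $4H\sqrt{2T\log(2H/\delta)}$, which gives the $8H\sqrt{2T\log(2H/\delta)}$ term.

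\textbf{Step 2 (elliptical potential).} This is the main step. Because $\bar\sigma_{k,j}^r\ge H/\sqrt d$, we factor $\bar\sigma_{k,j}^r$ out of each summand, writing $x_{k,j}=\phi_{V_{k,j+1}^r}(s_j^k,a_j^k)/\bar\sigma_{k,j}^r$:
\[
\min\{H,2\widehat\beta_k\|\phi\|_{\widehat\Sigma^{-1}}\}\le 2\widehat\beta_K\,\bar\sigma_{k,j}^r\min\{1,\|x_{k,j}\|_{(\widehat\Sigma_{k,j}^r)^{-1}}\}.
\]
This uses $\widehat\beta_k\ge\sqrt d$, which gives $H\le \widehat\beta_k\bar\sigma_{k,j}^r$, together with the monotonicity of $\widehat\beta_k$ in $k$.

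Cauchy--Schwarz over $(k,j)$ then gives
\[
2\widehat\beta_K\sqrt{\sum_{k,j}(\bar\sigma_{k,j}^r)^2}\sqrt{\sum_{j}\sum_k\min\{1,\|x_{k,j}\|^2_{(\widehat\Sigma_{k,j}^r)^{-1}}\}}.
\]
Since $\widehat\Sigma_{k+1,j}^r=\widehat\Sigma_{k,j}^r+x_{k,j}x_{k,j}^\top$ and $\|x_{k,j}\|_2\le\sqrt d/H\cdot H$ is bounded, the elliptical potential lemma bounds each inner sum over $k$ by $2d\log(1+K/\lambda)$. Here the normalization is what makes the $\log(1+K/\lambda)$ form come out, as in \cite{zhou2021nearly}. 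Summing over the $H$ values of $j$ gives the factor $\sqrt{2Hd\log(1+K/\lambda)}$.

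The delicate points are two. First, we must check that the weights make the $\min\{H,\cdot\}$ truncation compatible with $\min\{1,\|x\|\}$. Second, we must track that the sum over $j\ge h$ is dominated by the sum over all $j$, which is what appears in the statement.
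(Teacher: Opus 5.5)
Your proposal is correct and follows the paper's proof step for step. It uses the same three-way split into the $V$--$Q$ martingale, the $P$--next-state martingale, and the truncated bonus term, using nonnegativity of $P_j(V_{k,j+1}^r-V_{j+1}^{r^k,\pi^k})$ to pull it out of the $\min\{H,\cdot\}$. It then factors out $\bar\sigma_{k,j}^r$, uses $\widehat\beta_k\bar\sigma_{k,j}^r\ge H$ and monotonicity of $\widehat\beta_k$, and finishes with Cauchy--Schwarz and the elliptical potential lemma with $\|x_{k,j}\|_2\le\sqrt d$. Your treatment of the lower clip, noting that the unclipped value is at least $r_j^k+P_jV_{k,j+1}^r\ge 0$, justifies the first inequality a little more carefully than the paper does, but the argument is the same.
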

\begin{proof}
Consider the case of $r$. Note that
\begin{align*}
    &V_{k,h}^r(s_h^k) - V_h^{r^k,\pi^k}(s_h^k) \\
    &= V_{k,h}^r(s_h^k) - V_h^{r^k,\pi^k}(s_h^k) - Q_{k,h}^r(s_h^k,a_h^k) + Q_h^{r^k,\pi^k}(s_h^k,a_h^k) + Q_{k,h}^r(s_h^k,a_h^k) - Q_h^{r^k,\pi^k}(s_h^k,a_h^k).
\end{align*}
Note that $Q_{k,h}^r(s_h^k,a_h^k) - Q_h^{r^k,\pi^k}(s_h^k,a_h^k)$ can be further derived as follows.
\begin{align*}
    Q_{k,h}^r(s_h^k,a_h^k) - Q_h^{r^k,\pi^k}(s_h^k,a_h^k)
    &\leq \langle \phi_{V_{k,h+1}^r}(s_h^k,a_h^k),\widehat\theta_{k,h}^r\rangle + \widehat\beta_k\left\|\left(\widehat\Sigma_{k,h}^r\right)^{-1/2}\phi_{V_{k,h+1}^r}(s_h^k,a_h^k)\right\|_2 -  P _h V_{h+1}^{r^k,\pi^k}(s_h^k,a_h^k)\\
    &= \langle \phi_{V_{k,h+1}^r}(s_h^k,a_h^k),\widehat\theta_{k,h}^r - \theta_h^*\rangle + \widehat\beta_k\left\|\left(\widehat\Sigma_{k,h}^r\right)^{-1/2}\phi_{V_{k,h+1}^r}(s_h^k,a_h^k)\right\|_2 +  P _h \left(V_{k,h+1}^r - V_{h+1}^{r^k,\pi^k}\right)(s_h^k,a_h^k)\\
    &\leq \left\|\left(\widehat\Sigma_{k,h}^r\right)^{-1/2} \phi_{V_{k,h+1}^r}(s_h^k,a_h^k)\right\|_2\left\|\left(\widehat\Sigma_{k,h}^r\right)^{1/2}(\widehat\theta_{k,h}^r - \theta_h^*)\right\|_2 + \widehat\beta_k\left\|\left(\widehat\Sigma_{k,h}^r\right)^{-1/2}\phi_{V_{k,h+1}^r}(s_h^k,a_h^k)\right\|_2 \\
    &\quad+  P _h \left(V_{k,h+1}^r - V_{h+1}^{r^k,\pi^k}\right)(s_h^k,a_h^k)\\
    &\leq 2\widehat\beta_k\left\|\left(\widehat\Sigma_{k,h}^r\right)^{-1/2}\phi_{V_{k,h+1}^r}(s_h^k,a_h^k)\right\|_2 +  P _h \left(V_{k,h+1}^r - V_{h+1}^{r^k,\pi^k}\right)(s_h^k,a_h^k)
\end{align*}
where the first inequality is due to the definition of $Q_{k,h}^r, Q_{h}^{r^k,\pi^k}$ and that $Q_{k,h}^r(s,a) \geq Q_{h}^{r^k,\pi^k}(s,a) \geq 0$ (\Cref{lem:optimism 1}), the second inequality is due to the Cauchy-Schwarz inequality, and the last inequality is due to the good event $\calE$. Note that $Q_{k,h}^r(s_h^k) - Q_h^{r^k,\pi^k}(s_h^k) \leq H$. Then we take $\min\{H, \cdot\}$ on both sides.
\begin{align*}
    Q_{k,h}^r(s_h^k, a_h^k) - Q_h^{r^k,\pi^k}(s_h^k,a_h^k) &\leq \min\left\{H, 2\widehat\beta_k\left\|\left(\widehat\Sigma_{k,h}^r\right)^{-1/2}\phi_{V_{k,h+1}^r}(s_h^k,a_h^k)\right\|_2 +  P _h \left(V_{k,h+1}^r - V_{h+1}^{r^k,\pi^k}\right)(s_h^k,a_h^k)\right\}\\
    &\leq \min\left\{H, 2\widehat\beta_k\left\|\left(\widehat\Sigma_{k,h}^r\right)^{-1/2}\phi_{V_{k,h+1}^r}(s_h^k,a_h^k)\right\|_2\right\} +  P _h \left(V_{k,h+1}^r - V_{h+1}^{r^k,\pi^k}\right)(s_h^k,a_h^k) \\
    &= \min\left\{H, 2\widehat\beta_k\left\|\left(\widehat\Sigma_{k,h}^r\right)^{-1/2}\phi_{V_{k,h+1}^r}(s_h^k,a_h^k)\right\|_2\right\} \\
    &\quad+  P _h \left(V_{k,h+1}^r - V_{h+1}^{r^k,\pi^k}\right)(s_h^k,a_h^k) - \left(V_{k,h+1}^r -V_{h+1}^{r^k,\pi^k}\right)(s_{h+1}^k) \\
    &\quad+ \left(V_{k,h+1}^r -V_{h+1}^{r^k,\pi^k}\right)(s_{h+1}^k)
\end{align*}
where the second inequality is true because $ P _h \left(V_{k,h+1}^r - V_{h+1}^{r^k,\pi^k}\right)(s_h^k,a_h^k)\geq 0$ due to \Cref{lem: beta hat}. 
We deduce that
\begin{align*}
    V_{k,h}^r(s_h^k) - V_h^{r^k,\pi^k}(s_h^k)
    &\leq V_{k,h}^r(s_h^k) - V_h^{r^k,\pi^k}(s_h^k) - Q_{k,h}^r(s_h^k,a_h^k) + Q_h^{r^k,\pi^k}(s_h^k,a_h^k) + \min\left\{H, 2\widehat\beta_k\left\|\left(\widehat\Sigma_{k,h}^r\right)^{-1/2}\phi_{V_{k,h+1}^r}(s_h^k,a_h^k)\right\|_2\right\} \\
    &\quad+P _h \left(V_{k,h+1}^r - V_{h+1}^{r^k,\pi^k}\right)(s_h^k,a_h^k) - \left(V_{k,h+1}^r -V_{h+1}^{r^k,\pi^k}\right)(s_{h+1}^k) + V_{k,h+1}^r(s_{h+1}^k) -V_{h+1}^{r^k,\pi^k}(s_{h+1}^k)
\end{align*}
Due to the above recursion, it follows that
\begin{align*}
    V_{k,h}^r(s_h^k) - V_h^{r^k,\pi^k}(s_h^k)
    &\leq \sum_{j=h}^H V_{k,j}^r(s_j^k) - V_j^{r^k,\pi^k}(s_j^k) - Q_{k,j}^r(s_j^k,a_j^k) + Q_j^{r^k,\pi^k}(s_j^k,a_j^k)\\
    &\quad + \sum_{j=h}^H P_j \left(V_{k,j+1}^r - V_{j+1}^{r^k,\pi^k}\right)(s_j^k,a_j^k) - \left(V_{k,j+1}^r -V_{j+1}^{r^k,\pi^k}\right)(s_{j+1}^k)\\
    &\quad + \sum_{j=h}^H \min\left\{H, 2\widehat\beta_k\left\|\left(\widehat\Sigma_{k,j}^r\right)^{-1/2}\phi_{V_{k,j+1}^r}(s_j^k,a_j^k)\right\|_2\right\}.
\end{align*}
By summing over $k=1,\ldots,K$, for all $h \in [H]$, we have
\begin{align*}
    \sum_{k=1}^K V_{k,h}^r(s_h^k) - V_h^{r^k,\pi^k}(s_h^k)
    &\leq \underbrace{\sum_{k=1}^K\sum_{j=h}^H V_{k,j}^r(s_j^k) - V_j^{r^k,\pi^k}(s_j^k) - Q_{k,j}^r(s_j^k,a_j^k) + Q_j^{r^k,\pi^k}(s_j^k,a_j^k)}_{\textnormal{(a)}}\\
    &\quad + \underbrace{\sum_{k=1}^K\sum_{j=h}^H P_j \left(V_{k,j+1}^r - V_{j+1}^{r^k,\pi^k}\right)(s_j^k,a_j^k) - \left(V_{k,j+1}^r -V_{j+1}^{r^k,\pi^k}\right)(s_{j+1}^k)}_{\textnormal{(b)}}\\
    &\quad + \underbrace{\sum_{k=1}^K\sum_{j=h}^H \min\left\{H, 2\widehat\beta_k\left\|\left(\widehat\Sigma_{k,j}^r\right)^{-1/2}\phi_{V_{k,j+1}^r}(s_j^k,a_j^k)\right\|_2\right\}}_{\textnormal{(c)}}.
\end{align*}
Now, we bound terms individually. On the good event $\calE$, (a) and (b) can be bounded as $4H\sqrt{2T\log(2H/\delta)}$. For (c),  we bound it as follows.
\begin{align*}
    \text{(c)} &\leq \sum_{k=1}^K\sum_{j=1}^H 2\widehat\beta_k \bar\sigma_{k,j}^r \min\left\{\frac{H}{2\widehat\beta_k \bar\sigma_{k,j}^r}, \left\|\left(\widehat\Sigma_{k,j}^r\right)^{-1/2}\phi_{V_{k,j+1}^r}(s_j^k,a_j^k)/\bar\sigma_{k,j}^r\right\|_2\right\}\\
    &\leq \sum_{k=1}^K\sum_{j=1}^H 2\widehat\beta_k \bar\sigma_{k,j}^r \min\left\{1, \left\|\left(\widehat\Sigma_{k,j}^r\right)^{-1/2}\phi_{V_{k,j+1}^r}(s_j^k,a_j^k)/\bar\sigma_{k,j}^r\right\|_2\right\}\\
    &\leq 2\widehat\beta_K \sum_{k=1}^K\sum_{j=1}^H \bar\sigma_{k,j}^r \min\left\{1, \left\|\left(\widehat\Sigma_{k,j}^r\right)^{-1/2}\phi_{V_{k,j+1}^r}(s_j^k,a_j^k)/\bar\sigma_{k,j}^r\right\|_2\right\}\\
    &\leq 2\widehat\beta_K \sqrt{\sum_{k=1}^K\sum_{j=1}^H (\bar\sigma_{k,j}^r)^2}\sqrt{\sum_{k=1}^K\sum_{j=1}^H \min\left\{1, \left\|\left(\widehat\Sigma_{k,j}^r\right)^{-1/2}\phi_{V_{k,j+1}^r}(s_j^k,a_j^k)/\bar\sigma_{k,j}^r\right\|_2^2\right\}}
\end{align*}
where the second inequality is due to ${H}/(2\widehat\beta_k \bar\sigma_{k,j}^r)\leq 1$, and the third inequality is due to the fact that $\widehat\beta_K \geq \widehat\beta_k$ for all $k\in[K]$. This is because ${H}/(2\widehat\beta_k \bar\sigma_{k,j}^r) \leq H / (\sqrt{d} H/\sqrt{d}) = 1$, due to $\widehat\beta_k \geq \sqrt{d}$, $\bar\sigma_{k,j}^r \geq H/\sqrt{d}.$ Furthermore, the last inequality is due to the Cauchy-Schwarz inequality. Finally, by \Cref{lem:elliptical}, term (c) is bounded as
\begin{align*}
    \text{(c)} \leq 2\widehat\beta_K \sqrt{\sum_{k=1}^K\sum_{j=1}^H (\bar\sigma_{k,j}^r)^2}\sqrt{2Hd\log(1 + K/\lambda)}.
\end{align*}
Consequently, we deduce that for all $h\in [H]$,
\[
    \sum_{k=1}^K V_{k,h}^r(s_h^k) - V_h^{r^k,\pi^k}(s_h^k) \leq 2\widehat\beta_K \sqrt{\sum_{k=1}^K\sum_{j=1}^H (\bar\sigma_{k,j}^r)^2}\sqrt{2Hd\log(1 + K/\lambda)}+8H \sqrt{2T\log(2H/\delta)}.
\]
We conclude the proof by applying the same argument to $g$.
\end{proof}

\begin{lemma}[Lemma C.6 of \cite{zhou2021nearly}]\label{lem: zhou lemma c.6}
    Let $\lambda = 1/B^2$. On the good event $\calE$ (\Cref{def:good event}), for any $\ell \in \{r,g\}$
    \begin{align*}
        \sum_{k=1}^K\sum_{h=1}^H (\bar\sigma_{k,h}^\ell)^2 = \bigO\left(\frac{H^2T}{d} + HT + H^5d^2 + H^5 + H^4d^3\right).
    \end{align*}
\end{lemma}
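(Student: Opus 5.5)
We need to bound $\sum_{k,h}(\bar\sigma_{k,h}^\ell)^2$ on $\calE$. Following Lemma C.6 of \cite{zhou2021nearly}, we work from the definition \eqref{eq: bar sigma}, using $\max\{a,b\}\le a+b$:
\begin{align*}
(\bar\sigma_{k,h}^\ell)^2 &\le \frac{H^2}{d} + \bar\bbV_h V_{k,h+1}^\ell(s_h^k,a_h^k) + E_{k,h}^\ell \\
&\le \frac{H^2}{d} + \bbV_h V_{k,h+1}^\ell(s_h^k,a_h^k) + 2E_{k,h}^\ell ,
\end{align*}
where the second step uses the second statement of \Cref{lem: beta hat}. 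Summing the first term gives $H^2T/d$. The rest splits into three pieces:
\begin{itemize}
\item (i) $\sum_{k,h}\bbV_h V_{h+1}^{\ell,\pi^k}(s_h^k,a_h^k)$, the true-policy variances;
\item (ii) $\sum_{k,h}\bigl[\bbV_h V_{k,h+1}^\ell-\bbV_h V_{h+1}^{\ell,\pi^k}\bigr](s_h^k,a_h^k)$, the optimistic-minus-true variances;
\item (iii) $\sum_{k,h}E_{k,h}^\ell$, the offsets.
\end{itemize}
Term (i) is at most $3(HT+H^3\log(2/\delta))$ by \Cref{lem:ltv}, which gives the $HT$ contribution.

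For (ii), we use $0\le V_{h+1}^{\ell,\pi^k}\le V_{k,h+1}^\ell\le H$ (\Cref{lem:optimism 1}). Hence
\begin{align*}
\bbV_h V_{k,h+1}^\ell-\bbV_h V_{h+1}^{\ell,\pi^k}
&\le P_h\bigl[(V_{k,h+1}^\ell)^2-(V_{h+1}^{\ell,\pi^k})^2\bigr] \\
&\le 2H\,P_h\bigl(V_{k,h+1}^\ell-V_{h+1}^{\ell,\pi^k}\bigr).
\end{align*}
We then replace $P_h(\cdot)(s_h^k,a_h^k)$ by the realized value at $s_{h+1}^k$ plus a martingale difference. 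The martingale part is controlled by \Cref{lem:mds term} and contributes $\bigO(H^2\sqrt T)$. The realized part is $2H\sum_h\sum_k(V_{k,h+1}^\ell-V_{h+1}^{\ell,\pi^k})(s_{h+1}^k)$. We bound it with \Cref{lem: zhou lemma c.5}, applied for each $h$. This gives
$$2H^2\Bigl(2\widehat\beta_K\sqrt{2Hd\log(1+K/\lambda)}\sqrt{S}+8H\sqrt{2T\log(2H/\delta)}\Bigr),$$
where $S=\sum_{k,h}(\bar\sigma_{k,h}^\ell)^2$ is the quantity being bounded. This produces a self-bounding term of the form $\bigO(H^{2.5}d\sqrt{S})$ up to logs, since $\widehat\beta_K=\bigO(\sqrt d)$.

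For (iii), both parts of $E_{k,h}^\ell$ are handled by Cauchy--Schwarz and the elliptical potential lemma (\Cref{lem:elliptical}).
\begin{itemize}
\item The first part satisfies $\sum\min\{H^2,\widetilde\beta_k\|\phi_{V^2}\|_{\widetilde\Sigma^{-1}}\}$. Since $\|\phi_{V^2}\|\le H^2$, we write it as $H^2\min\{1,\cdot\}$ with feature norm $H^2$. This yields $\widetilde\beta_K\sqrt{T}\sqrt{Hd\log(1+KH^4/(d\lambda))}$, i.e. $\bigO(H^{2.5}\sqrt{dT})$ up to logs.
\item The second part uses the weighted norm $\|\phi_V/\bar\sigma\|_{\widehat\Sigma^{-1}}$ times $\bar\sigma$. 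It gives $2H\check\beta_K\sqrt{S}\sqrt{2Hd\log(1+K/\lambda)}$, which is again self-bounding, now of order $H^{1.5}d^{1.5}\sqrt{S}$.
\end{itemize}

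Collecting everything, we get $S\le A+ c\sqrt{S}$ with
$$A=\bigO\bigl(H^2T/d+HT+H^3\sqrt T+H^{2.5}\sqrt{dT}\bigr), \qquad c=\bigO\bigl(H^{2.5}d+H^{1.5}d^{1.5}\bigr).$$
The inequality $S\le A+c\sqrt S$ implies $S\le 2A+c^2$, and $c^2=\bigO(H^5d^2+H^3d^3)$. The cross terms are absorbed by AM-GM: $H^3\sqrt T\le H^2T/d+H^4d$ and $H^{2.5}\sqrt{dT}\le HT+H^4d$. All remaining lower-order pieces fall inside $H^5d^2+H^5+H^4d^3$. The main obstacle is step (ii): we must check that \Cref{lem: zhou lemma c.5} really applies at every starting step $h+1$, and that the martingale conditioning works when $V_{k,h+1}^\ell$ depends on the current policy $\pi^k$. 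Both hold because everything involved is $\calF_{k,1}$-measurable.
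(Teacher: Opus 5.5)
Your proposal is correct and follows the paper's proof essentially step for step. The one deviation is in the second part of $E_{k,h}^\ell$: you keep $\bar\sigma_{k,h}^\ell$ inside Cauchy--Schwarz, which gives a self-bounding $H^{1.5}d^{1.5}\sqrt{S}$ term contributing $H^3d^3$. The paper instead uses $\bar\sigma_{k,h}^\ell\le 2H$ to get $H^{2.5}d^{1.5}\sqrt T\le HT+H^4d^3$; both fit the stated bound. There is also one harmless slip, which the paper shares: since $\widetilde\beta_K=\bigO(H^2\sqrt d)$, the first offset part is $\bigO(H^{2.5}d\sqrt T)$ rather than $\bigO(H^{2.5}\sqrt{dT})$. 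AM--GM still gives $H^{2.5}d\sqrt T\le HT+H^4d^2$, which lies inside $H^5d^2$.
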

\begin{proof}
    Consider the case of $r$. By the definition of $\bar\sigma_{k,h}^r$,
    \begin{align*}
        \sum_{k=1}^K\sum_{h=1}^H (\bar\sigma_{k,h}^r)^2 &\leq \sum_{k=1}^K\sum_{h=1}^H \frac{H^2}{d} + \sum_{k=1}^K\sum_{h=1}^H \bar\bbV_h V_{k,h+1}^r(s_h^k,a_h^k) + \sum_{k=1}^K\sum_{h=1}^H E_{k,h}^r\\
        &=\frac{H^2 T}{d} + \underbrace{\sum_{k=1}^K\sum_{h=1}^H \left(\bar\bbV_h V_{k,h+1}^r(s_h^k,a_h^k) - \bbV_h V_{k,h+1}^r(s_h^k,a_h^k) - E_{k,h}^r\right)}_{\text{(a)}}\\
        &\quad+\underbrace{\sum_{k=1}^K\sum_{h=1}^H \bbV_h V_{h+1}^{r^k,\pi^k}(s_h^k,a_h^k)}_{\text{(b)}} + \underbrace{\sum_{k=1}^K\sum_{h=1}^H \left(\bbV_h V_{k,h+1}^r(s_h^k,a_h^k)-\bbV_h V_{h+1}^{r^k,\pi^k}(s_h^k,a_h^k)\right)}_{\text{(c)}}\\
        &\quad+ 2\underbrace{\sum_{k=1}^K\sum_{h=1}^H E_{k,h}^r}_{\text{(d)}}.
    \end{align*}
    Note that term (a) is nonpositive due to \Cref{lem: beta hat}, and \Cref{lem:ltv} implies that term (b) is bounded as
    \[
    \text{(b)} \leq 3(HT+H^3\log(2/\delta)).
    \]
    Next, we bound term (c) as follows. 
    \begin{align*}
        \text{(c)} &= \sum_{k=1}^K\sum_{h=1}^H \left( \langle\phi_{(V_{k,h+1}^r)^2} (s_h^k,a_h^k), \theta_h^*\rangle- \langle \phi_{V_{k,h+1}^r}(s_h^k,a_h^k), \theta_h^*\rangle^2\right)\\
        &\quad- \sum_{k=1}^K\sum_{h=1}^H\left(\langle\phi_{(V_{h+1}^{r,\pi^k})^2}(s_h^k,a_h^k), \theta_h^* \rangle - \langle \phi_{V_{k,h+1}^{r,\pi^k}}(s_h^k,a_h^k), \theta_h^*\rangle^2\right)\\
        &\leq \sum_{k=1}^K\sum_{h=1}^H\left(
        \langle\phi_{(V_{k,h+1}^r)^2} (s_h^k,a_h^k),\theta_h^*\rangle - \langle\phi_{(V_{h+1}^{r,\pi^k})^2}(s_h^k,a_h^k), \theta_h^* \rangle
        \right)\\
        &=\sum_{k=1}^K\sum_{h=1}^H  P _h\left(\left(V_{k,h+1}^r\right)^2 - \left(V_{h+1}^{r,\pi^k}\right)^2\right)(s_h^k,a_h^k)\\
        &\leq 2H\sum_{k=1}^K\sum_{h=1}^H  P _h\left(V_{k,h+1}^r - V_{h+1}^{r,\pi^k}\right)(s_h^k,a_h^k)
    \end{align*}
    where the first inequality is due to $ P _h V_{k,h+1}^r(s,a) \geq  P _h V_{h+1}^{r,\pi^k}(s,a)$ by \Cref{lem:optimism 1}, and the last inequality is due to $V_{k,h+1}^r(s,a) + V_{h+1}^{r,\pi^k}(s,a)\leq 2H$. We can further deduce as follows.
    \begin{align*}
        &\sum_{k=1}^K\sum_{h=1}^H P _h\left(V_{k,h+1}^r - V_{h+1}^{r,\pi^k}\right)(s_h^k,a_h^k) \\
        &=\sum_{k=1}^K\sum_{h=1}^H\left( P _h\left(V_{k,h+1}^r - V_{h+1}^{r,\pi^k}\right)(s_h^k,a_h^k)-\left(V_{k,h+1}^r - V_{h+1}^{r,\pi^k}\right)(s_{h+1}^k)\right)+\sum_{k=1}^K\sum_{h=1}^H \left(V_{k,h+1}^r - V_{h+1}^{r,\pi^k}\right)(s_{h+1}^k) \\
        &\leq 4H\sqrt{2T\log(2H/\delta)} + (H-1)\left(2\widehat\beta_K \sqrt{\sum_{k=1}^K\sum_{h=1}^H (\bar\sigma_{k,h}^r)^2}\sqrt{2Hd\log(1 + K/\lambda)}+8H\sqrt{2T\log(2H/\delta)}\right)\\
        &\leq 2H\widehat\beta_K \sqrt{\sum_{k=1}^K\sum_{h=1}^H (\bar\sigma_{k,h}^r)^2}\sqrt{2Hd\log(1 + K/\lambda)} + 12H^2\sqrt{2T\log(2H/\delta)}
    \end{align*}
    where the first inequality is due to Lemmas \ref{lem:mds term} and \ref{lem: zhou lemma c.5}. Then, term (c) is bounded as
    \begin{align*}
        \text{(c)}\leq 4H^2\widehat\beta_K \sqrt{\sum_{k=1}^K\sum_{h=1}^H (\bar\sigma_{k,h}^r)^2}\sqrt{2Hd\log(1 + K/\lambda)} + 24H^3\sqrt{2T\log(2H/\delta)}.
    \end{align*}

    Next, term (d) is bounded as follows.
    \begin{align*}
        \sum_{k=1}^K\sum_{h=1}^H E_{k,h}^r &= \sum_{k=1}^K\sum_{h=1}^H \min\left\{H^2, \widetilde\beta_k\left\|(\widetilde\Sigma_{k,h}^r)^{-1/2}\phi_{(V_{k,h+1}^r)^2}(s_h^k,a_h^k)\right\|_2\right\}\\
        &\quad+\sum_{k=1}^K\sum_{h=1}^H \min\left\{H^2, 2H\check\beta_k \left\|(\widehat\Sigma_{k,h}^r)^{-1/2}\phi_{V_{k,h+1}^r}(s_h^k,a_h^k)\right\|_2\right\} \\
        &= \sum_{k=1}^K\sum_{h=1}^H \widetilde\beta_k\min\left\{\frac{H^2}{\widetilde\beta_k}, \left\|(\widetilde\Sigma_{k,h}^r)^{-1/2}\phi_{(V_{k,h+1}^r)^2}(s_h^k,a_h^k)\right\|_2\right\}\\
        &\quad+\sum_{k=1}^K\sum_{h=1}^H 2H\check\beta_k\bar\sigma_{k,h}^r\min\left\{\frac{H}{2\check\beta_k \bar\sigma_{k,h}^r}, \left\|(\widehat\Sigma_{k,h}^r)^{-1/2}\phi_{V_{k,h+1}^r}(s_h^k,a_h^k) / \bar\sigma_{k,h}^r\right\|_2\right\}.
    \end{align*}
    Note that $\widetilde\beta_K \geq \widetilde\beta_k$, $\check\beta_K \geq \check\beta_k$ for all $k\in[K]$. Furthermore, $\widetilde\beta_k \geq H^2$, $\check\beta_k \bar\sigma_{k,h}^r \geq d\sqrt{H^2/d}\geq H$, and $\bar\sigma_{k,h}^r$ is bounded as
    \[
    \bar\sigma_{k,h}^r \leq \sqrt{\max\left\{H^2/d, H^2 + 2H^2\right\}} \leq 2H.
    \]
    Then we have
    \begin{align*}
        \sum_{k=1}^K\sum_{h=1}^H E_{k,h}^r
        &\leq \widetilde\beta_K\sum_{k=1}^K\sum_{h=1}^H \min\left\{1, \left\|(\widetilde\Sigma_{k,h}^r)^{-1/2}\phi_{(V_{k,h+1}^r)^2}(s_h^k,a_h^k)\right\|_2\right\}\\
        &\quad+4H^2\check\beta_K\sum_{k=1}^K\sum_{h=1}^H \min\left\{1, \left\|(\widehat\Sigma_{k,h}^r)^{-1/2}\phi_{V_{k,h+1}^r}(s_h^k,a_h^k) / \bar\sigma_{k,h}^r\right\|_2\right\}\\
        &\leq \widetilde\beta_K\sqrt{T}\sqrt{\sum_{k=1}^K\sum_{h=1}^H \min\left\{1, \left\|(\widetilde\Sigma_{k,h}^r)^{-1/2}\phi_{(V_{k,h+1}^r)^2}(s_h^k,a_h^k)\right\|_2^2\right\}}\\
        &\quad+4H^2\check\beta_K\sqrt{T}\sqrt{\sum_{k=1}^K\sum_{h=1}^H \min\left\{1, \left\|(\widehat\Sigma_{k,h}^r)^{-1/2}\phi_{V_{k,h+1}^r}(s_h^k,a_h^k) / \bar\sigma_{k,h}^r\right\|_2^2\right\}}\\
        &\leq \widetilde\beta_K\sqrt{T}\sqrt{2Hd\log(1+H^4K/(d\lambda))} + 4H^2\check\beta_K \sqrt{T}\sqrt{2Hd\log(1+K/\lambda)}
    \end{align*}
    where the last inequality follows from \Cref{lem:elliptical}. Finally, with all things together,
    \begin{align*}
        \sum_{k=1}^K\sum_{h=1}^H (\bar\sigma_{k,h}^r)^2 &\leq \frac{H^2 T}{d} + 3(HT+H^3\log(2/\delta)) \\
        &\quad+4H^2\widehat\beta_K \sqrt{\sum_{k=1}^K\sum_{h=1}^H (\bar\sigma_{k,h}^r)^2}\sqrt{2Hd\log(1 + K/\lambda)} + 16H^3\sqrt{2T\log(2H/\delta)} \\
        &\quad+2\widetilde\beta_K\sqrt{T}\sqrt{2Hd\log(1+H^4K/(d\lambda))} + 8H^2\check\beta_K \sqrt{T}\sqrt{2Hd\log(1+K/\lambda)}.
    \end{align*}
    For $\lambda=1/B^2$, we have $\widehat\beta_K = \bigO(\sqrt{d})$, $\widetilde\beta_K = \bigO(H^2\sqrt{d})$, and $\check\beta_K=\bigO\left(d\right)$. Then it can be rewritten as
    \begin{align*}
        \sum_{k=1}^K\sum_{h=1}^H (\bar\sigma_{k,h}^r)^2
        &= \bigO\left(\frac{H^2T}{d} + HT + H^3 + H^2 \sqrt{d}\sqrt{\sum_{k=1}^K\sum_{h=1}^H (\bar\sigma_{k,h}^r)^2}\sqrt{Hd}+H^3\sqrt{T}+H^2\sqrt{T}\sqrt{Hd} + H^2d\sqrt{T}\sqrt{Hd}\right)\\
        &= \bigO\left(\frac{H^2T}{d} + HT + H^{2.5} d\sqrt{\sum_{k=1}^K\sum_{h=1}^H (\bar\sigma_{k,h}^r)^2}+H^3\sqrt{T}+H^{2.5}d^{0.5}\sqrt{T} + H^{2.5}d^{1.5}\sqrt{T}\right).
    \end{align*}
    Due to the AM-GM inequality, we have
    \begin{align*}
    H^3\sqrt{T} &= \bigO\left(HT + H^5\right), \\
    H^{2.5}d^{0.5}\sqrt{T} &= \bigO\left(HT + H^4d\right),\\
    H^{2.5}d^{1.5}\sqrt{T} &= \bigO\left(HT+H^4d^3\right).
    \end{align*}
    Then it follows that
    \begin{align*}
        &\sum_{k=1}^K\sum_{h=1}^H (\bar\sigma_{k,h}^r)^2 =\bigO\left(\frac{H^2T}{d} + HT + H^{2.5} d\xi\sqrt{\sum_{k=1}^K\sum_{h=1}^H (\bar\sigma_{k,h}^r)^2}+H^5 + H^4d^3\right).
    \end{align*}
    Furthermore, we know that if $x \leq a\sqrt{x} + b$, then $x\leq (3/2)(a^2+b)$. Finally, we have
    \[
        \sum_{k=1}^K\sum_{h=1}^H (\bar\sigma_{k,h}^r)^2 = \bigO\left(\frac{H^2T}{d} + HT + H^5d^2 + H^5 + H^4d^3\right).
    \]
    We conclude the proof by applying the same argument to $g$.
\end{proof}

\begin{proof}[Proof of \Cref{lem:bias}]
On the good event $\calE$ (\Cref{def:good event}), by Lemmas \ref{lem: zhou lemma c.5} and \ref{lem: zhou lemma c.6}, we have
    \begin{align*}
        &\sum_{k=1}^K \left(V_{k,h}^r(s_h^k) - V_{h}^{r^k,\pi^k}(s_h^k)\right)=\bigO\left(\sqrt{dH^4 K} + \sqrt{d^2 H^3 K}+ d^{2.5}H^3\right) ,\\
        &\sum_{k=1}^K \left(V_{k,h}^g(s_h^k) - V_{h}^{g,\pi^k}(s_h^k)\right) = \bigO\left(\sqrt{dH^4 K} + \sqrt{d^2 H^3 K}+ d^{2.5}H^3\right).
    \end{align*}
    Moreover, under $K=\Omega(d^3 H^3)$, we have $d^{2.5}H^3 = \bigO(\sqrt{d^2 H^ 3 K})$. This concludes the proof.
\end{proof}

\section{Proof of Lemma~\ref{lem:Yk bound}}

In this section, we provide more detailed proof of \Cref{lem:Yk bound}.
\begin{lemma}\label{lem: Yk naive bound}
    Suppose that $\eta \leq 1$, $\alpha \leq 1/H^2$, and $\theta \leq 1/(2H)$. For all $k\in [K]$, $Y_k \leq 3H\eta k$.
\end{lemma}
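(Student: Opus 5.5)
We argue by induction on $k$, using only the form of the dual update \eqref{eq:dual update} and the a priori range of the estimated constraint value. The base case is immediate: $Y_1 = 0 \leq 3H\eta$. For the inductive step, suppose $Y_{k-1} \leq 3H\eta(k-1)$; we want $Y_k \leq 3H\eta k$. We will show that each step of the dual recursion increases $Y$ by at most $3H\eta$.

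The first ingredient is to control the coefficient $(1-\alpha\eta H^3)$. Under $\eta \leq 1$ and $\alpha \leq 1/H^2$, we have $\alpha\eta H^3 \leq H$, so this coefficient need not be nonnegative. We therefore treat two cases.
\begin{itemize}
\item If $1-\alpha\eta H^3 \geq 0$, then $(1-\alpha\eta H^3)Y_{k-1} \leq Y_{k-1}$, since $Y_{k-1} \geq 0$.
\item If $1-\alpha\eta H^3 < 0$, then $(1-\alpha\eta H^3)Y_{k-1} \leq 0 \leq Y_{k-1}$.
\end{itemize}
In either case the first term of the update is at most $Y_{k-1}$.

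Next we bound the additive drift term $\eta(b - V_{k-1,1}^g(s_1) - \alpha H^3 - 2\theta H^2)$. Three facts suffice:
\begin{itemize}
\item $b \leq H$ by the problem setup;
\item $V_{k-1,1}^g(s_1) \geq 0$, since $Q_{k-1,1}^g$ is clipped to $[0,H]$ in \eqref{eq:Q} and $V_{k-1,1}^g$ is an average of these values;
\item $|\alpha H^3| \leq H$ and $|2\theta H^2| \leq H$, using $\alpha \leq 1/H^2$ and $\theta \leq 1/(2H)$.
\end{itemize}
Since the regularizers enter with a minus sign and are nonnegative, the bracket is at most $b \leq H$. Even bounding crudely in absolute value, it is at most $3H$, and this crude bound is presumably where the constant $3$ comes from.

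Combining the two ingredients, and noting that $[\cdot]_+$ is monotone and that the right-hand side below is nonnegative, we get
\[
Y_k \leq \big[Y_{k-1} + 3H\eta\big]_+ = Y_{k-1} + 3H\eta \leq 3H\eta k .
\]
This closes the induction.

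The only step needing care is the sign of $1-\alpha\eta H^3$, which the case split handles. Everything else is bookkeeping, and the statement holds deterministically, with no good event required.
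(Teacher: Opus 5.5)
Your proof is correct and follows the same route as the paper: bound each one-step increase $Y_{k+1}-Y_k$ by $3H\eta$ using the ranges of $b$, $V_{k,1}^g(s_1)$, $\alpha H^3$ and $2\theta H^2$, then telescope from $Y_1=0$. Your case split on the sign of $1-\alpha\eta H^3$, together with using monotonicity of $[\cdot]_+$ instead of absolute values, is a small improvement. The paper asserts $0\le(1-\alpha\eta H^3)Y_k\le Y_k$ ``by the assumption'', but $\eta\le 1$ and $\alpha\le 1/H^2$ only give $\alpha\eta H^3\le H$. The claim does hold for the parameters actually chosen, where $\alpha\eta H^3=1/K$.
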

\begin{proof}
    Note that our dual update can be written as $Y_{k+1} = \left[(1-\alpha\eta H^3)Y_k + \eta\left(b-V_{k,1}^g(s_1) - \alpha H^3 -2\theta H^2\right)\right]_+$ for $k \geq 1$. It follows that
    \begin{align*}
        Y_{k+1} 
        &\leq |(1-\alpha \eta H^3)Y_k| + \eta\left|b-V_{k,h}^g(s_1) - \alpha H^3 - 2\theta H^2\right|\\
        &\leq Y_k + \eta\left|b-V_{k,h}^g(s_1) - \alpha H^3 - 2\theta H^2\right|.
    \end{align*}
     where the second inequality follows from $0\leq(1-\alpha \eta H^3)Y_k \leq Y_k$ by the assumption. Moreover, $|b - V_{k,h}^g(s_1)| \leq H$, $\alpha H^3 \leq H$, and $2\theta H^2 \leq H$. Then by the triangle inequality for all $k$,
    \begin{align*}
        Y_{k+1} \leq Y_k + 3\eta H.
    \end{align*}
    Recall that $Y_1 = 0$. Then we have $Y_{k+1} \leq 3\eta H k$. This concludes the proof.
\end{proof}

\begin{lemma}\label{lem:drift}
    Suppose that $\eta \leq 1$, $\alpha \leq 1/H^2$, and $\theta \leq 1/(2H)$. On the good event $\calE$ (\Cref{def:good event}), for all $k\in [K]$
    \begin{align}\label{eq:drift}
        &\frac{Y_{k+1}^2 - Y_{k}^2}{2} \leq -\gamma \eta Y_k + \frac{\eta}{\alpha} \bbE_{\bar\pi}\left[\sum_{h=1}^H D(\bar\pi_h(\cdot|s_h)||\widetilde\pi_h^k(\cdot|s_h)) - D(\bar\pi_h(\cdot|s_h)|| \pi_h^{k+1}(\cdot|s_h))\right] + C
    \end{align}
    where $C$ is defined as
    \begin{align*}
         C =  \frac{\alpha \eta H^3}{2} + 2\eta H^2\theta + 2\eta H^2 + 2\eta^2(H^2 + \alpha^2 H^6 + 9 \eta^2 \alpha^2 H^8K^2 + 4\theta^2 H^4).
    \end{align*}
\end{lemma}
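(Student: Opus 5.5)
Let $Z_k := (1-\alpha\eta H^3)Y_k + \eta(b - V_{k,1}^g(s_1) - \alpha H^3 - 2\theta H^2)$, so that $Y_{k+1} = [Z_k]_+$. Since $[x]_+^2 \le x^2$, we get $Y_{k+1}^2 \le Z_k^2$. I expand $Z_k^2$ and bound the pieces as follows. The square of the regularizer, $(1-\alpha\eta H^3)^2 Y_k^2$, is at most $Y_k^2$. The cross term is $2\eta Y_k(1-\alpha\eta H^3)(b - V_{k,1}^g(s_1) - \alpha H^3 - 2\theta H^2)$. The squared step term is bounded by $\eta^2$ times the sum of squares, i.e.\ $O(\eta^2(H^2+\alpha^2H^6+\theta^2H^4))$. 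Because $Y_k$ may be as large as $3H\eta k$ (\Cref{lem: Yk naive bound}), the term $\eta^2\alpha^2 H^6 Y_k^2$ from the regularizer correction in the cross term is bounded by $9\eta^4\alpha^2H^8K^2$. This is exactly the source of the $9\eta^2\alpha^2H^8K^2$ piece of $C$.

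The core step is to bound $Y_k(b - V_{k,1}^g(s_1))$ using the Slater policy $\bar\pi$. I write $b - V_{k,1}^g(s_1) \le -\gamma + V_1^{g,\bar\pi}(s_1) - V_{k,1}^g(s_1)$. By optimism (\Cref{lem:optimism 1}: $g_h + P_hV_{k,h+1}^g \le Q_{k,h}^g$) combined with the value difference lemma, this is at most $-\gamma + \bbE_{\bar\pi}[\sum_h \langle Q_{k,h}^g(s_h,\cdot), \bar\pi_h(\cdot|s_h) - \pi_h^k(\cdot|s_h)\rangle]$. Likewise, for the reward,
\[
0 \le \bbE_{\bar\pi}\Bigl[\sum_h \langle Q_{k,h}^r(s_h,\cdot), \bar\pi_h - \pi_h^k\rangle\Bigr] + H
\]
holds trivially, because $0 \le Q^r \le H$ gives the inner product a lower bound of $-H$ per step up to $H^2$ overall. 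This contributes the $2\eta H^2$ term. Adding the two, $\eta Y_k(b - V_{k,1}^g)$ is controlled by $-\gamma\eta Y_k + \eta\,\bbE_{\bar\pi}[\sum_h \langle Q_{k,h}^r + Y_k Q_{k,h}^g, \bar\pi_h - \pi_h^k\rangle] + \eta H^2$.

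I then split $\bar\pi_h - \pi_h^k$ as $(\bar\pi_h - \pi_h^{k+1}) + (\pi_h^{k+1} - \pi_h^k)$. For the first part, I apply the one-step OMD / three-point lemma to the update \eqref{eq:policy optimization} with prox-center $\widetilde\pi_h^k$:
\[
\alpha\langle Q_{k,h}^r + Y_k Q_{k,h}^g, \bar\pi_h - \pi_h^{k+1}\rangle \le D(\bar\pi_h\|\widetilde\pi_h^k) - D(\bar\pi_h\|\pi_h^{k+1}) - D(\pi_h^{k+1}\|\widetilde\pi_h^k).
\]
Multiplying by $\eta/\alpha$ yields the KL telescoping term of \eqref{eq:drift}. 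For the second part, the reward component $\langle Q^r, \pi^{k+1}-\pi^k\rangle$ and the $Y_k\langle Q^g, \pi^{k+1}-\pi^k\rangle$ component must be absorbed. Here I use the inequality quoted in the main text (\Cref{lem:policy inequality}): $\eta\langle \pi_h^{k+1}-\pi_h^k, Q_{k,h}^g\rangle \ge -\alpha\eta H^2(1+Y_k) - 2\eta\theta H$, together with its analogue with $Q^r$ in place of $Q^g$, where I use the upper direction. These are derived from the exponential-weights step: the change is $O(\alpha H(1+Y_k))$ in $\ell_1$, and the mixing step contributes $2\theta$. Summed over $h$, they are exactly cancelled by the negative regularizer $-\alpha\eta H^3 Y_k^2 - \eta Y_k(\alpha H^3 + 2\theta H^2)$ in the cross term. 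That cancellation is why the regularizer has the form it does, and the leftover $\alpha\eta H^3/2$ and $2\eta\theta H^2$ pieces go into $C$.

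I expect the main obstacle to be this last bookkeeping. The direction of the inequalities is delicate: the regularizer must dominate $Y_k\langle Q^r + Y_k Q^g, \pi^{k+1}-\pi^k\rangle$ uniformly. The factor $Y_k^2$ has to be matched by $\alpha\eta H^3 Y_k^2$, which is possible since $\|\pi^{k+1}-\pi^k\|_1 \lesssim \alpha H(1+Y_k) + \theta$ and $\|Q^g\|_\infty \le H$. Alternatively, I can drop the nonpositive term $-D(\pi^{k+1}\|\widetilde\pi^k)$ and rely on Pinsker if a cleaner route is needed. Finally, I divide by 2 and collect constants into $C$.
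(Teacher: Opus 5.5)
Your architecture matches the paper's: Slater plus value difference and optimism for $-\gamma\eta Y_k$, the pushback inequality with comparator $\bar\pi$ and prox-center $\widetilde\pi_h^k$, the policy-change bound to cancel the regularizer, and $Y_k\le 3\eta Hk$ for the squared step. However, the bookkeeping you flag as the crux does not close as written.

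First, take your grouping $Z_k=(1-\alpha\eta H^3)Y_k+\eta W_k$ with $W_k=b-V_{k,1}^g(s_1)-\alpha H^3-2\theta H^2$. The bound $(1-\alpha\eta H^3)^2Y_k^2\le Y_k^2$ throws away $-2\alpha\eta H^3Y_k^2$, the only negative quadratic term available. Your cross term $2\eta(1-\alpha\eta H^3)Y_kW_k$ is linear in $Y_k$, so it does not contain the $-\alpha\eta H^3Y_k^2$ you later invoke. Yet you need that term, because $\eta Y_k\sum_h Y_k\langle Q_{k,h}^g,\pi_h^{k+1}-\pi_h^k\rangle$ contributes $+\alpha\eta H^3Y_k^2$. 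Left uncancelled, it can only be controlled through $Y_k\le 3\eta HK$, giving $9\alpha\eta^3H^5K^2$. This equals $9$ under the parameters of \Cref{thm:main} and would ruin \Cref{lem:Yk bound}. Your grouping also leaves a factor $1-\alpha\eta H^3$ on $-\gamma\eta Y_k$ and on the sign-indefinite KL difference. Regroup as the paper does: set $Z_k=Y_k+\eta X_k$ with $X_k=b-V_{k,1}^g(s_1)-\alpha H^3(1+Y_k)-2\theta H^2$, so that $\frac{1}{2}(Z_k^2-Y_k^2)=\eta Y_kX_k+\frac{\eta^2}{2}X_k^2$. The cross term then carries exactly $-\eta Y_k(\alpha H^3(1+Y_k)+2\theta H^2)$. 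The $\alpha^2H^6Y_k^2$ piece sits in $\frac{\eta^2}{2}X_k^2\le 2\eta^2(H^2+\alpha^2H^6+\alpha^2H^6Y_k^2+4\theta^2H^4)$, and that is the true origin of the $9\eta^2\alpha^2H^8K^2$ in $C$.

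Second, the regularizer is consumed exactly by the constraint part. By part 2 of \Cref{lem:policy inequality}, $\eta Y_k\sum_h\langle Q_{k,h}^g,\pi_h^{k+1}-\pi_h^k\rangle\le \eta Y_k(\alpha H^3(1+Y_k)+2\theta H^2)$, and nothing is left over. Here you need the upper direction, not the lower one you quote; it is available because part 2 bounds the absolute value. Hence the reward part $\eta\sum_h\langle Q_{k,h}^r,\pi_h^{k+1}-\pi_h^k\rangle$ cannot be handled by the $\ell_1$ step-size bound. The $Q^r$ analogue gives $\eta(\alpha H^3(1+Y_k)+2\theta H^2)$ and leaves a stray $\alpha\eta H^3Y_k$ that is not in $C$.

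You must keep the $-\frac{1}{\alpha}D(\pi_h^{k+1}\|\widetilde\pi_h^k)$ produced by the three-point inequality and pair it with the reward term via Pinsker: $H\|\pi_h^{k+1}-\widetilde\pi_h^k\|_1-\frac{1}{2\alpha}\|\pi_h^{k+1}-\widetilde\pi_h^k\|_1^2\le\frac{\alpha H^2}{2}$, plus $\langle Q_{k,h}^r,\widetilde\pi_h^k-\pi_h^k\rangle\le 2\theta H$. This is part 3 of \Cref{lem:policy inequality}, and it is where $\frac{\alpha\eta H^3}{2}+2\eta\theta H^2$ comes from. Your remark about dropping that KL term ``and relying on Pinsker'' has it backwards: Pinsker helps only because the term is kept. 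Also, the reward offset in your display should be $H^2$, not $H$.
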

\begin{proof}
    The proof closely follows the proof of Lemma 17 in \cite{yu2026primaldual}. By the definition of $Y_k$, we have
    \begin{align*}
        Y_{k+1}^2 \leq Y_k^2 + 2 Y_k \eta\left( b- V_{k,1}^g(s_1) - \alpha H^3(1+Y_k) - 2\theta H^2\right) + \eta^2\left( b- V_{k,1}^g(s_1) - \alpha H^3(1+Y_k) - 2\theta H^2\right)^2.
    \end{align*}
    We rearrange it as follows.
    \begin{align*}
        \frac{Y_{k+1}^2 - Y_{k}^2}{2}
        &\leq \underbrace{Y_{k}\eta\left( b- V_{k,1}^g(s_1) -\alpha H^3(1+Y_k) - 2\theta H^2\right)}_{\text{(I)}} + \underbrace{\frac{\eta^2}{2}\left(b- V_{k,1}^g(s_1) - \alpha H^3(1+Y_k) - 2\theta H^2\right)^2}_{\text{(II)}}.
    \end{align*}
    Term (II) is bounded as
    \begin{align*}
        \text{(II)} 
        &\leq 2\eta^2((b- V_{k,1}^g(s_1))^2 + \alpha^2 H^6 + \alpha^2 H^6Y_k^2 + 4\theta^2 H^4) \\
        &\leq 2\eta^2(H^2 + \alpha^2 H^6 + 9 \eta^2 \alpha^2 H^8K^2 + 4\theta^2 H^4)\\
    \end{align*}
    where the first inequality follows from the Cauchy-Schwarz inequality, the second inequality follows from $|b- V_{k,1}^g(s_1)| \leq H$ and $0\leq Y_k \leq 3\eta HK$ for all $k$ (\Cref{lem: Yk naive bound}).

    Now, we further deduce (I). Recall that $\bar\pi$ is the Slater policy that satisfies $V_1^{g,\bar\pi}(s_1) \geq b+ \gamma$ for some $\gamma > 0$. We have
    \begin{align*}
        V_1^{g,\bar\pi}(s_1) - V_{k,1}^g(s_1)
        &=\bbE_{\bar\pi} \left[ \sum_{h=1}^H \langle Q_{k,h}^g(s_h, \cdot), \bar\pi_h(\cdot|s_h) - \pi_h^k(\cdot|s_h) \rangle |s_1\right] \\
        &\quad+\bbE_{\bar\pi} \left[ \sum_{h=1}^H g_h(s_h,a_h) + P_h V_{h+1}^g(s_h,a_h) - Q_{k,h}^g(s_h,a_h)|s_1 \right] \\
        &\leq \bbE_{\bar\pi} \left[ \sum_{h=1}^H \langle Q_{k,h}^g(s_h, \cdot), \bar\pi_h(\cdot|s_h) - \pi_h^k(\cdot|s_h) \rangle |s_1\right]
    \end{align*}
     where the equality is due to \Cref{lem:extended value diff}, and the inequality is due to \Cref{lem:optimism 1}. Due to the Slater assumption, the above can be written as
     \begin{align*}
         b - V_{k,1}^g(s_1) \leq \bbE_{\bar\pi} \left[ \sum_{h=1}^H \langle Q_{k,h}^g(s_h, \cdot), \bar\pi_h(\cdot|s_h) - \pi_h^k(\cdot|s_h) \rangle |s_1\right] - \gamma.
     \end{align*}
     By adopting this into (I), it follows that
     \begin{align*}
         \textnormal{(I)} \leq -\gamma \eta Y_k  + \underbrace{Y_k \eta\left(\bbE_{\bar\pi} \left[ \sum_{h=1}^H \langle Q_{k,h}^g(s_h, \cdot), \bar\pi_h(\cdot|s_h) - \pi_h^k(\cdot|s_h) \rangle |s_1\right]-\alpha H^3(1+Y_k) - 2\theta H^2\right)}_{\textnormal{(III)}}.
     \end{align*}
     We further deduce (III) as follows. Note that $\pi_h^{k+1}(\cdot|s) \in \argmax_{\pi} \langle Q_{k,h}^r(s,\cdot) + Y_k Q_{k,h}^g(s,\cdot), \pi(\cdot|s) \rangle - \frac{1}{\alpha} D(\pi(\cdot|s)|| \widetilde\pi_h^k(\cdot|s))$. Then by \Cref{lem:pushback}, 
     \begin{align*}
         &\langle Q_{k,h}^r(s_h,\cdot) + Y_k Q_{k,h}^g(s_h,\cdot), \pi_h^{k+1}(\cdot|s_h) \rangle - \frac{1}{\alpha}D(\pi_h^{k+1}(\cdot|s_h)|| \widetilde\pi_h^k(\cdot|s_h)) \\
         &\geq \langle Q_{k,h}^r(s_h,\cdot) + Y_k Q_{k,h}^g(s_h,\cdot), \bar\pi_h(\cdot|s_h) \rangle -\frac{1}{\alpha}D(\bar\pi_h(\cdot|s_h)||\widetilde\pi_h^k(\cdot|s_h)) + \frac{1}{\alpha}D(\bar\pi_h(\cdot|s_h)|| \pi_h^{k+1}(\cdot|s_h)).
     \end{align*}
    It can be rewritten as
    \begin{align*}
        Y_k \langle Q_{k,h}^g(s_h,\cdot), \bar\pi_h(\cdot|s_h)- \pi_h^{k}(\cdot|s_h)\rangle
        &\leq \langle Q_{k,h}^r(s_h,\cdot), \pi_h^{k+1}(\cdot|s_h)- \pi_h^{k}(\cdot|s_h)\rangle -\frac{1}{\alpha}D(\pi_h^{k+1}(\cdot|s_h)|| \widetilde\pi_h^k(\cdot|s_h))\\
        &\quad+ \langle Q_{k,h}^r(s_h,\cdot), \pi_h^{k}(\cdot|s_h) - \bar\pi_h(\cdot|s_h)\rangle +\frac{1}{\alpha}D(\bar\pi_h(\cdot|s_h)||\widetilde\pi_h^k(\cdot|s_h)) - \frac{1}{\alpha}D(\bar\pi_h(\cdot|s_h)|| \pi_h^{k+1}(\cdot|s_h)) \\
        &\quad+ Y_k \langle Q_{k,h}^g(s_h,\cdot), \pi_h^{k+1}(\cdot|s_h) - \pi_h^k(\cdot|s_h) \rangle \\
        &\leq \frac{\alpha H^2}{2} + 2H\theta + 2H + \frac{1}{\alpha}D(\bar\pi_h(\cdot|s_h)||\widetilde\pi_h^k(\cdot|s_h)) - \frac{1}{\alpha}D(\bar\pi_h(\cdot|s_h)|| \pi_h^{k+1}(\cdot|s_h)) \\
        &\quad+ Y_k \left( \alpha H^2 (1+Y_k) + 2\theta H\right)
    \end{align*}
     where the second inequality comes from \Cref{lem:KL mixing} and that $\langle Q_{k,h}^r(s_h,\cdot), \pi_h^{k+1}(\cdot|s_h)- \bar\pi_h(\cdot|s_h)\rangle \leq \| Q_{k,h}^r(s_h,\cdot)\|_\infty \|\pi_h^{k+1}(\cdot|s_h)- \bar\pi_h(\cdot|s_h)\|_1 \leq 2H$ (H\"older's inequality).
     Now, we take the sum over $h=1,\ldots,H$ and $\bbE_{\bar\pi}$ on both sides. Moreover, we multiply both sides by $\eta$. Then, it is written as
     \begin{align*}
         &Y_k \eta\left(\bbE_{\bar\pi}\left[\sum_{h=1}^H\langle Q_{k,h}^g(s_h,\cdot), \bar\pi_h(\cdot|s_h)- \pi_h^{k}(\cdot|s_h)\rangle\right] - \alpha H^3 (1+Y_k) - 2\theta H^2\right) \\
         &\leq \frac{\alpha \eta H^3}{2} + 2\eta H^2\theta + 2\eta H^2 + \frac{\eta}{\alpha} \bbE_{\bar\pi}\left[\sum_{h=1}^H D(\bar\pi_h(\cdot|s_h)||\widetilde\pi_h^k(\cdot|s_h)) - D(\bar\pi_h(\cdot|s_h)|| \pi_h^{k+1}(\cdot|s_h))\right] 
     \end{align*}
     Observe that the left-hand side equals (III). Thus, it follows that
     \begin{align*}
         \textnormal{(I)} \leq -\gamma \eta Y_k + \frac{\alpha \eta H^3}{2} + 2\eta H^2\theta + 2\eta H^2 + \frac{\eta}{\alpha} \bbE_{\bar\pi}\left[\sum_{h=1}^H D(\bar\pi_h(\cdot|s_h)||\widetilde\pi_h^k(\cdot|s_h)) - D(\bar\pi_h(\cdot|s_h)|| \pi_h^{k+1}(\cdot|s_h))\right].
     \end{align*}
     Finally, we have
     \begin{align*}
        \frac{Y_{k+1}^2 - Y_{k}^2}{2} 
        &\leq -\gamma \eta Y_k + \frac{\alpha \eta H^3}{2} + 2\eta H^2\theta + 2\eta H^2 + \frac{\eta}{\alpha} \bbE_{\bar\pi}\left[\sum_{h=1}^H D(\bar\pi_h(\cdot|s_h)||\widetilde\pi_h^k(\cdot|s_h)) - D(\bar\pi_h(\cdot|s_h)|| \pi_h^{k+1}(\cdot|s_h))\right] \\
        &\quad+ 2\eta^2(H^2 + \alpha^2 H^6 + 9 \eta^2 \alpha^2 H^8K^2 + 4\theta^2 H^4).
    \end{align*}
\end{proof}

\begin{lemma} [Restatement of \Cref{lem:Yk bound}]
Let $K \geq \max\{2H, H^2\}$. Suppose that we set $\eta=1/(H\sqrt{K}),\ \alpha = 1/(H^2\sqrt{K}),\ \theta=1/K$ and $\calE$ (\Cref{def:good event}) holds. For all $k \in [K]$,
    \[
        Y_k = \bigO(H^2/\gamma).
    \]
\end{lemma}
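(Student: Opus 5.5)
The plan is a drift argument in the style of \cite{yu2017online, wei2020online}, built on the one-step inequality of \Cref{lem:drift}. The only obstacle to using that inequality directly is the KL term. It is not a single-step telescoping sum, because the first divergence is taken with respect to the perturbed policy $\widetilde\pi_h^k$ and the second with respect to $\pi_h^{k+1}$. So the first step is to relate $D(\bar\pi_h\|\widetilde\pi_h^{k})$ to $D(\bar\pi_h\|\pi_h^{k})$.

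Since $\widetilde\pi_h^k=(1-\theta)\pi_h^k+\theta\piunif$ and $\log$ is concave, $\log\widetilde\pi_h^k\ge\log((1-\theta)\pi_h^k)$. This gives
\[
D(\bar\pi_h\|\widetilde\pi_h^k)\le D(\bar\pi_h\|\pi_h^k)+\log\tfrac{1}{1-\theta}\le D(\bar\pi_h\|\pi_h^k)+2\theta
\]
for $\theta\le 1/2$. Also $\widetilde\pi_h^k\ge\theta/|A|$ gives the uniform bound $D(\bar\pi_h\|\widetilde\pi_h^k)\le\log(|A|/\theta)$. Hence, summing \eqref{eq:drift} over any window $k=k_0,\dots,k_0+t_0-1$, the KL terms telescope up to the first divergence and an error $2\theta H t_0$:
\[
\frac{Y_{k_0+t_0}^2-Y_{k_0}^2}{2}\le -\gamma\eta\sum_{k=k_0}^{k_0+t_0-1}Y_k+\frac{\eta}{\alpha}H\log(|A|/\theta)+\frac{\eta}{\alpha}2\theta H t_0+t_0C .
\]
For the first divergence we use $D(\bar\pi_h\|\widetilde\pi_h^{k_0})\le\log(|A|K)$. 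With the chosen parameters we have $\eta/\alpha=H$ and $2\theta H^2 t_0/K\le 2H^2$. Moreover $C=O(\eta H^2)$: using $K\ge H^2$, the terms $\alpha\eta H^3$, $\eta H^2\theta$, $\eta^2 H^2$, $\eta^4\alpha^2H^8K^2$ are all $O(\eta H^2)$.

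Next we convert this multi-step drift into a bound on $Y$. \Cref{lem: Yk naive bound} gives $|Y_{k+1}-Y_k|\le 3\eta H$. On a window where $Y_{k_0}\ge\Lambda$, every $Y_k\ge\Lambda-3\eta Ht_0$, so
\[
Y_{k_0+t_0}^2-Y_{k_0}^2\le -2\gamma\eta t_0(\Lambda-3\eta Ht_0)+2H^2\log(|A|K)+4H^2+2t_0C .
\]
We choose $t_0=\lceil\sqrt K\rceil$, so that $\eta H t_0=O(1)$ and $\eta t_0\asymp 1/H$. The right side is then $-\Omega(\gamma\Lambda/H)+O(H^2\log(|A|K)+H^2)$ provided $\Lambda\gtrsim1/\gamma$. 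So whenever $\Lambda\ge c\,H^3\log(|A|K)/\gamma$, the squared dual variable strictly decreases over every window of length $t_0$ that starts above $\Lambda$.

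This is where I expect the real difficulty. The target is $O(H^2/\gamma)$, so the $H$ accounting must be sharper than this crude version. I would try the standard deterministic drift lemma (Lemma 5 of \cite{yu2017online}): with per-step increment bound $\delta_{\max}=3\eta H$, window $t_0$, and drift $\zeta$ for $Y_{k+t_0}-Y_k$ (dividing the squared drift by $2Y_{k_0}$), it yields $Y_k\le\Lambda+t_0\delta_{\max}+O(1)$ deterministically (no martingale noise, since everything is on $\calE$). Because $t_0\delta_{\max}=O(1)$, this gives $Y_k=\bigO(\Lambda)$. The obstacle is to tune $t_0$ and to absorb the $\frac{\eta}{\alpha}H\log(|A|/\theta)$ term, which behaves like $H^2\log$ and not $H^3$ once it is normalized by $\eta t_0$. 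Concretely, this requires $\eta t_0\gamma\Lambda\gtrsim H^2$, i.e.\ $\Lambda\gtrsim H^2\cdot H/(\sqrt K\eta)/\gamma$. Since $\sqrt K\eta=1/H$, taking a longer window $t_0=\Theta(H\sqrt K)$ (admissible when $K\ge H^2$) should recover $\Lambda=\bigO(H^2/\gamma)$. The conditions $K\ge\max\{2H,H^2\}$ are exactly what \Cref{lem: Yk naive bound} and \Cref{lem:drift} need for $\eta\le1$, $\alpha\le1/H^2$, $\theta\le1/(2H)$.
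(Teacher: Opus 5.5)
Your plan, with the window $t_0=\Theta(H\sqrt K)$ you settle on at the end, is essentially the paper's proof. The paper sums \eqref{eq:drift} over a window of length $n_0=H\sqrt K$, telescopes the KL terms through $\widetilde\pi_h^{\tau+1}$ using \Cref{lem:KL mixing} (your bound $\log\frac{1}{1-\theta}\le 2\theta$ plays the same role), and closes with a first-hitting-time contradiction that is the deterministic form of the drift lemma you invoke. One small fix: \Cref{lem: Yk naive bound} only controls upward steps, because a downward step also contains $-\alpha\eta H^3Y_k$. The paper therefore uses $|Y_{k+1}-Y_k|\le 3\alpha\eta^2H^4K+3\eta H$, which equals $6\eta H$ under the chosen parameters, so only constants change. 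With $t_0=H\sqrt K$ the additive $t_0\delta_{\max}$ term is $O(H)$ rather than $O(1)$, which is still $\bigO(H^2/\gamma)$ because $\gamma\le H$.
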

\begin{proof}
    Under $K \geq 2H$, we know that $\eta\leq1, \ \alpha \leq 1/H^2, \theta \leq 1/(2H)$. Thus, Lemmas \ref{lem: Yk naive bound} and \ref{lem:drift} are applicable. Note that
    \begin{align}\label{eq:Yk naive absolute diff}
    \begin{aligned}
        |Y_{k+1} - Y_k| 
        &\leq |-\alpha \eta H^3 Y_k + \eta(b-V_{k,1}^g(s_1) - \alpha H^3 - 2\theta H^2)|\\
        &\leq 3\alpha \eta^2 H^4 k + 3\eta H\\
        &\leq 3\alpha \eta^2 H^4 K + 3\eta H
    \end{aligned}
    \end{align}
    where the first inequality follows from that $|\max\{0, x\} - y| \leq |x-y|$ for any $x\in\bbR$ and $y \in \bbR_+$, and the second inequality follows from \Cref{lem: Yk naive bound}. Moreover, by summing \eqref{eq:drift} over from $k$ to $k+n_0-1$ (later, $n_0$ will be specified), it follows that
    \begin{align*}
        \frac{Y_{k+n_0}^2 - Y_k^2}{2} \leq -\gamma \eta \sum_{\tau=k}^{k+n_0-1} Y_\tau + \frac{\eta}{\alpha} \underbrace{\sum_{\tau=k}^{k+n_0-1}\bbE_{\bar\pi}\left[\sum_{h=1}^H D(\bar\pi_h(\cdot|s_h)||\widetilde\pi_h^\tau(\cdot|s_h)) - D(\bar\pi_h(\cdot|s_h)|| \pi_h^{\tau+1}(\cdot|s_h))\right]}_{\textnormal{(I)}} + Cn_0.
    \end{align*}
    Term (I) can be bounded as
    \begin{align*}
        \textnormal{(I)} 
        &= \sum_{\tau=k}^{k+n_0-1}\bbE_{\bar\pi}\left[\sum_{h=1}^H D(\bar\pi_h(\cdot|s_h)||\widetilde\pi_h^\tau(\cdot|s_h)) - D(\bar\pi_h(\cdot|s_h)|| \widetilde\pi_h^{\tau+1}(\cdot|s_h))\right] \\
        &\quad+ \sum_{\tau=k}^{k+n_0-1}\bbE_{\bar\pi}\left[\sum_{h=1}^H D(\bar\pi_h(\cdot|s_h)||\widetilde\pi_h^{\tau+1}(\cdot|s_h)) - D(\bar\pi_h(\cdot|s_h)||\pi_h^{\tau+1}(\cdot|s_h))\right]\\
        &\leq \sum_{h=1}^H\bbE_{\bar\pi}\left[D(\bar\pi_h(\cdot|s_h)||\widetilde\pi_h^k(\cdot|s_h)) -  D(\bar\pi_h(\cdot|s_h)||\widetilde\pi_h^{k+n_0}(\cdot|s_h))\right] + n_0 H \theta\log|\calA|\\
        &\leq H\log(|\calA|/\theta) + n_0 H\theta\log|\calA| 
    \end{align*}
    where the second and third inequalities follows from \Cref{lem:KL mixing} and that the nonnegativeness of KL divergence. Therefore, we deduce that for any $n_0\geq1$ and $k \leq K-n_0$,
    \begin{align}\label{eq:drift n0}
        \frac{Y_{k+n_0}^2 - Y_k^2}{2} \leq -\gamma \eta \sum_{\tau=k}^{k+n_0-1} Y_\tau + C',
    \end{align}
    where $C' = 3\gamma \eta^2 H\frac{n_0(n_0-1)}{2} (\alpha \eta H^3K + 1) + \frac{\eta}{\alpha}(H\log(|\calA|/\theta) + n_0 H\theta\log|\calA|) + Cn_0$. 
    
    Suppose that there exists $k \in [K]$ such that $Y_k > \frac{2C'}{\eta \gamma n_0} + n_0(3\alpha \eta^2 H^4 K + 3\eta H)$. Then we can take $k_{\textnormal{hit}} = \min\{k\in [K]: Y_k > \frac{2C'}{\eta \gamma n_0} + n_0(3\alpha \eta^2 H^4 K + 3\eta H)\}$, i.e., the first episode such that $Y_k$ exceeds the threshold $\frac{2C'}{\eta \gamma n_0} + n_0(3\alpha \eta^2 H^4 K + 3\eta H)$. Since $|Y_{k+1} - Y_k|  \leq 3\alpha \eta^2 H^4 K + 3\eta H$, we know that $k_{\textnormal{hit}} > n_0$ and $Y_{k_{\textnormal{hit}}-n_0},\ldots, Y_{k_{\textnormal{hit}}-1} \geq \frac{2C'}{\eta \gamma n_0}$ (If not, $Y_{k_{\textnormal{hit}}}$ never reach the threshold). Then, by \eqref{eq:drift n0}, it implies that, 
    \begin{align}
        \frac{Y_{k_{\textnormal{hit}}}^2 - Y_{k_{\textnormal{hit}}-n_0}^2}{2} \leq -\gamma \eta \sum_{\tau=k_{\textnormal{hit}}-n_0}^{k_{\textnormal{hit}}} Y_\tau + C' \leq -C' < 0.
    \end{align}
    This implies that $Y_{k_{\textnormal{hit}}} < Y_{k_{\textnormal{hit}}-n_0} < \frac{2C'}{\eta \gamma n_0} + n_0(3\alpha \eta^2 H^4 K + 3\eta H)$, where the second inequality follows from the fact that $k_{\textnormal{hit}} -n_0 < k_{\textnormal{hit}}$ and $k_{\textnormal{hit}}$ is the first episode that exceeds the threshold. This contradicts $Y_{k_{\textnormal{hit}}} > \frac{2C'}{\eta \gamma n_0} + n_0(3\alpha \eta^2 H^4 K + 3\eta H)$. Therefore, $Y_k \leq \frac{2C'}{\eta \gamma n_0} + n_0(3\alpha \eta^2 H^4 K + 3\eta H)$ for all $k\in [K].$ By taking $n_0 = H\sqrt{K},\ \eta = \frac{1}{H\sqrt{K}}, \ \alpha = \frac{1}{H^2\sqrt{K}},\ \theta = \frac{1}{K},$ it follows that $Y_k = \bigO(H^2/\gamma).$
\end{proof}

\section{Proof of Lemma~\ref{lem:omd}}
\begin{proof}
    Fix $k \in [K]$. Note that
    \begin{align*}
        V_1^{r^k,\pi^*}(s_1) + Y_k V_1^{g,\pi^*}(s_1) - V_{k,1}^r(s_1) - Y_k V_{k,1}^g(s_1)
        &= \bbE_{\pi^*}\left[\sum_{h=1}^H \langle Q_{k,h}^r(s_h,\cdot) + Y_kQ_{k,h}^g(s_h,\cdot), \pi_h^*(\cdot|s_h) - \pi_h^k(\cdot|s_h) \rangle | s_1\right] \\
        &\quad+ \bbE_{\pi^*}\left[\sum_{h=1}^H r_h^k(s_h,a_h) + P_h V_{k,h+1}^r(s_h,a_h) - Q_{k,h}^r(s_h,a_h) | s_1\right]\\
        &\quad+ Y_k\bbE_{\pi^*}\left[\sum_{h=1}^H g_h(s_h,a_h) + P_h V_{k,h+1}^g(s_h,a_h) - Q_{k,h}^g(s_h,a_h) | s_1\right].
    \end{align*}
    Note that the second and third terms are nonpositive by \Cref{lem:optimism 1}. Thus, we focus on the first term. Since $\pi_h^{k+1}(\cdot|s) \in \argmax_{\pi} \langle Q_{k,h}^r(s,\cdot) + Y_k Q_{k,h}^g(s,\cdot), \pi(\cdot|s) \rangle -\frac{1}{\alpha} D(\pi(\cdot|s)||\widetilde\pi_h^k(\cdot|s))$, by \Cref{lem:pushback}, for any $s\in\calS$,
    \begin{align*}
        &\langle Q_{k,h}^r(s,\cdot) + Y_k Q_{k,h}^g(s,\cdot), \pi_h^{k+1}(\cdot|s)\rangle - \frac{1}{\alpha}D(\pi_h^{k+1}(\cdot|s)|| \widetilde\pi_h^k(\cdot|s))\\
        &\geq \langle Q_{k,h}^r(s,\cdot) + Y_k Q_{k,h}^g(s,\cdot), \pi_h^{*}(\cdot|s) \rangle - \frac{1}{\alpha}D(\pi_h^*(\cdot|s)|| \widetilde\pi_h^k(\cdot|s)) + \frac{1}{\alpha}D(\pi_h^*(\cdot|s)||\pi_h^{k+1}(\cdot|s)).
    \end{align*}
    By rearranging the inequality, we have 
    \begin{align*}
        \langle Q_{k,h}^r(s,\cdot) + Y_k Q_{k,h}^g(s,\cdot), \pi_h^{*}(\cdot|s) - \pi_h^{k}(\cdot|s) \rangle
        &\leq \langle Q_{k,h}^r(s,\cdot) + Y_k Q_{k,h}^g(s,\cdot), \pi_h^{k+1}(\cdot|s) - \pi_h^{k}(\cdot|s) \rangle - \frac{1}{\alpha}D(\pi_h^{k+1}(\cdot|s)|| \widetilde\pi_h^k(\cdot|s)) \\
        &\quad + \frac{1}{\alpha}D(\pi_h^*(\cdot|s)|| \widetilde\pi_h^k(\cdot|s)) - \frac{1}{\alpha}D(\pi_h^*(\cdot|s)||\pi_h^{k+1}(\cdot|s)) \\
        &= \langle Q_{k,h}^r(s,\cdot) + Y_k Q_{k,h}^g(s,\cdot), \pi_h^{k+1}(\cdot|s) - \pi_h^{k}(\cdot|s) \rangle - \frac{1}{\alpha}D(\pi_h^{k+1}(\cdot|s)|| \widetilde\pi_h^k(\cdot|s)) \\
        &\quad + \frac{1}{\alpha}D(\pi_h^*(\cdot|s)|| \pi_h^k(\cdot|s)) - \frac{1}{\alpha}D(\pi_h^*(\cdot|s)||\pi_h^{k+1}(\cdot|s)) \\
        &\quad+ \frac{1}{\alpha}D(\pi_h^*(\cdot|s)|| \widetilde\pi_h^k(\cdot|s)) - \frac{1}{\alpha}D(\pi_h^*(\cdot|s)|| \pi_h^k(\cdot|s)).
    \end{align*}
    Then we take the sum over $k=1,\ldots,K$. Then it follows that
    \begin{align*}
        &\sum_{k=1}^K\langle Q_{k,h}^r(s,\cdot) + Y_k Q_{k,h}^g(s,\cdot), \pi_h^{*}(\cdot|s) - \pi_h^{k}(\cdot|s) \rangle \\
        &\leq \sum_{k=1}^K\langle Q_{k,h}^r(s,\cdot) + Y_k Q_{k,h}^g(s,\cdot), \pi_h^{k+1}(\cdot|s) - \pi_h^{k}(\cdot|s) \rangle - \sum_{k=1}^K \frac{1}{\alpha}D(\pi_h^{k+1}(\cdot|s)|| \widetilde\pi_h^k(\cdot|s))\\
        &\quad + \frac{1}{\alpha}D(\pi_h^*(\cdot|s)|| \pi_h^1(\cdot|s)) - \frac{1}{\alpha}D(\pi_h^*(\cdot|s)||\pi_h^{K+1}(\cdot|s)) \\
        &\quad+ \frac{1}{\alpha} \sum_{k=1}^K \left(D(\pi_h^*(\cdot|s)|| \widetilde\pi_h^k(\cdot|s)) - D(\pi_h^*(\cdot|s)|| \pi_h^k(\cdot|s))\right)\\
        &\leq \sum_{k=1}^K\langle Q_{k,h}^r(s,\cdot) + Y_k Q_{k,h}^g(s,\cdot), \pi_h^{k+1}(\cdot|s) - \pi_h^{k}(\cdot|s) \rangle + \frac{1}{\alpha}D(\pi_h^*(\cdot|s)|| \pi_h^1(\cdot|s))\\
        &\quad+ \frac{1}{\alpha} \sum_{k=1}^K \left(D(\pi_h^*(\cdot|s)|| \widetilde\pi_h^k(\cdot|s)) - D(\pi_h^*(\cdot|s)|| \pi_h^k(\cdot|s))\right)\\
        &\leq \sum_{k=1}^K\langle Q_{k,h}^r(s,\cdot) + Y_k Q_{k,h}^g(s,\cdot), \pi_h^{k+1}(\cdot|s) - \pi_h^{k}(\cdot|s) \rangle + \frac{\log|\calA|}{\alpha}\\
        &\quad+ \frac{1}{\alpha} \sum_{k=1}^K \left(D(\pi_h^*(\cdot|s)|| \widetilde\pi_h^k(\cdot|s)) - D(\pi_h^*(\cdot|s)|| \pi_h^k(\cdot|s))\right)
    \end{align*}
    where the second inequality follows from the nonnegativeness of KL divergence, and the last inequality is due to $\pi_h^1(\cdot|s) = 1/|\calA|$ for any $s$. Moreover,
    \begin{align}
        &\sum_{k=1}^K\langle Q_{k,h}^r(s,\cdot) + Y_k Q_{k,h}^g(s,\cdot), \pi_h^{k+1}(\cdot|s) - \pi_h^{k}(\cdot|s) \rangle \notag\\
        &\leq \sum_{k=1}^K \| Q_{k,h}^r(s,\cdot) + Y_k Q_{k,h}^g(s,\cdot)\|_\infty \|\pi_h^{k+1}(\cdot|s) - \pi_h^{k}(\cdot|s)\|_1 \notag\\
        &\leq \sum_{k=1}^K \| Q_{k,h}^r(s,\cdot) + Y_k Q_{k,h}^g(s,\cdot)\|_\infty \|\pi_h^{k+1}(\cdot|s) - \widetilde\pi_h^{k}(\cdot|s)\|_1 \notag\\
        &\quad+ \sum_{k=1}^K \| Q_{k,h}^r(s,\cdot) + Y_k Q_{k,h}^g(s,\cdot)\|_\infty \|\widetilde\pi_h^{k}(\cdot|s) - \pi_h^{k}(\cdot|s)\|_1 \notag\\
        &\leq \sum_{k=1}^K \alpha H^2(1+Y_k)^2 + \theta\sum_{k=1}^K \| Q_{k,h}^r(s,\cdot) + Y_k Q_{k,h}^g(s,\cdot)\|_\infty\|\piunif(\cdot|s) - \pi_h^k(\cdot|s)\|_1 \notag\\
        &=\bigO\left(\frac{\alpha H^6 K}{\gamma^2} + \frac{\theta K H^3}{\gamma}\right) \label{eq:omd 1}
    \end{align}
    where the first inequality is due to H\"older's inequality, the second inequality is due to the triangle inequality, the third inequality is due to \Cref{lem:softmax lipschitz} and that $\widetilde\pi_h^k(\cdot|s) = (1-\theta) \pi_h^k(\cdot|s) + \theta \piunif(\cdot|s)$, and the last equality is due to \Cref{lem:Yk bound}. Moreover,  
    \begin{align}\label{eq:omd 2}
    \begin{aligned}
        \frac{1}{\alpha} \sum_{k=1}^K \left(D(\pi_h^*(\cdot|s)|| \widetilde\pi_h^k(\cdot|s)) - D(\pi_h^*(\cdot|s)|| \pi_h^k(\cdot|s))\right) 
        &\leq \frac{1}{\alpha} \sum_{k=1}^K \theta \log|\calA| \leq \frac{\theta K \log|\calA|}{\alpha}.
    \end{aligned}
    \end{align}
    Finally, by applying \eqref{eq:omd 1} and \eqref{eq:omd 2}, we have
    \begin{align*}
        \sum_{k=1}^K\langle Q_{k,h}^r(s,\cdot) + Y_k Q_{k,h}^g(s,\cdot), \pi_h^{*}(\cdot|s) - \pi_h^{k}(\cdot|s) \rangle = \bigO\left(\frac{\alpha H^6 K}{\gamma^2} + \frac{\theta K }{\alpha} + \frac{1}{\alpha} + \frac{\theta K H^3}{\gamma}\right).
    \end{align*}
    Then it follows that 
    \begin{align*}
        \sum_{k=1}^K\bbE_{\pi^*}\left[\sum_{h=1}^H \langle Q_{k,h}^r(s_h,\cdot) + Y_kQ_{k,h}^g(s_h,\cdot), \pi_h^*(\cdot|s_h) - \pi_h^k(\cdot|s_h) \rangle | s_1\right] = \bigO\left(\frac{\alpha H^7 K}{\gamma^2} + \frac{\theta HK }{\alpha} + \frac{H}{\alpha}+\frac{\theta K H^4}{\gamma}\right).
    \end{align*}
    Finally, we have
    \begin{align*}
        \sum_{k=1}^K \left(V_1^{r^k,\pi^*}(s_1) + Y_k V_1^{g,\pi^*}(s_1) - V_{k,1}^r(s_1) - Y_k V_{k,1}^g(s_1) \right) =\bigO\left(\frac{\alpha H^7 K}{\gamma^2} + \frac{\theta HK }{\alpha} + \frac{H}{\alpha}+ \frac{\theta K H^4}{\gamma}\right).
    \end{align*}
    Since we set $\alpha = 1/(H^2\sqrt{K}),\ \theta = 1/K$, the proof is completed.
\end{proof}

\section{Proof of Theorem~\ref{thm:main}}
\begin{proof} 
By Lemmas \ref{lem: beta hat}, \ref{lem:ltv}, \ref{lem:mds term}, and \ref{lem:mds term for V - Q}, by union bound, the good event $\calE$ holds with probability at least $1-6\delta$. Moreover, under $\calE$ and $K\geq \max\{2H, H^2\}$, \Cref{lem:Yk bound} holds. Therefore, with probability at least $1-6\delta$, the good event $\calE$ and $Y_k = \bigO(H^2/\gamma)$ for all $k$. We assume these throughout the proof.

Next, we bound the regret. Note that the regret can be decomposed as follows.
\begin{align*}
    &\sum_{k=1}^K \left(V_1^{r^k,\pi^*}(s_1) - V_1^{r^k,\pi^k}(s_1)\right) \\
    &= \sum_{k=1}^K \left(V_1^{r^k,\pi^*}(s_1) + Y_k b - V_{k,1}^r(s_1) - Y_k V_{k,1}^g(s_1) \right) + \sum_{k=1}^K \left(V_{k,1}^r(s_1) - V_1^{r^k,\pi^k}(s_1)\right) + \sum_{k=1}^K Y_k\left(V_{k,1}^g(s_1)-b\right)\\
    &\leq \underbrace{\sum_{k=1}^K \left(V_1^{r^k,\pi^*}(s_1) + Y_k V_1^{g,\pi^*}(s_1) - V_{k,1}^r(s_1) - Y_k V_{k,1}^g(s_1) \right)}_{\textnormal{(I)}} \\
    &\quad+ \underbrace{\sum_{k=1}^K \left(V_{k,1}^r(s_1) - V_1^{r^k,\pi^k}(s_1)\right)}_{\textnormal{(II)}} + \underbrace{\sum_{k=1}^K Y_k\left(V_{k,1}^g(s_1)-b\right)}_{\textnormal{(III)}}
\end{align*}

    By \Cref{lem:omd}, 
    \begin{align*}
        \textnormal{(I)} =\bigO\left(\frac{H^5 \sqrt{K}}{\gamma^2} + H^3\sqrt{K} + \frac{H^4}{\gamma}\right).
    \end{align*}
    By \Cref{lem:bias}, 
    \begin{align*}
        \textnormal{(II)} = \bigO\left(\sqrt{dH^4 K} + \sqrt{d^2 H^3 K}\right).
    \end{align*}
    Note that (III) can be bounded as follows. Due to the dual update
    \begin{align*}
        0 
        &\leq  Y_{K+1}^2\\
        &= \sum_{k=1}^K (Y_{k+1}^2 - Y_k^2) \\
        &\leq \sum_{k=1}^K \left(2Y_k\eta(b- V_{k,1}^g(s_1) -\alpha H^3(1+Y_k) - 2\theta H^2) + \eta^2(b- V_{k,1}^g(s_1) -\alpha H^3(1+Y_k) - 2\theta H^2)^2\right).
    \end{align*}
    It can be rewritten as
    \begin{align*}
        \textnormal{(III)}&=\sum_{k=1}^K Y_k(V_{k,1}^g(s_1) -b) \\
        &\leq -\sum_{k=1}^K Y_k(\alpha H^3(1+Y_k) + 2\theta H^2) + \sum_{k=1}^K \frac{\eta}{2} \left(b-V_{k,1}^g(s_1) - \alpha H^3 (1+Y_k) -2\theta H^2\right)^2\\
        &\leq \sum_{k=1}^K \frac{3\eta}{2} \left(H^2 + \alpha^2 H^6 (1+Y_k)^2 + 4\theta^2 H^4\right)\\
        &=\bigO\left(H\sqrt{K} + \frac{H^5}{\gamma^2 \sqrt{K}}\right)
    \end{align*}
    where the second inequality follows from the Cauchy-Schwarz inequality, and the equality follows from the choice of $\eta, \alpha, \theta$, the assumption $K\geq H^2$, and \Cref{lem:Yk bound}.
    Finally, we deduce that 
    \begin{align*}
        \Regret = \bigO\left(\sqrt{dH^4 K} + \sqrt{d^2 H^3 K} + \frac{H^5 \sqrt{K}}{\gamma^2} + H^3\sqrt{K} + \frac{H^4}{\gamma}\right)
    \end{align*}
    Next, we analyze constraint violation. Note that the violation can be decomposed as
    \begin{align*}
    \sum_{k=1}^K (b - V_1^{g,\pi^k}(s_1)) = \underbrace{\sum_{k=1}^K (b - V_{k,1}^{g}(s_1))}_{\textnormal{(IV)}} + \underbrace{\sum_{k=1}^K (V_{k,1}^{g}(s_1) - V_1^{g,\pi^k}(s_1))}_{\textnormal{(V)}}
\end{align*}
To bound (IV), note that $Y_{k+1}\geq (1-\alpha\eta H^3) Y_k  + \eta(b- V_{k,1}^g(s_1) - \alpha H^3 - 2\theta H^2)$. This leads to
\begin{align*}
    b - V_{k,1}^g(s_1) \leq \frac{Y_{k+1} - Y_k}{\eta} + \alpha  H^3 Y_k + \alpha H^3 + 2\theta H^2.
\end{align*}
By summing the above inequality over $k=1,\ldots,K$,
\begin{align*}
    \sum_{k=1}^K (b- V_{k,1}^g(s_1)) 
    &\leq \frac{Y_{K+1}}{\eta} + \alpha H^3 \sum_{k=1}^K Y_k + \alpha KH^3 + 2\theta KH^2 \\
    &= \bigO\left(\frac{H^3}{\gamma}\sqrt{K}\right).
\end{align*}
By \Cref{lem:bias}, 
\begin{align*}
    \textnormal{(V)} = \bigO\left(\sqrt{dH^4 K} + \sqrt{d^2 H^3 K}\right).
\end{align*}
Finally, 
\begin{align*}
    \Violation = \bigO\left(\sqrt{dH^4 K} + \sqrt{d^2 H^3 K}+ \frac{H^3\sqrt{K}}{\gamma}\right).
\end{align*}
\end{proof}

\begin{lemma}[Lemma 11 in \cite{abbasi2011improved}]\label{lem:elliptical}
    For any $\lambda > 0$ and sequence $\{\bfx_t\}_{t=1}^T \subset \bbR^d$ for $t \in \{0,\ldots,T\}$, define $Z_t = \lambda I + \sum_{i=1}^t \bfx_i\bfx_i^\top$. Then, provided that $\|\bfx_t\|_2 \leq L$ holds for all $t\in [T],$ we have
    \begin{align*}
        \sum_{t=1}^T \min \{1, \|\bfx_t\|_{Z_{t-1}^{-1}}^2\} \leq 2d\log\frac{d\lambda + TL^2}{d\lambda}.
    \end{align*}
\end{lemma}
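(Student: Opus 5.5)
The statement is the elliptical potential lemma (Lemma 11 of \cite{abbasi2011improved}). I would prove it by the standard determinant-telescoping argument. Fix $t \in [T]$ and write $Z_t = Z_{t-1} + \bfx_t\bfx_t^\top = Z_{t-1}^{1/2}\bigl(I + Z_{t-1}^{-1/2}\bfx_t\bfx_t^\top Z_{t-1}^{-1/2}\bigr)Z_{t-1}^{1/2}$. The inner matrix is the identity plus a rank-one term, so the matrix determinant lemma gives
\[
\det Z_t = \det Z_{t-1}\,\bigl(1 + \|\bfx_t\|_{Z_{t-1}^{-1}}^2\bigr).
\]
Iterating from $Z_0 = \lambda I$ yields $\sum_{t=1}^T \log\bigl(1+\|\bfx_t\|_{Z_{t-1}^{-1}}^2\bigr) = \log\frac{\det Z_T}{\det(\lambda I)}$.

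Next I would link $\min\{1,u\}$ to $\log(1+u)$. For $u\ge 0$ we have $\min\{1,u\} \le 2\log(1+u)$. To see this, note that on $[0,1]$ the function $\log(1+u)$ is concave, so it lies above its chord $u\log 2 \ge u/2$. For $u>1$ we get $2\log(1+u)\ge 2\log 2 > 1$. Summing over $t$ therefore gives
\[
\sum_{t=1}^T \min\{1,\|\bfx_t\|_{Z_{t-1}^{-1}}^2\} \le 2\log\frac{\det Z_T}{\lambda^d}.
\]

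Finally I would bound $\det Z_T$ by AM--GM on its eigenvalues: $\det Z_T \le (\operatorname{tr} Z_T/d)^d$. Here $\operatorname{tr} Z_T = d\lambda + \sum_t \|\bfx_t\|_2^2 \le d\lambda + TL^2$, so
\[
\log\frac{\det Z_T}{\lambda^d} \le d\log\frac{d\lambda+TL^2}{d\lambda},
\]
which is exactly the claim.

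There is no real obstacle. The only care needed is the scalar inequality $\min\{1,u\}\le 2\log(1+u)$, in particular the constant on $[0,1]$, and correctly applying the rank-one determinant identity with $Z_{t-1}\succ 0$, which holds since $\lambda>0$.
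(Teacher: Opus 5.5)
Your proof is correct and is the standard argument behind Lemma 11 of \cite{abbasi2011improved}, which the paper invokes without reproving. The rank-one determinant identity telescopes $\sum_{t}\log(1+\|\bfx_t\|_{Z_{t-1}^{-1}}^2)$ into $\log(\det Z_T/\lambda^d)$, the bound $\min\{1,u\}\le 2\log(1+u)$ carries this over to the truncated sum, and the determinant--trace inequality $\det Z_T\le(\mathrm{tr}\,Z_T/d)^d\le\bigl((d\lambda+TL^2)/d\bigr)^d$ finishes it.
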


\begin{lemma}[Theorem 4.1 in \cite{zhou2021nearly}]\label{lem:zhou thm 4.1}
    Let $\{\calG_t\}_{t=1}^\infty$ be a filtration, $\{\bfx_t, \eta_t\}_{t\ge1}$ a stochastic process so that $\bfx_t\in\bbR^d$ is $\calG_t$-measurable and $\eta_t \in \bbR$ is $\calG_{t+1}$-measurable. Fix $R,L,\sigma,\lambda > 0$, $\bm{\mu}^* \in \bbR^d$. For $t\geq 1$, let $y_t = \langle \bm{\mu}^*,\bfx_t \rangle + \eta_t$ and suppose that $\eta_t, \bfx_t$ also satisfy
    \begin{align*}
        |\eta_t| \leq R, \ \bbE[\eta_t |\calG_t] = 0, \ \bbE[\eta_t^2 | \calG_t] \leq \sigma^2, \ \|\bfx_t\|_2 \leq L.
    \end{align*}
    Then, for any $0<\delta<1$, with probability at least $1-\delta$, we have
    \begin{align*}
        \forall t > 0, \ \left\|\sum_{i=1}^t \bfx_i \eta_i\right\|_{Z_t^{-1}} \leq \beta_t, \ \|\bm{\mu}_t - \bm{\mu}^*\|_{Z_t}\leq \beta_t + \sqrt{\lambda}\|\bm{\mu}^*\|_2,
    \end{align*}
    where for $t\geq 1$, $\bm{\mu}_t = Z_t^{-1}\bfb_t$, $Z_t = \lambda I + \sum_{i=1}^t \bfx_i\bfx_i^\top$, $\bfb_t = \sum_{i=1}^t y_i \bfx_i$ and
    \begin{align*}
        \beta_t = 8\sigma\sqrt{d\log(1 + tL^2/(d\lambda))\log(4t^2/\delta)} + 4R\log(4t^2 / \delta).
    \end{align*}
\end{lemma}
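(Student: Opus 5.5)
The statement is Theorem 4.1 of \cite{zhou2021nearly}, a Bernstein-type self-normalized bound. If I were proving it rather than citing it, I would argue by induction on $t$ over a Freedman-inequality event. The second claim is immediate from the first. Write $S_t=\sum_{i=1}^t \bfx_i\eta_i$. Since $\bfb_t=Z_t\bm{\mu}^*-\lambda\bm{\mu}^*+S_t$, we have $\bm{\mu}_t-\bm{\mu}^*=Z_t^{-1}(S_t-\lambda\bm{\mu}^*)$. Hence
\[
\|\bm{\mu}_t-\bm{\mu}^*\|_{Z_t}\le \|S_t\|_{Z_t^{-1}}+\lambda\|\bm{\mu}^*\|_{Z_t^{-1}}\le \|S_t\|_{Z_t^{-1}}+\sqrt{\lambda}\|\bm{\mu}^*\|_2,
\]
using $Z_t\succeq\lambda I$. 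So the work is to show $d_t:=\|S_t\|_{Z_t^{-1}}\le\beta_t$ for all $t$ simultaneously.

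\textbf{Recursion for $d_t^2$.} Let $w_t=\|\bfx_t\|_{Z_{t-1}^{-1}}$. Apply Sherman--Morrison, $Z_t^{-1}=Z_{t-1}^{-1}-Z_{t-1}^{-1}\bfx_t\bfx_t^\top Z_{t-1}^{-1}/(1+w_t^2)$, and expand $S_t=S_{t-1}+\bfx_t\eta_t$. This gives
\[
d_t^2\le d_{t-1}^2+\frac{2\eta_t\,\bfx_t^\top Z_{t-1}^{-1}S_{t-1}}{1+w_t^2}+\frac{\eta_t^2 w_t^2}{1+w_t^2}.
\]
Summing over $t$ splits $d_t^2$ into a martingale part $M_t=\sum_i 2\eta_i\bfx_i^\top Z_{i-1}^{-1}S_{i-1}/(1+w_i^2)$ and a quadratic part $\sum_i\eta_i^2\min\{1,w_i^2\}$.

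\textbf{Quadratic part.} Split it as $\sum_i\bbE[\eta_i^2|\calG_i]\min\{1,w_i^2\}$ plus a martingale correction. The first sum is at most $\sigma^2\cdot 2d\log(1+tL^2/(d\lambda))$ by \Cref{lem:elliptical}. For the correction I would use Freedman's inequality: its increments are bounded by $R^2$ and its conditional variance by $R^2\sigma^2\min\{1,w_i^2\}$. This yields a contribution of order $\sigma^2 d\log(\cdot)+R^2\log(t^2/\delta)$.

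\textbf{Martingale part (main obstacle).} The increments of $M_t$ scale with the unknown $d_{i-1}$, because $|\bfx_i^\top Z_{i-1}^{-1}S_{i-1}|\le w_i d_{i-1}$. To handle this I would truncate: multiply the $i$-th increment by the $\calG_i$-measurable indicator $\mathds{1}\{d_{i-1}\le\beta_{i-1}\}$. After truncation:
\begin{itemize}
\item each increment is at most $2R\beta_{i-1}\min\{1,w_i\}$;
\item its conditional variance is at most $4\sigma^2\beta_{i-1}^2\min\{1,w_i^2\}$, whose sum is controlled again by \Cref{lem:elliptical}.
\end{itemize}
Freedman's inequality, applied with confidence $\delta/(4t^2)$ and a union bound over $t$ (a peeling over the variance level costs only log factors), bounds the truncated martingale by
\[
\beta_t\Big(\sigma\sqrt{d\log(1+tL^2/(d\lambda))\log(4t^2/\delta)}+R\log(4t^2/\delta)\Big)
\]
up to constants.

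\textbf{Closing the induction.} On the intersection of these events, assume $d_i\le\beta_i$ for all $i<t$; then every indicator equals $1$ and the truncated martingale equals $M_t$. Combining the two parts gives
\[
d_t^2\le c\,\beta_t\big(\sigma\sqrt{d\log\cdot\,\log\cdot}+R\log\cdot\big)+c\big(\sigma^2 d\log\cdot+R^2\log\cdot\big).
\]
With the constants $8$ and $4$ in $\beta_t$, the right-hand side is at most $\beta_t^2$: the first term is at most $\beta_t^2/2$, and the second is at most $\beta_t^2/2$ because it is at most the square of the base quantity. This closes the induction. The constant bookkeeping at this step, together with choosing the truncation so that the Freedman bounds do not depend on the random $d_{i-1}$, is where I expect the real difficulty to lie.
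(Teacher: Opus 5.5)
Your sketch is correct and is essentially the standard proof of Theorem 4.1 in Zhou et al.\ (adapted from Dani et al.); the paper itself only cites that theorem and gives no proof of its own. The steps match: the Sherman--Morrison recursion for $\|S_t\|_{Z_t^{-1}}^2$, truncation of the cross term by a $\mathcal{G}_i$-measurable indicator, Freedman's inequality at level $\delta/(4t^2)$ for both the cross term and the centered quadratic term, a union bound over $t$, and the induction, and your constants do close to $\beta_t^2$.

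One small correction: no peeling over variance levels is needed, and using it would in fact spoil the exact $\log(4t^2/\delta)$ factor in $\beta_t$. After truncation, both the increment bound and the summed conditional variance, which is at most $8\sigma^2\beta_t^2 d\log(1+tL^2/(d\lambda))$, are deterministic. This follows from \Cref{lem:elliptical} and the monotonicity of $\beta_t$.
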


\begin{lemma}[Lemma 31 of \cite{wei2020online}]\label{lem:KL mixing}
Let $\pi_1, \pi_2$ be two probability distributions in $\Delta(\mathcal{A})$.  
Let $\tilde{\pi}_2 = (1-\theta)\pi_2 + \theta/|\mathcal{A}|$ where $\theta \in (0,1)$.  
Then,
\begin{align*}
    D(\pi_1 \| \tilde{\pi}_2) - D(\pi_1 \| \pi_2) \leq \theta \log |\mathcal{A}|,\quad D(\pi_1 \| \tilde{\pi}_2) \leq \log(|\mathcal{A}|/\theta).
\end{align*}
\end{lemma}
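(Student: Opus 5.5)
The statement is Lemma 31 of \cite{wei2020online}, the mixing bound for KL divergence. I would prove it directly from the definition of KL and the monotonicity of the logarithm.

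\textbf{Second inequality.} Since $\tilde\pi_2(a) \ge \theta/|\mathcal{A}|$ for every $a$, we have $-\log \tilde\pi_2(a) \le \log(|\mathcal{A}|/\theta)$. Also $\sum_a \pi_1(a)\log \pi_1(a) \le 0$, because it is minus the entropy. Hence
\[
D(\pi_1\|\tilde\pi_2)=\sum_a \pi_1(a)\log\pi_1(a)-\sum_a\pi_1(a)\log\tilde\pi_2(a)\le \log(|\mathcal{A}|/\theta).
\]

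\textbf{First inequality.} Write the difference as
\[
D(\pi_1\|\tilde\pi_2)-D(\pi_1\|\pi_2)=\sum_a \pi_1(a)\log\frac{\pi_2(a)}{\tilde\pi_2(a)}.
\]
Terms with $\pi_1(a)=0$ vanish. If $\pi_2(a)=0$ at some $a$ with $\pi_1(a)>0$, then $D(\pi_1\|\pi_2)=\infty$. In that case I would adopt the convention that the claim holds trivially, or else restrict to the case where $D(\pi_1\|\pi_2)$ is finite.

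For the remaining terms, $\tilde\pi_2(a)\ge (1-\theta)\pi_2(a)$ gives $\log(\pi_2(a)/\tilde\pi_2(a))\le \log\frac{1}{1-\theta}$. This yields the bound $\log\frac1{1-\theta}$, which is not yet $\theta\log|\mathcal{A}|$. To get the stated form, I would use convexity in a sharper way: $\log$ is concave, so
\[
\log\tilde\pi_2(a)\ge (1-\theta)\log\pi_2(a)+\theta\log(1/|\mathcal{A}|).
\]
Therefore
\[
\log\frac{\pi_2(a)}{\tilde\pi_2(a)}\le \theta\log\pi_2(a)+\theta\log|\mathcal{A}|\le \theta\log|\mathcal{A}|,
\]
where the last step uses $\pi_2(a)\le 1$. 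Averaging over $\pi_1$ gives the first inequality.

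The only delicate point is the handling of zero-probability entries. The concavity step is the key move; I expect no real obstacle.
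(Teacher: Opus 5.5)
Your proof is correct. The paper gives no proof of this lemma; it imports it as Lemma 31 of \cite{wei2020online}. Your argument therefore supplies a self-contained justification rather than competing with an in-paper proof.

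Your pointwise concavity step is the convexity of $q \mapsto D(\pi_1\|q)$, written out coordinatewise. Convexity gives $D(\pi_1\|\tilde\pi_2) \le (1-\theta)D(\pi_1\|\pi_2) + \theta D(\pi_1\|\piunif)$, and $D(\pi_1\|\piunif) \le \log|\calA|$. Subtracting $D(\pi_1\|\pi_2)$ leaves exactly your intermediate quantity $\theta\bigl(\log|\calA| + \sum_a \pi_1(a)\log\pi_2(a)\bigr)$, so the two phrasings are interchangeable.

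On the zero-probability case, you do not need a convention. If $\pi_1(a)>0=\pi_2(a)$ for some $a$, then $D(\pi_1\|\pi_2)=+\infty$. Your second inequality gives $D(\pi_1\|\tilde\pi_2)\le\log(|\calA|/\theta)<\infty$, so the left-hand side is $-\infty$ and the bound holds outright. Moreover, in every use in the paper the second argument has full support: $\pi_h^1=\piunif$, and each later $\pi_h^k$ is a softmax reweighting of $\widetilde\pi_h^{k-1}\ge\theta/|\calA|$. So this case never arises.
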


\begin{lemma}[Lemma 22 of \cite{yu2026primaldual}]\label{lem:policy inequality}
    Let $\pi_h^k:\calS \to \Delta(\calA)$ be any policies. For $\theta \in [0,1]$, let $\widetilde\pi_h^k(\cdot|s)=(1-\theta)\pi_h^k(\cdot|s) + \theta\piunif(\cdot|s)$. For $ Q_{k,h}^r,  Q_{k,h}^g:\calS \times\calA \to [0,H]$, $Y_k \in \bbR_+$, and $\alpha >0$, let $\pi^{k+1}(\cdot\mid s) \propto \widetilde\pi^k(\cdot\mid s)\exp(\alpha ( Q_{k,h}^r(s,\cdot) + Y_k  Q_{k,h}^g(s,\cdot))$. For any $s\in\calS$, we have
    \begin{enumerate}
        \item $\|\pi_h^{k+1}(\cdot\mid s) - \widetilde\pi_h^k(\cdot\mid s)\|_1 \leq \alpha H(1+Y_k),$
        \item $\left|\langle \pi_h^{k+1}(\cdot|s) - \pi_h^{k}(\cdot|s),  Q_{k,h}^g(s,\cdot) \rangle\right| \leq \alpha H^2(1+Y_k) + 2\theta H.$
        \item $\langle \pi_h^{k+1}(\cdot|s) - \pi_h^{k}(\cdot|s),  Q_{k,h}^r(s,\cdot) \rangle - \frac{1}{\alpha} D(\pi_h^{k+1}(\cdot|s)|| \widetilde\pi_h^{k}(\cdot|s)) \leq \alpha H^2/2 + 2H\theta.$
    \end{enumerate}
\end{lemma}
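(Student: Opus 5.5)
The plan is to reduce all three items to one estimate: how far a single exponential-weights step moves the perturbed policy. Fix $s$ and $h$, and write $q(\cdot)=Q_{k,h}^r(s,\cdot)+Y_kQ_{k,h}^g(s,\cdot)$. Since $Q_{k,h}^r,Q_{k,h}^g\in[0,H]$ and $Y_k\ge 0$, every entry of $q$ lies in $[0,R]$ with $R=H(1+Y_k)$. Abbreviate $p=\pi_h^{k+1}(\cdot|s)$ and $\tilde p=\widetilde\pi_h^k(\cdot|s)$. Then $p(a)=\tilde p(a)e^{\alpha q(a)}/Z$ with $Z=\sum_a\tilde p(a)e^{\alpha q(a)}$.

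\emph{Item 1.} I would bound $D(p\|\tilde p)$ and then apply Pinsker. Directly from the form of $p$,
\[
D(p\|\tilde p)=\alpha\langle p,q\rangle-\log Z .
\]
By Jensen, $\log Z\ge \alpha\langle\tilde p,q\rangle$, so $D(p\|\tilde p)\le \alpha\langle p-\tilde p,q\rangle$. Because $p-\tilde p$ sums to zero, we may replace $q$ by $q-R/2$, whose entries lie in $[-R/2,R/2]$. This gives $D(p\|\tilde p)\le \tfrac{\alpha R}{2}\|p-\tilde p\|_1$. Pinsker's inequality gives $\|p-\tilde p\|_1^2\le 2D(p\|\tilde p)$. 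Combining the two, $\|p-\tilde p\|_1^2\le \alpha R\|p-\tilde p\|_1$, hence $\|p-\tilde p\|_1\le\alpha H(1+Y_k)$. The one delicate point is the centering step: without it the bound would pick up an extra factor of $2$.

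\emph{Item 2.} I would split
\[
\pi_h^{k+1}-\pi_h^k=(p-\tilde p)+(\tilde p-\pi_h^k),
\qquad \tilde p-\pi_h^k=\theta\bigl(\piunif-\pi_h^k\bigr).
\]
Hölder's inequality with $\|Q_{k,h}^g(s,\cdot)\|_\infty\le H$ bounds the first piece by $H\cdot\alpha H(1+Y_k)$, using item 1. It bounds the second piece by $H\cdot\theta\|\piunif-\pi_h^k\|_1\le 2\theta H$. Adding the two gives $\alpha H^2(1+Y_k)+2\theta H$.

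\emph{Item 3.} I would use the same split, now with $Q_{k,h}^r$. The perturbation piece again contributes at most $2\theta H$. For the remaining piece, write $x=\|p-\tilde p\|_1$. Hölder's inequality and Pinsker give
\[
\langle p-\tilde p,Q_{k,h}^r(s,\cdot)\rangle-\tfrac1\alpha D(p\|\tilde p)\le Hx-\tfrac{1}{2\alpha}x^2 .
\]
Maximizing the right-hand side over $x\ge 0$ gives at most $\alpha H^2/2$. This step needs no bound on $Y_k$, because the KL penalty absorbs the movement of the policy. Adding the two contributions gives $\alpha H^2/2+2H\theta$.

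I expect no serious obstacle. The only care needed is in item 1, where the constant must come out exactly as $\alpha H(1+Y_k)$; the centering argument above handles this.
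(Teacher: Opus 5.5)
Your proposal is correct in all three items. The paper gives no argument of its own for this lemma; it imports it from \cite{yu2026primaldual}. So the only comparison available is with the tools the paper has on hand.

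With those tools, the natural route to item~1 is the pushback inequality (Lemma~\ref{lem:pushback}) applied with $z=\tilde p$. For the linear objective $\langle q,\cdot\rangle$ it gives
\[
D(p\|\tilde p)+D(\tilde p\|p)\le\alpha\langle p-\tilde p,q\rangle .
\]
This in fact holds with equality, as one checks directly from $\log(p/\tilde p)=\alpha q-\log Z$. Applying Pinsker to both divergences gives $\|p-\tilde p\|_1^2\le\alpha\|q\|_\infty\|p-\tilde p\|_1$, and hence item~1 with no centering.

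Your Jensen step discards exactly the term $\log Z-\alpha\langle\tilde p,q\rangle=D(\tilde p\|p)$. Your centering trick then recovers the factor of $2$ that this costs. Keeping both the reverse divergence and your centering would even give $\|p-\tilde p\|_1\le\alpha H(1+Y_k)/2$.

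What your version buys over invoking Lemma~\ref{lem:pushback} is self-containedness. It also needs no interior assumption: that lemma as stated requires $\tilde p$ and $p$ to lie in $\interior(\Delta)$. The statement here allows $\theta=0$ with an arbitrary $\pi_h^k$, so that assumption can fail. Your direct computation works on the common support of $p$ and $\tilde p$.

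Items~2 and~3 are the standard H\"older and Pinsker estimates, and your treatment of them is fine.
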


\begin{lemma}[Lemma 1 of \cite{shani2020optimistic}]\label{lem:extended value diff}
Let $\pi, \pi'$ be two policies, and let $\mathcal{M} = (H, \mathcal{S}, \mathcal{A}, P, r,s_1)$ be an MDP. 
For all $h\in [H]$, let ${Q}_h^r:\calS\times\calA \rightarrow \bbR$ be an arbitrary function, and let
${V}_h^r(s) = \left\langle {Q}_h^r(s, \cdot), \pi'_h(\cdot \mid s) \right\rangle$ for all $s\in\calS$.
Then,
\begin{align*}
V_1^{r,\pi}(s_1) - V_1^r(s_1)
&=  \bbE_{\pi} \left[ \sum_{h=1}^H
\left\langle  {Q}_h^r(s_h, \cdot), \pi_h(\cdot \mid s_h) - \pi'_h(\cdot \mid s_h) \right\rangle \,\middle|\, s_1\right] + \bbE_{\pi} \left[ \sum_{h=1}^H
r_h(s_h, a_h) + P_h{V}_{h+1}^r(s_h, a_h) - {Q}_h^r(s_h, a_h) 
\,\middle|\, s_1\right],
\end{align*}
where $V_1^{r,\pi} = \bbE_{\pi}[\sum_{j=1}^H r_h(s_h,a_h)| s_1]$.
\end{lemma}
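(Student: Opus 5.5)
We prove the identity by telescoping over steps $h=1,\ldots,H$ along trajectories generated by $\pi$ under $P$. For every $h$, define the one-step gap
\[
\Delta_h(s) = V_h^{r,\pi}(s) - V_h^r(s),
\]
with the convention $V_{H+1}^{r,\pi}\equiv V_{H+1}^r\equiv 0$; this is the convention used throughout the paper, where $V_{k,H+1}^\ell$ is initialized to zero. The goal is a one-step recursion expressing $\Delta_h(s)$ in terms of $\mathbb{E}[\Delta_{h+1}(s_{h+1})]$ plus the two integrands appearing in the statement. We then unroll the recursion from $h=1$.

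For the one-step recursion, fix $h$ and $s$. By the Bellman equation for $\pi$,
\[
V_h^{r,\pi}(s)=\langle Q_h^{r,\pi}(s,\cdot),\pi_h(\cdot|s)\rangle, \qquad Q_h^{r,\pi}(s,a)=r_h(s,a)+P_hV_{h+1}^{r,\pi}(s,a).
\]
By the definition of $V_h^r$ in the statement, $V_h^r(s)=\langle Q_h^r(s,\cdot),\pi'_h(\cdot|s)\rangle$. Adding and subtracting $\langle Q_h^r(s,\cdot),\pi_h(\cdot|s)\rangle$ gives
\[
\Delta_h(s)=\langle Q_h^r(s,\cdot),\pi_h(\cdot|s)-\pi'_h(\cdot|s)\rangle+\langle Q_h^{r,\pi}(s,\cdot)-Q_h^r(s,\cdot),\pi_h(\cdot|s)\rangle .
\]
In the second term we write
\[
Q_h^{r,\pi}-Q_h^r=\big(r_h+P_hV_{h+1}^r-Q_h^r\big)+P_h\big(V_{h+1}^{r,\pi}-V_{h+1}^r\big).
\]
Taking expectation over $a\sim\pi_h(\cdot|s)$ and $s'\sim P_h(\cdot|s,a)$ then yields
\[
\Delta_h(s)=\langle Q_h^r(s,\cdot),\pi_h-\pi'_h\rangle+\mathbb{E}_{a\sim\pi_h}\big[r_h(s,a)+P_hV_{h+1}^r(s,a)-Q_h^r(s,a)\big]+\mathbb{E}\big[\Delta_{h+1}(s_{h+1})\mid s_h=s\big].
\]

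We unroll this from $h=1$ to $H$, taking conditional expectations along the trajectory induced by $\pi$ and using the tower property. The terminal term is $\Delta_{H+1}=0$, so summing over $h$ gives exactly the claimed expression with $\mathbb{E}_\pi[\cdot\mid s_1]$. This can be done by a short induction on $H-h$, proving the analogous statement with $s_1$ replaced by $s_h$ for every $h$.

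There is no real obstacle, since everything is linear and finite. The only care needed is bookkeeping of the terminal convention $V^r_{H+1}=0$, which must hold for $V^r_{H+1}$ itself and not only for $V^{r,\pi}_{H+1}$. One should also note that $Q_h^r$ is arbitrary, so no clipping or optimism is used; the identity holds exactly for any $Q^r$ and any $\pi'$.
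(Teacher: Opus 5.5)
Your proof is correct and is essentially the standard argument behind Lemma 1 of \cite{shani2020optimistic}, which the paper cites without reproducing: add and subtract $\langle Q_h^r(s,\cdot),\pi_h(\cdot|s)\rangle$, split $Q_h^{r,\pi}-Q_h^r$ via the Bellman equation, and unroll the recursion along $\pi$-trajectories. You are also right that the identity relies on the terminal convention $V_{H+1}^r\equiv 0$ for the arbitrary estimate itself, which the statement leaves implicit.
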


\begin{lemma}[Lemma 1 of \cite{wei2020online}]\label{lem:pushback}
    Let $\Delta,\ \interior(\Delta)$ be the probability simplex and its interior, respectively, and let $f:\calC \rightarrow \bbR$ be a convex function. Fix $\alpha >0,\ y\in \interior(\Delta)$. Suppose $x^* \in \argmax_{x\in \Delta} f(x) - (1/\alpha) D(x||y)$ and $x^*\in \interior(\Delta)$, then, for any $z\in \Delta$,
    \[
        f(x^*) - \frac{1}{\alpha} D(x^* ||y) \geq f(z) - \frac{1}{\alpha} D(z||y) + \frac{1}{\alpha} D(z||x^*).
    \]
\end{lemma}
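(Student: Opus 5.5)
The plan is to combine the first-order optimality condition at the interior maximizer $x^*$ with the three-point identity for the KL divergence. One caveat on the hypothesis. The argument needs $f(z)\le f(x^*)+\langle \nabla f(x^*), z-x^*\rangle$, which holds when $f$ is concave (or linear, as in every use in this paper, where $f(x)=\langle Q_{k,h}^r(s,\cdot)+Y_kQ_{k,h}^g(s,\cdot),x\rangle$). I will therefore read the hypothesis on $f$ as concavity, which is the natural condition for a maximization. If $f$ is not differentiable, a supergradient at $x^*$ can replace $\nabla f(x^*)$ throughout.

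\textbf{Step 1 (optimality).} Since $x^*\in\interior(\Delta)$, the only active constraint is $\sum_a x_a=1$. The KKT conditions therefore give a scalar $c$ such that
\begin{equation*}
\nabla f(x^*) - \frac{1}{\alpha}\left(\log x^* - \log y + \bm{1}\right) = c\,\bm{1},
\end{equation*}
where $\log$ acts componentwise. For any $z\in\Delta$ we have $\langle \bm 1, z-x^*\rangle=0$, so
\begin{equation*}
\langle \nabla f(x^*), z-x^*\rangle = \frac{1}{\alpha}\langle \log x^*-\log y, z-x^*\rangle.
\end{equation*}

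\textbf{Step 2 (three-point identity).} Expanding the definitions gives, for $z\in\Delta$ and $x^*,y\in\interior(\Delta)$,
\begin{equation*}
D(z\|y) - D(z\|x^*) - D(x^*\|y) = \sum_a z_a\log\frac{x^*_a}{y_a} - \sum_a x^*_a\log\frac{x^*_a}{y_a} = \langle \log x^*-\log y, z-x^*\rangle .
\end{equation*}
The convention $0\log 0=0$ handles coordinates with $z_a=0$.

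\textbf{Step 3 (combine).} By concavity, $f(z)\le f(x^*)+\langle\nabla f(x^*),z-x^*\rangle$. Substituting Step 1 and then Step 2 gives
\begin{equation*}
f(z)\le f(x^*) + \frac{1}{\alpha}\big(D(z\|y)-D(z\|x^*)-D(x^*\|y)\big).
\end{equation*}
Rearranging yields exactly the claimed inequality.

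The only delicate point is Step 1: justifying a Lagrange condition with no multipliers for the nonnegativity constraints. This is exactly where the assumption $x^*\in\interior(\Delta)$ is used; it is also what keeps $\log x^*$ finite in Step 2.

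In the paper's application, interiority is automatic. There $x^*=\pi_h^{k+1}(\cdot|s)\propto \widetilde\pi_h^k(\cdot|s)\exp(\cdot)$, and $\widetilde\pi_h^k(a|s)\ge \theta/|\calA|>0$ by the perturbation step.
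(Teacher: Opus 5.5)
Your proof is correct and is the standard argument behind this lemma: first-order optimality at the relatively interior maximizer, combined with the three-point identity for KL. The paper gives no proof of its own and simply imports the result from \cite{wei2020online}.

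Your reinterpretation of the hypothesis is necessary, not merely natural. As transcribed (a maximization with convex $f$), the statement is false: $f=\frac{1}{\alpha}D(\cdot\|y)$ makes every point of $\Delta$ a maximizer, while $D(z\|x^*)>0$ for $z\neq x^*$. This is harmless for the paper, because every application uses a linear $f$.
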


\begin{lemma}[Lemma 33 of \cite{kitamura2025provably}]\label{lem:softmax lipschitz}
Let $Q_1, Q_2:\calA \rightarrow \bbR$ be two functions. For $\alpha > 0$, let $\pi_1 \propto \exp(\alpha Q_1),\ \pi_2 \propto \exp(\alpha Q_2)$. Then we have
\[
    \|\pi_1 - \pi_2\|_1 \leq 8\alpha\|Q_1 - Q_2\|_\infty.
\]
\end{lemma}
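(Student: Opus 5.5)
The plan is to interpolate between the two score functions and integrate the derivative of the softmax along the path. Set $\Delta = Q_1 - Q_2$ and, for $t\in[0,1]$, let $Q_t = Q_2 + t\Delta$ and $\pi_t(a) = \exp(\alpha Q_t(a))/Z_t$ with $Z_t = \sum_{a'}\exp(\alpha Q_t(a'))$. Then $\pi_0 = \pi_2$ and $\pi_1$ is the first distribution in the statement. Since $\calA$ is finite, each $\pi_t(a)$ is a smooth function of $t$. The fundamental theorem of calculus, followed by the triangle inequality for the integral, gives
\[
\|\pi_1 - \pi_2\|_1 \le \int_0^1 \sum_{a\in\calA}\left|\frac{d}{dt}\pi_t(a)\right| dt .
\]

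The key step is to compute this derivative. Differentiating the log-partition function gives $\frac{d}{dt}\log Z_t = \alpha\langle \pi_t,\Delta\rangle$. Hence
\[
\frac{d}{dt}\pi_t(a) = \alpha\,\pi_t(a)\bigl(\Delta(a) - \langle \pi_t, \Delta\rangle\bigr).
\]
Summing absolute values over $a$ gives
\[
\sum_a \left|\frac{d}{dt}\pi_t(a)\right| \le \alpha \sum_a \pi_t(a)\,|\Delta(a) - \langle\pi_t,\Delta\rangle| .
\]
Each term satisfies $|\Delta(a) - \langle\pi_t,\Delta\rangle| \le 2\|\Delta\|_\infty$, because $|\langle\pi_t,\Delta\rangle|\le\|\Delta\|_\infty$. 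Since the weights $\pi_t(a)$ sum to one, the whole sum is at most $2\alpha\|\Delta\|_\infty$.

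Integrating over $t\in[0,1]$ yields $\|\pi_1-\pi_2\|_1 \le 2\alpha\|Q_1-Q_2\|_\infty$, which is stronger than the claimed constant $8$. The same computation goes through verbatim if both distributions carry a common positive base measure, i.e.\ $\pi_i \propto \widetilde\pi\exp(\alpha Q_i)$, with $\pi_t \propto \widetilde\pi\exp(\alpha Q_t)$. This is the form in which the bound is invoked for the perturbed OMD step in the proof of \Cref{lem:omd}, where the base measure is $\widetilde\pi_h^k(\cdot|s)$.

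I do not expect a genuine obstacle here. The only point requiring care is the derivative of the normalized exponential. If one prefers to avoid calculus, a fallback is to apply Pinsker's inequality to $D(\pi_1\|\pi_2)$ and bound that KL divergence by $\alpha^2\|\Delta\|_\infty^2/2$ via Hoeffding's lemma on the log-partition difference; this also gives a constant well below $8$.
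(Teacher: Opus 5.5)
Your proof is correct and complete. The derivative $\frac{d}{dt}\pi_t(a)=\alpha\,\pi_t(a)\bigl(\Delta(a)-\langle\pi_t,\Delta\rangle\bigr)$ is right, and integrating the bound $2\alpha\|\Delta\|_\infty$ over $t\in[0,1]$ proves the lemma with constant $2$ instead of $8$.

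The paper gives no proof of this statement; it imports it verbatim from Lemma 33 of \cite{kitamura2025provably}. Your route therefore differs only in being a self-contained derivation via the softmax Jacobian. It buys two things: a sharper constant, and an explicit treatment of the form in which the lemma is actually used.

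\textbf{Constant.} Bound the mean absolute deviation $\sum_a\pi_t(a)|\Delta(a)-\langle\pi_t,\Delta\rangle|$ by the standard deviation, hence by half the range of $\Delta$. This already gives constant $1$. Your Pinsker fallback also gives $1$, since $D(\pi_1\|\pi_2)=\log\mathbb{E}_{\pi_1}\bigl[e^{-\alpha(\Delta-\mathbb{E}_{\pi_1}\Delta)}\bigr]\le\alpha^2\|\Delta\|_\infty^2/2$ by Hoeffding's lemma applied under $\pi_1$. Constant $1$ is what the proof of \Cref{lem:omd} silently uses, matching item 1 of \Cref{lem:policy inequality}, when it bounds $\sum_k\|Q_{k,h}^r+Y_kQ_{k,h}^g\|_\infty\|\pi_h^{k+1}-\widetilde\pi_h^k\|_1$ by $\sum_k\alpha H^2(1+Y_k)^2$. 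With the cited constant $8$ that line would carry an extra factor $8$, which is harmless for the $\widetilde{O}$ bound.

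\textbf{Form.} There the lemma is applied to $\pi_h^{k+1}\propto\widetilde\pi_h^k\exp(\alpha(Q_{k,h}^r+Y_kQ_{k,h}^g))$ versus $\widetilde\pi_h^k$ itself, which is not literally the stated form. Your base-measure remark covers this case. Equivalently, absorb $\alpha^{-1}\log\widetilde\pi_h^k$ into both score functions. This is legitimate because the mixing step guarantees $\widetilde\pi_h^k\ge\theta/|\calA|>0$, and it leaves $Q_1-Q_2$ unchanged.
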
